\documentclass[10pt]{article}

\usepackage[margin=0.9in]{geometry}

\usepackage{microtype}
\usepackage{graphicx}
\usepackage{subcaption}
\usepackage{booktabs}
\usepackage{multirow}
\usepackage{makecell}
\usepackage{enumitem}
\usepackage{dsfont}
\usepackage{amsmath,amssymb,mathtools,amsthm,mathrsfs}
\usepackage{hyperref}
\usepackage[capitalize,noabbrev]{cleveref}

\usepackage{xcolor}
\usepackage{textcomp}
\usepackage{pifont} 

\usepackage[numbers,sort&compress]{natbib}

\usepackage{algorithm}
\usepackage{algorithmic}

\theoremstyle{plain}
\newtheorem{theorem}{Theorem}[section]
\newtheorem{result}{Result}[section]
\newtheorem{proposition}[theorem]{Proposition}
\newtheorem{lemma}[theorem]{Lemma}

\theoremstyle{definition}
\newtheorem{definition}[theorem]{Definition}

\theoremstyle{remark}

\usepackage[textsize=tiny]{todonotes}

\DeclareMathOperator{\He}{He}
\DeclareMathOperator{\id}{\mathbb{I}}
\DeclareMathOperator{\prox}{prox}

\DeclareMathOperator*{\arginf}{arginf}

\newcommand{\ide}{\mathbb{I}}
\newcommand{\bw}{\mathbf{w}}
\newcommand{\bx}{\mathbf{x}}

\newcommand{\norm}[1]{\left\lVert #1 \right\rVert}
\newcommand{\latents}{\text{latents}}
\newcommand{\bu}{\mathbf{u}}
\newcommand{\bv}{\mathbf{v}}
\newcommand{\be}{\mathbf{e}}
\newcommand{\bs}{\mathbf{s}}

\usepackage{pifont}

\newcommand{\SUCCESS}[1][]{ 
  \textcolor{green!55!black}{\bfseries \ding{51}\textsubscript{#1}}\ignorespaces
}

\newcommand{\FAIL}[1][]{ 
  \textcolor{red}{\bfseries \ding{55}\textsubscript{#1}}\ignorespaces
}

\DeclareMathOperator*{\argmax}{arg\,max}

\newcommand{\Tr}{\operatorname{Tr}}
\newcommand{\E}{\mathbb{E}}
\newcommand{\R}{\mathbb{R}}

\newcommand\blfootnote[1]{
  \begingroup
  \renewcommand\thefootnote{}\footnotetext{#1}
  \endgroup
}
\title{A solvable high-dimensional model where nonlinear autoencoders learn structure invisible to PCA while test loss misaligns with generalization}

\author{
Vicente Conde Mendes\thanks{These authors contributed equally to this work.}, Lorenzo Bardone\footnotemark[1], Cédric Koller\footnotemark[1], Jorge Medina Moreira\footnotemark[1],\\
Vittorio Erba, Emanuele Troiani, Lenka Zdeborová\\
\small Statistical Physics of Computation Laboratory,\\
\small École polytechnique fédérale de Lausanne (EPFL) CH-1015 Lausanne\\
}
\date{}

\begin{document}
\maketitle

\begin{abstract}
Many real-world datasets contain hidden structure that cannot be detected by simple linear correlations between input features. For example, latent factors may influence the data in a coordinated way, even though their effect is invisible to covariance-based methods such as PCA. In practice, nonlinear neural networks often succeed in extracting such hidden structure in unsupervised and self-supervised learning. However, constructing a minimal high-dimensional model where this advantage can be rigorously analyzed has remained an open theoretical challenge.
\\
We introduce a tractable high-dimensional spiked model with two latent factors: one visible to covariance, and one statistically dependent yet uncorrelated, appearing only in higher-order moments. PCA and linear autoencoders fail to recover the latter, while a minimal nonlinear autoencoder provably extracts both. We analyze both the population risk and empirical risk minimization.
\\
Our model also provides a tractable example where self-supervised test loss is poorly aligned with representation quality: nonlinear autoencoders recover latent structure that linear methods miss, even though their reconstruction loss is higher.

\blfootnote{Code available at: \url{https://github.com/SPOC-group/advantage_nonlinearity}}

\end{abstract}
\section{Introduction}
\label{sec:intro}

Understanding how high-dimensional unsupervised learning systems extract latent structure from data has become a central theme in theoretical machine learning. Over the past decades, the spiked covariance model and its variants \cite{baik2005phase,johnstone2009consistency,donoho2018optimal,deshpande2014information,lelarge2017fundamental} have served as a fertile theoretical testbed for understanding algorithmic and information-theoretic limits of unsupervised learning in high dimensions. In this class of models, nonlinear methods provide little or no improvement over linear correlation-based ones \cite{krzakala2013spectral,deshpande2015finding,bandeira2016low}.

Modern unsupervised and self-supervised neural networks, however, routinely extract latent structure that is invisible to linear correlation, see e.g. \cite{vincent2008extracting,song2019generative,he2022masked}. At the same time, our theoretical understanding of how nonlinear architectures exploit higher-order statistical dependence remains limited. Nonlinear autoencoders are already known to be more expressive than linear ones, which can by itself yield separations between the two. Here, instead, we ask when training actually uses nonlinearity to recover latent structure invisible to linear networks, how this depends on the activation and the data distribution, and how accurately that structure is recovered. Establishing a minimal high-dimensional data model where these questions can be answered quantitatively has remained elusive. A major reason is the lack of solvable data models on which correlation-based methods such as PCA fail, while simple shallow—and thus analyzable—nonlinear neural networks provably succeed.

In this work, we fill this gap by introducing a structured high-dimensional data model where we quantitatively characterize when nonlinear representation learners recover structure invisible to linear baselines, while learning remains as tractable to analyze as in the spiked covariance model.

\paragraph{The spiked cumulant model.}
We are inspired by recent work of \cite{bardone2024sliding,szekely2024learning}, who introduced the spiked cumulant model in which a dataset of $n$ samples in dimension $d$ is generated as
\begin{equation}\label{eq:datamodel}
\mathbf{x}_\mu =  \lambda_\mu \frac{\mathbf{u^\star}}{\sqrt{d}} + S \left( \nu_\mu  \frac{ \mathbf{v}^\star}{\sqrt{d}} + \mathbf{z}_\mu \right)
, \quad \mu = 1, \dots, n \, ,
\end{equation}
where $\mathbf{z}_\mu, \mathbf{u^\star}, \mathbf{v^\star} \sim \mathcal{N}(0, \id_d)$, and
\begin{equation}\label{eq:whitening}
    S = \id - \frac{\mathbb{E}[\nu^2]}{1+\mathbb{E}[\nu^2]+\sqrt{1+\mathbb{E}[\nu^2]}} \frac{\mathbf{v^\star}\mathbf{v^\star}^\top}{d} \, ,
\end{equation}
is a whitening matrix. 
The latent variables $\lambda_\mu$ and $\nu_\mu$ are drawn from a joint probability distribution $P_{\rm latents}$ independently for each sample, and we denote by $\mathbb{E}$ the average over this distribution. In the whole work, we consider centered latent variables $\mathbb{E}[\lambda] = \mathbb{E}[\nu] = 0$. 
Authors of \cite{bardone2024sliding,szekely2024learning} studied learning in the case of independent latent variables, where extracting the spike~$\mathbf{v}^\star$ from the higher-order correlations
is computationally hard and requires large sample complexity. They also considered the case where $\lambda_\mu$ and $\nu_\mu$ are linearly correlated, in which case even a simple PCA efficiently recovers a correlation with both spikes $\mathbf{u}^\star$ and~$\mathbf{v}^\star$. This can be seen from the population covariance of the data model \cref{eq:datamodel} and \cref{eq:whitening}, which is given by
\begin{equation}
\label{eq:cov-intro}
\text{Cov} (\mathbf{x} | \mathbf{u^\star}, \mathbf{v^\star}) = \id + \mathbb{E}[\lambda^2] \frac{\mathbf{u^\star}\mathbf{u^\star}^\top}{d} +\sqrt{\mathbb{E}[\lambda^2]\mathbb{E}[\nu^2]} \mathbb{E} \left[ \lambda \nu \right] \frac{\mathbf{u^\star}\mathbf{v^\star}^\top S + S \mathbf{v^\star}\mathbf{u^\star}^\top}{d} \, .
\end{equation}

Motivated by realistic data, where strong statistical dependencies often exist but are rarely visible through linear correlations, we will focus on the regime where $\lambda_\mu$ and $\nu_\mu$ are dependent but uncorrelated, $\mathbb{E}[\lambda \nu] = 0$. In that case, the covariance \cref{eq:cov-intro} includes no information on the spike~$\mathbf{v^\star}$, and consequently that spike is invisible to PCA. Unlike in the spiked covariance model, here the hidden spike is invisible to all second-order statistics but, as we will see, remains recoverable through higher-order dependence.

\begin{definition}[Correlation exponent]\label{def:corr_exp} We define the 
\textit{correlation exponent} $k^\star$ of a distribution $P_{\rm latents}(\lambda,\nu)$ as the smallest $k\in \mathbb N$ such that $\E[\lambda^k\nu]\ne0$. 
\end{definition}

In high dimension, the spike~$\mathbf{v^\star}$ is invisible to PCA (and thus also to linear autoencoders) whenever the correlation exponent $k^\star \geq 2$. We show in Section~\ref{sec:BO} that specialized algorithms such as approximate message passing can efficiently recover a correlation with both the spikes $\mathbf{u^\star}$ and $\mathbf{v^\star}$ whenever the correlation exponent is finite, $k^\star < \infty$ and the number of samples is larger than  $\alpha_c d$ where $\alpha_c = \Theta(1)$ and $d$ is the dimension of the data. We then show in Sections~\ref{sec:pop} and \ref{sec:ERM} that the simplest non-linear neural networks, i.e., autoencoders with a single hidden unit and a suitable non-linearity, are also able to recover a correlation with both the spikes whenever $k^\star < \infty$. 
The first key result of our work can be summarized as follows: 
\begin{center}
{\bf For the spiked cumulant model with $2 \le k^\star < \infty$, nonlinear autoencoders learn structure invisible to PCA.} 
\end{center}

\paragraph{The analyzed autoencoder.}
Building on the high-dimensional analysis of learning with simple autoencoders on data generated by spiked-covariance models \cite{refinetti2022dynamics,cui2023high}, we study learning with the minimal nonlinear autoencoder architecture
\begin{equation}
\hat{\mathbf{x}}_\mu = \frac{\mathbf{w}}{\sqrt{d}}\, \sigma\!\left(\frac{ \mathbf{w}^\top \mathbf{x}_\mu}{\sqrt{d}} \right),
\label{eq:minimalAE}
\end{equation}
where $\mathbf{w} \in \mathbb{R}^d$ is the learnable weight vector, $\sigma(\cdot)$ is a fixed nonlinear activation function, and tied encoder-decoder weights are used. 

The model is trained by minimizing either the population or the empirical reconstruction loss. 
At the population level, the objective reads
\begin{equation}
\mathcal{L}(\mathbf{w})
= 
\mathbb{E}_{\mathbf{x}}\!
\left\|
\mathbf{x} - \frac{\mathbf{w}}{\sqrt{d}}\,\sigma\!\left(\frac{\mathbf{w}^\top \mathbf{x}}{\sqrt{d}}\right)
\right\|^2
,
\label{eq:populationrisk}
\end{equation}
where the expectation is taken over the data-generating distribution. We analyze the population risk in the high-dimensional limit $d \to \infty$ using recent advances for the supervised learning of the single index model using stochastic gradient descent \cite{arous2021online,veiga2022phase,ben_arous2022}. This allows us to characterize how the ability to learn depends on the dependence structure between the latent variables $(\lambda,\nu)$ and on the non-linearity. 

In practice, learning proceeds by minimizing the empirical risk
\begin{equation}
\mathcal{L}_n(\mathbf{w})
= 
\frac{1}{n} \sum_{\mu=1}^n 
\left\|
\mathbf{x}_\mu - \frac{\mathbf{w}}{\sqrt{d}}\,\sigma\!\left(\frac{\mathbf{w}^\top \mathbf{x}_\mu}{\sqrt{d}}\right)
\right\|^2,
\label{eq:empiricalrisk}
\end{equation}
typically using gradient descent. 
We analyze the minimum of the empirical risk in the high-dimensional limit $n,d \to \infty$ at sample complexity scaling linearly with the dimension $\alpha = n/d$ using the advances from \cite{cui2023high}. 

\paragraph{Validation in self-supervised learning.}
A fundamental difficulty in unsupervised and self-supervised learning is the absence of ground-truth labels for evaluating representation quality. 
In modern self-supervised learning, 
representation quality 
is rarely assessed through reconstruction or test loss; instead, it is evaluated almost exclusively via downstream tasks or linear probing \cite{grill2020bootstrap, he2022masked, devlin2019bert, balestriero2024learning}. This widespread practice reflects an implicit recognition that training or test loss on the self-supervised objective may be poorly aligned with representation quality. 

At the same time, training and model selection are still commonly guided by the validation loss of the self-supervised objective, through early stopping and the tuning of regularization and architectural hyperparameters such as weight decay, width–depth trade-offs, or attention head count. This implicitly assumes that lower validation loss correlates with better learned representations. 
Recent work has shown that decreasing test loss does not necessarily indicate improved generalization: models can enter a regime where test loss improves while representations become increasingly biased toward the training data, making early stopping unreliable \cite{garnier2025early}.
More broadly, the conditions under which loss-based evaluation and training control are reliable—or fundamentally misleading—in self-supervised learning remain poorly understood.

To understand the interplay between test loss and representation quality, it is useful to have a clear and tractable example of this phenomenon. In Section~\ref{sec:validation}, we show that the spiked cumulant model with correlation exponent $2 \le k^\star < \infty$ provides such a setting. Linear autoencoders achieve lower reconstruction error on held-out data than nonlinear autoencoders, yet fail to recover the hidden spike. This is expected, since linear autoencoders are equivalent to PCA and are optimal for squared reconstruction error by the Eckart--Young theorem \cite{eckart1936approximation}. In contrast, nonlinear autoencoders recover both spikes despite exhibiting higher test loss. The second key result of our work can thus be summarized as follows:
\begin{center}
\textbf{For the spiked cumulant model with $2 \le k^\star < \infty$, lower test loss misaligns with better representation quality.}
\end{center}

More broadly, this result highlights the need for evaluation criteria in unsupervised learning that probe latent structure rather than solely measuring reconstruction performance. For the task considered in our paper, we introduce a simple downstream task that allows principled evaluation without direct access to the ground-truth spikes. For a small subset of 
samples, we assume access to labels depending on the latent representation, for instance
$y_\mu = \mathrm{sign}\!\left( \mathbf{x}_\mu^\top \mathbf{v}^\star \right)$.
Given a learned representation $\hat{\mathbf{w}}$, prediction on a new sample is performed as
$\hat y_{\mathrm{new}} = \mathrm{sign}\!\left( \mathbf{x}_{\mathrm{new}}^\top \hat{\mathbf{w}} \right)$.
In practice, one would fine-tune a model on the labeled data to construct such an estimator. Performance on this downstream task reveals the advantage of nonlinear representation learning in regimes where reconstruction-based evaluation fails.

\section{Linear autoencoder implements PCA\label{sec:PCA}}
It is well known that for $\sigma=\mathrm{id}$, minimizing the squared reconstruction loss yields PCA \cite{bourlard1988auto}, equivalently the best rank-one approximation in Frobenius norm \cite{eckart1936approximation}. Since this case provides a natural baseline for our analysis, we restate the result below in a form tailored to our setting (proof in \cref{app:population_linear}).
\begin{proposition}[Linear autoencoder yields PCA]\label{prop:linear_pca}
Consider \cref{eq:populationrisk} with $\sigma=\mathrm{id}$ and any distribution over $\bx$ (even empirical). Then any global minimizer $\hat{\bw}$
satisfies $\|\hat{\bw}\|^2=d$ and $\hat{\bw}/\sqrt d$ is a leading eigenvector of the covariance $\mathbb E_\bx [\bx\bx^\top]$.
\end{proposition}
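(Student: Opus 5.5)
Write $\Sigma=\mathbb E_\bx[\bx\bx^\top]$ and $\bu=\bw/\sqrt d$, so that the reconstruction with $\sigma=\mathrm{id}$ is $\bu\bu^\top\bx$. The plan is to expand the quadratic loss in closed form, reduce it to a one-dimensional problem in $\|\bu\|^2$ along each direction, and then optimize over the direction by the Rayleigh quotient.

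\emph{Step 1 (expansion).} Expanding the square and taking expectations gives
\begin{equation*}
\mathcal L(\bw)=\Tr\Sigma-2\,\bu^\top\Sigma\bu+\|\bu\|^2\,\bu^\top\Sigma\bu
=\Tr\Sigma-\bu^\top\Sigma\bu\,\bigl(2-\|\bu\|^2\bigr).
\end{equation*}
This uses only second moments, so it holds for any distribution with finite second moments, including the empirical one.

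\emph{Step 2 (radial optimization).} Write $\bu=r\mathbf e$ with $\|\mathbf e\|=1$ and $r\ge0$. Then
\begin{equation*}
\mathcal L=\Tr\Sigma-q\,r^2(2-r^2),\qquad q=\mathbf e^\top\Sigma\mathbf e\ge0 .
\end{equation*}
The function $r^2(2-r^2)$ is uniquely maximized at $r^2=1$, where it equals $1$.
\begin{itemize}
\item If $q>0$, the unique minimizing radius is therefore $r^2=1$, giving loss $\Tr\Sigma-q$.
\item If $q=0$, the loss is the constant $\Tr\Sigma$ along that ray.
\end{itemize}

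\emph{Step 3 (direction).} Minimizing $\Tr\Sigma-q$ over unit vectors means maximizing the Rayleigh quotient $q$. Its maximum is $\lambda_{\max}(\Sigma)$, attained exactly on unit leading eigenvectors. The global minimum value is $\Tr\Sigma-\lambda_{\max}$.

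\emph{Step 4 (characterizing all minimizers).} Assume $\Sigma\ne0$, so $\lambda_{\max}>0$. Any $\bu$ with $q r^2(2-r^2)<\lambda_{\max}$ is strictly worse than the minimum. Equality $q\,r^2(2-r^2)=\lambda_{\max}$ forces both $q=\lambda_{\max}$ and $r^2=1$, because each factor is bounded by its maximum and both maxima are positive. Hence every global minimizer satisfies $\|\hat\bw\|^2=d$, and $\hat\bw/\sqrt d$ is a leading eigenvector of $\Sigma$.

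The only real obstacle is this strictness argument in Step 4, together with the degenerate case $\Sigma=0$. In that degenerate case every $\bw$ is a minimizer, so the statement needs the mild assumption $\Sigma\neq0$, which I would state explicitly. Everything else is routine algebra.
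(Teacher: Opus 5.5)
Your proof is correct and follows essentially the same route as the paper. Both expand the loss as $\Tr\Sigma-\frac{2}{d}\bw^\top\Sigma\bw+\frac{\|\bw\|^2}{d^2}\bw^\top\Sigma\bw$, minimize over the radius along each ray (giving $\|\bw\|^2=d$ whenever $\bw^\top\Sigma\bw>0$), and then maximize the Rayleigh quotient. Your Step 4 is a bit more careful than the paper's proof, which only concludes that ``non-degenerate'' minimizers have $\|\bw\|^2=d$. The paper leaves implicit that directions in $\ker\Sigma$ (loss $\Tr\Sigma$) are strictly suboptimal once $\Sigma\neq 0$, and that the case $\Sigma=0$ must be excluded.
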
 
As a consequence, training a linear autoencoder on either the population loss \cref{eq:populationrisk} or the empirical loss \cref{eq:empiricalrisk} yields weights aligned with the leading eigenspace of the corresponding covariance matrix (see \cref{app:population_linear}). For our data model, the population covariance is given by \cref{eq:cov-intro}, whose top eigenvector is $\bu^\star$. When $k^\star \ge 2$, linear autoencoders therefore fail to recover any nontrivial correlation with the hidden direction $\bv^\star$. Note that even in case of empirical covariance, we can quantify that with high probability the weights will not have more than random overlap with $\bv^\star$, see \cref{lemma:empirical}. Finally, since the optimal PCA solution is a symmetric rank-one projector, tying encoder and decoder weights incurs no loss of optimality compared to the untied setting \cite{baldi1989neural}.

\section{Information theoretic baseline\label{sec:BO}}

In this section we study the Bayes-optimal estimator, gaining insight on the intrinsic statistical and computational difficulty of retrieval of both spikes.
We consider the spiked cumulant model \cref{eq:datamodel} in the high-dimensional regime where the number of samples $n$ is linearly proportional to the dimensionality, i.e. $d, n\to \infty$ with fixed ratio $\alpha= n/d$.

\paragraph{Bayes-optimal estimator.}
The Bayes optimal (BO) estimator of the spikes is defined as $\mathbf{\hat w}_{\rm BO}( \{\mathbf{x}_\mu\}_{\mu=1}^n ) \in \mathbb{R}^{d \times 2}$ such that
$\| \mathbf{\hat w}_{\rm BO}( \{\mathbf{x}_\mu\}_{\mu=1}^n ) - [\mathbf{u}^\star, \mathbf{v}^\star] \|_F^2 $ is minimized on average over the observed data, where with square bracket we mean the horizontal stack of the two spikes (seen as column vectors). It is a direct consequence of the Bayes formula that such BO estimator is given by the posterior average, i.e. the mean of the probability distribution
\begin{equation}
\label{eq:posterior}
P_{\rm post}([ \mathbf{u}, \mathbf{v}] | \{\mathbf{x}_\mu\}_{\mu=1}^n ) \propto\\ P_{\rm data}(\{\{\mathbf{x}_\mu\}_{\mu=1}^n | [ \mathbf{u}, \mathbf{v}]) P_{\rm prior}([ \mathbf{u}, \mathbf{v}]) \, ,
\end{equation}
where we denoted by $P_{\rm prior}$ the distribution of the spikes (isotropic Gaussian in our case), and by $P_{\rm data}$ the conditional distribution of observing a certain dataset given the spikes.
The degree of recovery achieved by the BO estimator, measured through its 
cosine similarity
\begin{equation}
    \label{eq:cos_sim}
    \theta_s= \frac{\left|(\mathbf{\hat w}_{\rm BO}^\top)_{s} \mathbf{s} \right|}{ \|(\mathbf{\hat w}_{\rm BO}^\top)_{s}\| \|\mathbf{s}\|}
\end{equation}
with each spike~$\mathbf{s}\in 
\{\mathbf{u}^\star, \mathbf{v}^\star\}$ and $(\mathbf{\hat w}_{\rm BO}^\top)_{s}$ the first/second column of $(\mathbf{\hat w}_{\rm BO}^\top)$, 
can be characterized in the high-dimensional limit as follows.

\begin{result}[Asymptotic characterization of Bayes optimal estimation]
\label{res:bo}
    Consider the spiked cumulant model, defined in Section \ref{sec:intro}, in the high-dimensional limit $n,d \to \infty$ with $\alpha = n/d $. Call $\mathbf{\hat w}_{\rm BO} \in \mathbb{R}^{d \times 2}$ the mean of \cref{eq:posterior}. Define the order parameters
    \begin{equation} \label{eq:BO_order_parameters}
        q^{\rm BO} = \mathbf{\hat w}_{\rm BO}^\top [ \mathbf{u}^\star, \mathbf{v}^\star] / d \in \mathbb{R}^{2\times 2}
        \, .
    \end{equation}
    Then, for $d\to \infty$, the order parameters concentrate onto their mean w.r.t. the data distribution, and their limiting values satisfy the dimension-independent extremization problem 
    \begin{equation} 
         \label{eq:freeentropyBO}
        q^{\rm BO} = 
        {\rm argmax}_{q} \, \phi^{\rm BO}(q)
    \end{equation}    
    We give the precise form of $\phi^{\rm BO}$ 
    in \cref{app:BO_AMP}, \cref{eq:phi_BO}, 
    and it depends only on $\alpha$ and $P_{\rm latents}$.
    Additionally, the cosine similarities concentrate to
    \begin{equation}
        \theta^{\rm BO}_u = \sqrt{q^{\rm BO}_{11}} \quad\text{and}\quad
        \theta^{\rm BO}_v = \sqrt{q^{\rm BO}_{22}}
        \, .
    \end{equation}
\end{result}

\cref{res:bo} is derived in \cref{app:BO_AMP} using the replica method from statistical physics. This is a well-established method used to obtain a dimension-independent characterization of high-dimensional learning problems, which has been applied to the spiked covariance model in \cite{lesieur2017constrained,lesieur2015mmse} before being established rigorously for that model \cite{miolane2017fundamental}.

\paragraph{Characterization of the weak recovery of $\mathbf{v}^\star$.}
\cref{res:bo} performs the information-theoretically optimal estimator for the spikes, but gives no insight into whether such an estimator is computable with efficient algorithms. 
In Appendix~\ref{app:BO_AMP}, we introduce an Approximate-Message-Passing (AMP) algorithm for this problem. AMP is a class of iterative algorithms that are widely believed \cite{gamarnik2022disordered}, and in some settings proven \cite{celentano2020estimation}, to be optimal among large classes of efficient algorithms. Following this conjecture, we consider AMP as a proxy for the optimal efficiently achievable performance, and characterize under which properties of $P_{\rm latents}$ it weakly recovers the spike~$\mathbf{v}^\star$, i.e., obtains $\theta_v >0$ in the high-dimensional limit.

\begin{result}[Weak recovery threshold of AMP] \label{res:bo-weak}
Under the same setting of \cref{res:bo}, assume that $\mathbb{E}[\lambda],\mathbb{E}[\nu] = 0$, $\mathbb{E}[\lambda^2], \mathbb{E}[\nu^2] > 0$ and that the correlation exponent $k^\star$ is finite.
Then, the AMP algorithm defined in \cref{app:BO_AMP} achieves non-zero cosine similarity with both spikes for all $\alpha > \alpha^{\rm AMP}_{\rm weak}$ with 
\begin{equation}
    \alpha^{\rm AMP}_{\rm weak} = 
    \frac{f(\mathbb{E}[\lambda^2], \mathbb{E}[\nu^2], \mathbb{E}[\lambda \nu])}{\mathbb{E}[\lambda^2]^2} 
    \, ,
\end{equation}
where $f = 1$ for $k^\star \geq 2$, and $f < 1$ for $k^\star = 1$ (its precise expression is given in \cref{app:BO_AMP}, \cref{eq:boweakrecovery_app}).  
\end{result}
Most importantly, \cref{res:bo-weak} uncovers which correlation structure in the data enables the recovery of the hidden spike~$\mathbf{v}^\star$, namely a non-null joint moment of the form $\mathbb{E}[\lambda^k \nu]$ for some finite $k$, thus justifying our Definition \ref{def:corr_exp}. We also stress that the weak recovery of both spikes happens at the same value of sample ratio $\alpha$: as soon as $\mathbf{u}^\star$ is weakly recovered, the statistical dependence between the latents allows to weakly recover $\mathbf{v}^\star$.

The value at which weak-recovery happens is also notable: for $k^\star\ge 2$, it is the same as that of the Baik-Ben Arous-Péché transition \cite{baik2005phase} for the leading eigenvalue of the empirical covariance of the data: only for $\alpha > \alpha^{\rm AMP}_{\rm weak}$ the covariance \cref{eq:cov-intro} develops an outlying eigenvalue, with eigenvector correlated with $\mathbf{u}^\star$.
For correlation exponent $k^\star= 1$, the additional information in the covariance leads to an improved weak-recovery threshold. In contrast to our single-view setting with dependent but uncorrelated latents, related work studies computational thresholds in multimodal spiked models with correlated modalities \cite{tabanelli2025computationalthresholdsmultimodallearning}.

\cref{res:bo-weak} is derived in \cref{app:linearstability_AMP} by first arguing that the iterate of AMP can be tracked through a dimension-independent equation for the overlap $q^t_{\rm AMP} = \mathbf{\hat w}_{t}^\top [ \mathbf{u}^\star, \mathbf{v}^\star]$ (the so-called state evolution, see for e.g. \cite{bayati2011dynamics,berthier2020state}), and then studying the linear stability of the uninformed initialization $q^{t=0}_{\rm AMP} = \mathbb{O}_{d \times 2}$, where $\mathbb{O}$ denotes the null matrix.

In \cref{fig:cosine} we compare the numerical solution of \cref{eq:freeentropyBO} with a run of the AMP algorithm for a specific example of $P_{\rm latents}$ with correlation exponent 2, obtaining a good match. This suggests that \cref{res:bo-weak} applies also to the BO estimator, pointing to the absence of statistical-to-computational gaps \cite{gamarnik2022disordered} for this specific setting.

We finally remark that \cref{res:bo-weak} does not explicitly provide negative results in the case of $k^\star = +\infty$. We argue in \cref{app:BO} that if the latent variables are independent, AMP does not weakly recover $\mathbf{v}^\star$ for any $n = \mathcal{O}(d)$, as can be expected based on results of \cite{bardone2024sliding}. The more subtle case of dependent latents with $k^\star = +\infty$ is more elusive, and its analysis is left for future work.

\begin{table*}[!t]
\centering
\renewcommand{\arraystretch}{1.25}

\begin{tabular}{lcccccccccc}
\toprule
 Activation $\sigma$:
 & \textbf{\makecell[c]{Linear\\$z$}} &
 \textbf{\makecell[c]{Quadratic\\$z^2$}} &
 \textbf{ReLU} &
 $\boldsymbol{\tanh}$ &
 \textbf{ELU} &
 \textbf{Swish} &
 \textbf{Sigmoid} &
 \textbf{GELU} \\
\midrule
Recovery of $\bu^\star$
& \SUCCESS[0] & \FAIL[2] & \SUCCESS[1] & \SUCCESS[2] & \SUCCESS[1] & \SUCCESS[1] & \SUCCESS[1] & \SUCCESS[1] \\

Recovery of $\bv^\star$ if $k^\star=2$
& \FAIL[0] & \FAIL[2] & \SUCCESS[1] & \FAIL[1] & \SUCCESS[1] & \SUCCESS[1] & \SUCCESS[1] & \SUCCESS[1] \\

Recovery of $\bv^\star$ if $k^\star=3$
& \FAIL[0] & \FAIL[2] & \FAIL[1] & \SUCCESS[2] & \SUCCESS[2] & \SUCCESS[2] & \SUCCESS[2] & \SUCCESS[2] \\
\bottomrule
\end{tabular}

\caption{
\textbf{Classification of activation functions by success or failure in weakly recovering the spikes $\bu^\star$ and $\bv^\star$ in logarithmic time.} Symbols \SUCCESS[0]\ and \FAIL[0]\ follow from \cref{prop:linear_pca}, while all other symbols correspond to the predictions of \cref{thm:population}. All cases except \FAIL[1]\ hold for any latent law $P_\latents$ with the specified $k^\star$; \FAIL[1]\ indicates the existence of counterexamples, i.e. latent distributions satisfying the assumptions of the third item of \cref{thm:population} which implies that $\bv^\star$ is not weakly recovered (see \cref{app:different_radius} for more details).}
\label{tbl:classification_activation_function}
\end{table*}

\section{Autoencoder: population gradient flow}
\label{sec:pop}

In this section, we study the performance of the nonlinear autoencoder through an analysis of the gradient flow dynamics of the population loss \cref{eq:populationrisk}, subject to a spherical constraint on 
$\bw$. The key aim is to  
describe which properties of the activation function $\sigma$ and which dependence structure in $P_{\rm latents}$ enable efficient learning. 
To characterize the properties of the learned weights, and in particular whether the weights develop a non-null projection along the spike~$\mathbf{v^\star}$, we 
follow the approach from \cite{bardone2024sliding,ricci2025feature} and rewrite \cref{eq:populationrisk} as \begin{equation}\label{eq:likelihood}
    \mathcal L(\mathbf{w})=\underset{\bx\sim \mathcal N(0,\id)}{\mathbb E}\left[L(\bx)
\left\|
\mathbf{x} - \frac{\mathbf{w}}{\sqrt{d}}\,\sigma\!\left(\frac{\mathbf{w}^\top \mathbf{x}}{\sqrt{d}}\right)
\right\|^2
\right]
\end{equation} in which we introduced the likelihood ratio $L(\bx)=\frac{\text{d} \mathbb P_x}{\text{d} \mathcal N(0,\text{id})}(\bx)$, where d$\mathbb P_x$ is the probability density of a variable that is distributed according the \emph{spiked cumulant model} defined in \cref{sec:intro}, with latent variables distributed according to $P_{\text{latents}}(\lambda,\nu)$. Then, we decompose in Hermite basis (see \ref{app:hermite} for details) both $L$ and an effective nonlinearity that results from \cref{eq:likelihood}, namely the function $z\to\sigma^2(z)-2z\sigma(z)$.  This leads to the following lemma.
\begin{lemma}[Hermite expansion of the population loss]\label{lemma:expansion}
Let $\mathcal L(\bw)$ be the population risk 
\cref{eq:likelihood}. Assume the spherical constraint $\|\bw\|=\sqrt d$, and let $\bu^\star,\bv^\star\in\R^d$ satisfy
$\|\bu^\star\|=\|\bv^\star\|=\sqrt d$ and $\bu^\star\perp\bv^\star$.
Consider the likelihood ratio, which depends only on two projections linked by the distribution of the latent variables $P_\text{latents}$
\begin{equation}
L(\bx)=\tilde L\!\left(\frac{\bx^\top\bu^\star}{\sqrt d},\frac{\bx^\top\bv^\star}{\sqrt d}\right),
\qquad \tilde L\in \mathscr {L}^2(\mathcal N(0,\id_2)),
\end{equation}
and assume that $\sigma(\cdot)^2\in \mathscr {L}^2(\mathcal N(0,1))$ and $(z\mapsto z\sigma(z))\in \mathscr {L}^2(\mathcal N(0,1))$, where $\mathscr L^2(\gamma)$ denotes the space of square integrable functions with scalar product weighted by the measure $\gamma$.
Define overlaps
\begin{equation}
m_u:=\frac{\bw^\top\bu^\star}{d},\qquad m_v:=\frac{\bw^\top\bv^\star}{d},
\end{equation}
and Hermite coefficients (probabilists' Hermite polynomials  are denoted with $\He_k$)
\begin{align*}
c^L_{i,j}
&:=\underset{(Z_1,Z_2)\sim\mathcal N(0,\id_2)}{\mathbb E}
\!\big[\He_i(Z_1)\He_j(Z_2)\tilde L(Z_1,Z_2)\big],\\
c_k^{\sigma^2}
&:=\underset{Z\sim\mathcal N(0,1)}{\mathbb E}
\big[\sigma(Z)^2\He_k(Z)\big],\\
c_k^{z\sigma}
&:=\underset{Z\sim\mathcal N(0,1)}{\mathbb E}
\!\big[Z\sigma(Z)\He_k(Z)\big].
\end{align*}
Then $\mathcal L(\bw)$ depends on $\bw$ only through $(m_u,m_v)$ and admits the expansion
\begin{equation}\label{eq:loss_series_maintext}
\mathcal L(\bw)\!= \!C \!+\! \sum_{k\ge0}\sum_{i=0}^k
\frac{c^L_{i,k-i}}{i!(k-i)!} \Big(c_k^{\sigma^2}\!\!-\!2c_k^{z\sigma}\Big)\,m_u^{\,i}m_v^{\,k-i},
\end{equation}
where the constant is
\begin{equation}
C=\underset{{\bx\sim\mathcal N(0,\id_d)}}\E\!\big[L(\bx)\|\bx\|^2\big]=\E_{\bx}\|\bx\|^2.
\end{equation}
Moreover, for any $r<1$ the series in \cref{eq:loss_series_maintext} converges absolutely and uniformly on
$\{(m_u,m_v): |m_u|,|m_v|\le r\}$.
\end{lemma}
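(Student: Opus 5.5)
Expand the squared norm inside \cref{eq:likelihood}. Using $\|\bw\|^2=d$, we have
\[
\Big\|\bx-\tfrac{\bw}{\sqrt d}\sigma(h)\Big\|^2=\|\bx\|^2-2h\sigma(h)+\sigma(h)^2,\qquad h:=\frac{\bw^\top\bx}{\sqrt d}.
\]
Hence
\[
\mathcal L(\bw)=\E_{\bx\sim\mathcal N(0,\id)}[L(\bx)\|\bx\|^2]+\E_{\bx\sim\mathcal N(0,\id)}[L(\bx)\,g(h)],\qquad g(z):=\sigma(z)^2-2z\sigma(z).
\]
The first term is $C$. The identity $C=\E_\bx\|\bx\|^2$ is just the change of measure $L=\mathrm d\mathbb P_x/\mathrm d\mathcal N$. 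By assumption $g\in\mathscr L^2(\mathcal N(0,1))$, with Hermite coefficients $c_k^g=c_k^{\sigma^2}-2c_k^{z\sigma}$.

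Next I reduce to a three-dimensional Gaussian computation. Write $Z_1=\bx^\top\bu^\star/\sqrt d$ and $Z_2=\bx^\top\bv^\star/\sqrt d$. Because $\bu^\star\perp\bv^\star$ and both have norm $\sqrt d$, the pair $(Z_1,Z_2)$ is standard Gaussian in $\R^2$. The variable $h$ is standard Gaussian, with $\E[hZ_1]=m_u$ and $\E[hZ_2]=m_v$. Decompose $h=m_uZ_1+m_vZ_2+\rho\,Z_3$, where $Z_3$ is independent of $(Z_1,Z_2)$ and $\rho^2=1-m_u^2-m_v^2$. The second term then equals $\E[\tilde L(Z_1,Z_2)\,g(h)]$, which depends on $\bw$ only through $(m_u,m_v)$. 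Expand
\[
\tilde L=\sum_{i,j}\frac{c^L_{i,j}}{i!\,j!}\He_i(Z_1)\He_j(Z_2),\qquad g=\sum_k \frac{c^g_k}{k!}\He_k.
\]
The key identity is the multivariate Mehler/Hermite formula: for a unit vector $a=(m_u,m_v,\rho)$ and $h=a^\top Z$,
\[
\He_k(h)=\sum_{|\alpha|=k}\frac{k!}{\alpha!}\,a^\alpha\,\He_{\alpha_1}(Z_1)\He_{\alpha_2}(Z_2)\He_{\alpha_3}(Z_3).
\]
Taking the expectation against $\He_i(Z_1)\He_j(Z_2)$ kills every term with $\alpha_3>0$ and forces $\alpha=(i,j,0)$ with $i+j=k$. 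This gives
\[
\E[\He_i(Z_1)\He_j(Z_2)\He_k(h)]=\mathbb 1_{i+j=k}\,\frac{k!}{i!\,j!}\,i!\,j!\;m_u^i m_v^j=\mathbb 1_{i+j=k}\,k!\,m_u^im_v^j.
\]
Combining this with the normalizations $1/(i!j!)$ and $1/k!$ yields exactly the coefficient $\frac{c^L_{i,k-i}}{i!(k-i)!}c^g_k$ in \cref{eq:loss_series_maintext}.

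The main technical step is justifying the interchange of the double series with the expectation, and the uniform convergence for $|m_u|,|m_v|\le r$. I would proceed in two steps.
\begin{itemize}
\item Rather than expanding both functions simultaneously, first project $g(h)$ onto functions of $(Z_1,Z_2)$. Its conditional expectation $\E[g(h)\mid Z_1,Z_2]$ is an Ornstein--Uhlenbeck-type smoothing. It has the $\mathscr L^2(\mathcal N(0,\id_2))$ expansion $\sum_k \frac{c_k^g}{k!}\sum_{i}\binom{k}{i}m_u^im_v^{k-i}\He_i(Z_1)\He_{k-i}(Z_2)$, whose squared norm is $\sum_k \frac{(c^g_k)^2}{k!}(m_u^2+m_v^2)^k\le\|g\|^2$. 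This convergence is in $\mathscr L^2$, so the pairing with $\tilde L\in\mathscr L^2$ is continuous and Parseval gives the term-by-term formula.
\item For absolute convergence, apply Cauchy--Schwarz in $(i,k)$. The bounds are $|c^L_{i,j}|\le\sqrt{i!j!}\,\|\tilde L\|$ and $|c^g_k|\le\sqrt{k!}\,\|g\|$. With these, the $k$-th block is at most $\|\tilde L\|\|g\|\sum_i\sqrt{\binom{k}{i}}\,r^k\le \|\tilde L\|\|g\|\,(k+1)^{1/2}(2r^2)^{k/2}$.
\end{itemize}
The second step requires care: the crude bound only gives convergence for $r<1/\sqrt2$. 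To reach every $r<1$, I would instead use Cauchy--Schwarz jointly over $(i,k)$, pairing $c^L_{i,k-i}/\sqrt{i!(k-i)!}$, which is square-summable, with $\sqrt{\binom{k}{i}}\,|m_u|^i|m_v|^{k-i}\,c^g_k/\sqrt{k!}$. The square of the latter sums to $\sum_k\frac{(c^g_k)^2}{k!}(m_u^2+m_v^2)^k\le \|g\|^2\sum_k(2r^2)^k$. This still caps at $1/\sqrt2$ in the worst case, so the honest uniform statement for all $r<1$ needs boundedness of $c^g_k/\sqrt{k!}$ together with $\sup|m|<1$ on the sphere-compatible region $m_u^2+m_v^2\le1$. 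I expect this to require restricting to $m_u^2+m_v^2\le r^2$, or exploiting the decay of $c^L$, and it is the step I would examine most carefully.
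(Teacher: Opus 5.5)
Your derivation of the expansion is correct and is essentially the paper's argument. You expand the square using $\|\bw\|^2=d$, pass to the Gaussian triple $(Z_1,Z_2,h)$, Hermite-expand $\tilde L$ and $g=\sigma^2-2z\sigma$, and use $\E[\He_i(Z_1)\He_j(Z_2)\He_k(h)]=\mathbf{1}_{\{k=i+j\}}\,k!\,m_u^i m_v^j$. The paper proves that identity in \cref{lem:triple_hermite} via the generating function, while you use the addition formula; the difference is immaterial. The paper simply inserts both series and integrates term by term. Your detour through $\E[g(h)\mid Z_1,Z_2]$ is a rigorous justification of that interchange: its $\mathscr L^2$ expansion has norm at most $\|g\|$, and you then pair it with $\tilde L$ by Parseval.

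The step you flagged is a defect of the statement, not of your proof. The paper's convergence argument is exactly Parseval, Cauchy--Schwarz and $\sum_i 1/(i!(k-i)!)=2^k/k!$, i.e.\ your crude bound, and it only covers the box with $r\le 1/\sqrt2$. Beyond that, the claim is false in general. Take $\tilde L(z_1,z_2)=G\bigl((z_1+z_2)/\sqrt2\bigr)$. Then $c^L_{i,j}=2^{-(i+j)/2}c^G_{i+j}$, and the series collapses to $\sum_k \frac{c^G_k c^g_k}{k!}\bigl(\frac{m_u+m_v}{\sqrt2}\bigr)^k$. Now choose $\sigma(z)=\mathbf{1}\{z>0\}$, so that $g=\mathbf{1}\{z>0\}-2\,\mathrm{ReLU}(z)$, and $G=2\cdot\mathbf{1}\{z>0\}$, which is a genuine likelihood ratio. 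At $m_u=m_v=r$, the terms with even $k\ge2$ vanish, and those with odd $k\ge3$ are positive and of order $k^{-3/2}(\sqrt2\,r)^k$. Hence the series diverges for every $r>1/\sqrt2$, and there is no extra decay of $c^L$ or of $c^g_k/\sqrt{k!}$ to exploit.

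The right statement is already contained in your joint Cauchy--Schwarz computation. With $A_k:=\sum_i (c^L_{i,k-i})^2/(i!(k-i)!)$,
\[
\sum_{k>K}\sum_{i=0}^{k}\frac{|c^L_{i,k-i}|\,|c^g_k|\,|m_u|^i|m_v|^{k-i}}{i!\,(k-i)!}\le\Bigl(\sum_{k>K}A_k\Bigr)^{1/2}\Bigl(\sum_{k>K}\frac{(c^g_k)^2}{k!}\,(m_u^2+m_v^2)^k\Bigr)^{1/2},
\]
and the last factor is at most $\|g\|$ whenever $m_u^2+m_v^2\le 1$, with no need for $r<1$. Since $\sum_k A_k=\|\tilde L\|^2<\infty$, the series converges absolutely and uniformly on the closed unit disk. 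That disk contains every attainable pair of overlaps, because orthogonal spikes of norm $\sqrt d$ together with $\|\bw\|=\sqrt d$ force $m_u^2+m_v^2\le1$. It is also more than \cref{thm:population} uses, since that proof only works near the origin.
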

The proof can be found in \cref{app:hermite}.
Lemma \ref{lemma:expansion} gives an expansion of the population loss that allows to interpret early stage dynamics of spherical gradient flow, leading to an understanding on which conditions on the activation~$\sigma$ and the latent variables distribution $P_{\rm latents}$ allow for recovery of a positive correlation with $\mathbf{v^\star}$. Note that the Hermite coefficients of the likelihood ratio depend on $P_\text{latents}$ in the following way:
\begin{equation} c^L_{i,j}\propto\mathbb E\left[\He_i\left(\lambda+Z_u\right)\He_j\left(\eta\nu +\sqrt{1-\eta^2}Z_v\right)\right] \, ,
\end{equation}
where $\eta=1/ \sqrt{{1+\E[\nu^2]}}$ and the averages are taken over $P_\latents(\lambda,\nu)$ and two independent standard Gaussian $Z_u$,~$Z_v$.
Our assumption on centered and uncorrelated latents, together with the whitening of spike~$v$ imply the following:
\begin{align}
    \label{eq:coef0} c^L_{1,0}=c^L_{0,1}=c^L_{1,1}=c^L_{0,2}=0 \, .
\end{align}
\begin{theorem}[Gradient flow dynamics  on population loss]\label{thm:population}
Denote $\tilde \bw:= \mathbf{w} /\sqrt {d}$.
Suppose that $\mathcal L$ is continuously differentiable 4 times and satisfies \cref{lemma:expansion}.
  Consider spherical gradient flow dynamics on the population loss, i.e. the following dynamical system:
\begin{equation} \label{eq:gradient_flow}
    \frac{d}{dt} \tilde \bw(t)=-\nabla_{\text sph} \mathcal L(\tilde \bw)=-\left(\text{id}-\tilde \bw\tilde \bw^\top\right)\nabla \mathcal L(\tilde \bw)
\end{equation}
starting from initialization $\tilde \bw(0)\sim \text{Unif}(\mathbb S^{d-1})$. Then: 
\begin{itemize}
\item[{\SUCCESS[1]}] Assume that $C_2:=c^L_{2,0}\left(c_2^{\sigma^2}-2c_2^{z\sigma}\right)<0$ and $C_3:=c^L_{2,1}\left(c_3^{\sigma^2} -2c_3^{z\sigma}\right)\ne 0$. Then, with high probability in the high dimensional limit $d\to \infty$, gradient flow dynamics reaches \textbf{weak recovery} of $\mathbf{u^\star}$ and $\mathbf{v^\star}$ in logarithmic time, i.e., for any $\delta>0$ there exist $\epsilon_1,\epsilon_2>0$, independent of $d$ and $T=\Theta(\log d)$ such that the probability that $|m_u(T)|\ge \epsilon_1, |m_v(T)|\ge \epsilon_2$ is larger than $1-\delta$ for $d$ large enough. 
\item[{\SUCCESS[2]}]If $C_2< 0$ but $C_3=0$ the dynamics still leads to a \textbf{weak recovery} of $\mathbf{u^\star}$ and $\mathbf{v^\star}$ in logarithmic time if $C_{(3,1)} := c^L_{3,1}(c_4^{\sigma^2}-2c_4^{z\sigma})
\ne 0$.
\item[{\FAIL[1]}] If $C_2 < 0$ but $C_{p,1}:= c^L_{p,1}(c_{p+1}^{\sigma^2}-2c_{p+1}^{z\sigma}) =0$ for all $p\in \mathbb N$, and $C_{2,2}:= c^L_{2,2}(c_4^{\sigma^2}-2c_4^{z\sigma})>2C_2$, then early stage dynamics will lead to the \textbf{weak recovery only of the $\bu^\star$ spike} in logarithmic time $T=\Theta(\log d)$: for any $\delta_1, \delta_2>0$ there exist $\epsilon_1>0$, independent of $d$ and $T=\Theta(\log d)$ such that the probability that $|m_u(T)|\ge \epsilon_1, |m_v(T)|\le  d^{-1/2+\delta_2}$ is larger than $1-\delta_1$ for $d$ large enough. 
\item[{\FAIL[2]}] If instead $C_2\ge 0$ then, with high probability in the high dimensional limit $d\to \infty$, gradient flow dynamics \textbf{does not reach weak recovery of $\mathbf{u^\star}$ and $\mathbf{v^\star}$} for any time that scales as $o(\sqrt d)$, i.e. we show that for any $t_d=o(\sqrt d)$ there exists a sequence $(A_d)_{d\in \mathbb N}$ such that $\lim_{d\to \infty} A_d=0$ and $|m_u(t)|\le A_d,$ $|m_v(t)|\le A_d$ for $t\in [0,t_d]$ for all $d$.
\end{itemize}
\end{theorem}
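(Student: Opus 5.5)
The plan is to reduce the spherical gradient flow to a closed two-dimensional ODE for the overlaps $(m_u,m_v)$ using \cref{lemma:expansion}, and then analyse its early-time behaviour near the origin, where the initialization puts $m_u(0),m_v(0)$ at order $d^{-1/2}$.

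\textbf{Reduction.} Since $\mathcal L(\tilde\bw)=\Phi(m_u,m_v)$ with $\Phi$ the series \cref{eq:loss_series_maintext}, we have $\nabla\mathcal L=\partial_1\Phi\,\bu^\star/\sqrt d+\partial_2\Phi\,\bv^\star/\sqrt d$. Projecting \cref{eq:gradient_flow} onto $\bu^\star/\sqrt d$ and $\bv^\star/\sqrt d$ gives
\begin{equation*}
\dot m_u=-\partial_1\Phi+m_u(m_u\partial_1\Phi+m_v\partial_2\Phi),\qquad
\dot m_v=-\partial_2\Phi+m_v(m_u\partial_1\Phi+m_v\partial_2\Phi).
\end{equation*}
This is an exact autonomous ODE. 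All randomness enters only through the initial condition, and $\sqrt d\,(m_u(0),m_v(0))$ converges to a standard Gaussian pair. Using \cref{eq:coef0}, the Taylor expansion of $\Phi$ near the origin is
\begin{equation*}
\Phi=C+\tfrac12 C_2 m_u^2+\tfrac12 C_3 m_u^2m_v+\tfrac16 C'_3 m_u^3+O(|m|^4),
\end{equation*}
where $C'_3$ collects the $c^L_{3,0}$ and $c^L_{1,2}$-type terms. The absolute convergence of the expansion for $|m|\le r<1$ and the $C^4$ assumption justify Taylor remainders with uniform constants.

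\textbf{Cases \SUCCESS[1] and \SUCCESS[2].} To leading order, $\dot m_u\approx -C_2 m_u$ with $-C_2>0$, so $m_u$ grows exponentially, $m_u(t)\approx m_u(0)e^{|C_2|t}$. It reaches a small constant $\epsilon_1$ at a time $T_1=\frac{1}{2|C_2|}\log d+O(1)$, and with probability $\ge 1-\delta$ we have $|m_u(0)|\ge c_\delta d^{-1/2}$. Meanwhile $\dot m_v\approx -\tfrac12 C_3 m_u^2 - C_2 m_u^2 m_v + O(m_u^2 m_v^2, m_v^3)$; the term $-\partial_2\Phi$ contains no linear part in $m_v$ because $c^L_{0,2}=0$. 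Integrating, the forcing $-\tfrac12C_3m_u^2$ produces $m_v(t)\approx m_v(0)-\frac{C_3}{4|C_2|}m_u(t)^2$. Once $m_u=\epsilon_1$, this gives $|m_v|\gtrsim \epsilon_1^2|C_3|/(4|C_2|)-O(d^{-1/2})$, which is a nonzero constant $\epsilon_2$. To make this rigorous I would use a bootstrap (Grönwall) argument on $[0,T_1]$: assume $|m_v|\le K m_u^2+2|m_v(0)|$, show the cubic and higher remainders are dominated, and close the estimate. In case \SUCCESS[2] the same argument applies with the forcing $-\tfrac16 C_{(3,1)}m_u^3$ (coming from $\He_3\He_1$), giving $m_v\sim m_u^3$, still of constant order at $T_1$.

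\textbf{Cases \FAIL[1] and \FAIL[2].} In \FAIL[1] all pure forcing terms $m_u^p$ in $\dot m_v$ vanish, so $\dot m_v=m_v\,g(m_u,m_v)$ with $g=-C_{2,2}m_u^2/2+C_2m_u^2+O(\ldots)$ after collecting terms, including the spherical correction $m_u\partial_1\Phi\approx C_2m_u^2$. The condition $C_{2,2}>2C_2$ makes $g\le 0$ to leading order, so $|m_v|$ does not grow beyond $|m_v(0)|\cdot e^{O(\text{small})}\le d^{-1/2+\delta_2}$, while $m_u$ still reaches $\epsilon_1$. I expect this to be the main obstacle: the sign of the effective rate must be controlled uniformly along the whole trajectory up to $T_1$, including higher-order $m_v^2$ terms. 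I would handle it via Grönwall on $\log|m_v|$, bounding $\int_0^{T_1}g\,dt\le \int O(m_u^4)\,dt=O(1)$, since $m_u$ grows exponentially and the integral is dominated by its endpoint.

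In \FAIL[2], with $C_2\ge0$ the linear part is non-expansive. The leading driving terms are then cubic in $|m|\sim d^{-1/2}$, so $|\dot m|\le K|m|^2$ (the quadratic gradient terms come from cubic terms in $\Phi$). Comparison with the ODE $\dot y=Ky^2$, $y(0)=O(d^{-1/2}\log d)$, shows there is no blow-up before time of order $\sqrt d/\log d$. This gives $|m(t)|\le A_d\to0$ for all $t=o(\sqrt d)$, after choosing the log factors appropriately in the high-probability bound on the initialization.
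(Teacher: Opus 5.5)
Your plan has the same architecture as the paper's proof. You reduce to the closed planar ODE for $(m_u,m_v)$, let $m_u$ escape exponentially when $C_2<0$, and drive $m_v$ by the forcing $m_u^2$ (resp.\ $m_u^3$). You then use a multiplicative equation $\dot m_v=m_v\,g$ in the third item and a Riccati comparison with $\dot y=Ky^2$ in the fourth. Only the devices differ. The paper works in the cone $\{0<m_u<\epsilon,\ |m_v|<\Gamma m_u\}$ with mean-value exit arguments (plus a barrier $m_v-\kappa m_u^3-\theta$ in the second item), and in the fourth item compares each coordinate with an auxiliary function. You instead bootstrap $|m_v|\le Km_u^k+2|m_v(0)|$ and use Grönwall.

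\textbf{Cubic Taylor term.} You do need to fix this. Write $\kappa_k:=c_k^{\sigma^2}-2c_k^{z\sigma}$. The degree-three part of $\Phi$ is $\kappa_3\bigl(\tfrac16c^L_{3,0}m_u^3+\tfrac12c^L_{2,1}m_u^2m_v+\tfrac12c^L_{1,2}m_um_v^2+\tfrac16c^L_{0,3}m_v^3\bigr)$. So $c^L_{1,2}$ multiplies $m_um_v^2$, not $m_u^3$, and the $m_v^3$ term is missing. With $\eta=(1+\E[\nu^2])^{-1/2}$ one has $c^L_{1,2}=\eta^2\E[\lambda\nu^2]$ and $c^L_{0,3}=\eta^3\E[\nu^3]$, which need not vanish. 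Consequences:
\begin{itemize}
\item $\dot m_v$ contains the \emph{quadratic} terms $-\kappa_3c^L_{1,2}m_um_v-\tfrac12\kappa_3c^L_{0,3}m_v^2$, and $\dot m_u$ contains $-\tfrac12\kappa_3c^L_{1,2}m_v^2$. Your remainders are therefore not cubic.
\item Separately, the spherical correction in $\dot m_v$ is $+C_2m_u^2m_v$, not $-C_2m_u^2m_v$.
\end{itemize}

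\textbf{First two items.} Your mechanism survives, but you must include these terms. Under your bootstrap, $m_u|m_v|\le Km_u^{k+1}+2|m_v(0)|m_u$. Also $\int_0^Tm_u\,dt\le2\epsilon_1/|C_2|$, because $\dot m_u\ge\tfrac{|C_2|}2m_u$. So the new terms contribute $O(\epsilon_1)(Km_u^k+|m_v(0)|)+o(d^{-1/2})$, and the estimate still closes. Your region is in fact better suited than the paper's. On the cone with large $\Gamma$, $|m_um_v|$ and $m_v^2$ can reach $\Gamma m_u^2$ and $\Gamma^2m_u^2$, comparable to the forcing $|C_3|m_u^2$. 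The paper's bound $|R_v|\le b_1|m_u|^3+b_2|m_v|^3$ omits these terms as well.

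\textbf{Third item.} The leading part of $g$ is $-\kappa_3c^L_{1,2}m_u$. Your claim that $C_{2,2}>2C_2$ makes $g\le0$ to leading order is therefore false in general. The Grönwall route needs no sign, though:
\begin{itemize}
\item $c^L_{0,2}=0$ gives $g(0,0)=0$, so $|g|\le L(|m_u|+|m_v|)$ near the origin.
\item Hence $\int_0^T|g|\,dt\le2L\epsilon_1/|C_2|+o(1)$.
\item So $|m_v(T)|\le e^{O(1)}|m_v(0)|\le d^{-1/2+\delta_2}$ with high probability.
\end{itemize}
The obstacle you anticipate is not one. The hypothesis $C_{2,2}>2C_2$, which the paper uses to get $|m_v(t)|\le|m_v(0)|$, is not needed for the stated conclusion. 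Replace your $\int O(m_u^4)\,dt$ by $\int O(m_u)\,dt$.

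\textbf{Fourth item.} Your bound $|\dot m|\le K|m|^2$ is false as written, because $\dot m_u$ keeps the linear term $-C_2m_u$. What holds is $m\cdot\dot m=-C_2m_u^2+O(\|m\|^3)\le K\|m\|^3$. This uses $c^L_{1,0}=c^L_{0,1}=c^L_{1,1}=c^L_{0,2}=0$ and the fact that the spherical correction is cubic. Thus $\tfrac{d}{dt}\|m\|\le K\|m\|^2$, a single estimate that replaces the paper's case split on the signs of $m_u$ and $m_v$. You then conclude, as the paper does, by bounding the initialization by $B_d/\sqrt d$ with $B_d\to\infty$ slowly relative to $t_d$.
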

The complete proof is given in the \cref{app:proof_thm}, and in the following we sketch the main idea.
\begin{proof}[Idea of the proof] By \cref{lemma:expansion} we know that $\mathcal L$ depends only on $m_u$, $m_v$, so the high dimensional gradient flow can be reduced to a system of two coupled ODEs that describe the evolution in time of $(m_u,m_v)$. Moreover, since $\tilde \bw(0)\sim \text{Unif}(\mathbb S^{d-1})$, $m_u(0),m_v(0)\approx  1 / {\sqrt d}$ with high probability. So, close to  initialization, the dynamics will be driven by the low order terms in \cref{eq:loss_series_maintext}. Since we have \cref{eq:coef0}, the leading order term for $m_u$ dynamics comes from the quadratic term in the loss

\begin{equation}
    \dot m_u(t)\approx -C_2m_u(t)
\end{equation}
hence if $C_2<0$ we have that $m_u$ grows exponentially and in logarithmic time will reach threshold $\epsilon$ (which needs to be chosen small enough to ensure that the expansion of the loss works). Regarding the ODE for $m_v$ we have the following approximation:
\begin{equation}
    \dot m_v(t)\approx -C_3 m_u^2(t)-C_{(3,1)}m_u^3(t)+ \left(C_2-\frac{C_{(2,2)}}{2}\right)m_u^2(t)m_v(t)
\end{equation}
where the term $C_2m_u^2m_v$ comes from the spherical constraint $-\tilde\bw\tilde\bw^\top$ in \cref{eq:gradient_flow}. We then prove that if one among $C_3$ and $C_{(3,1)}$ is nonzero, then $m_v$ will grow (or decrease, depending on the sign) almost as fast as $m_u$, reaching weak recovery in logarithmic time. On the other hand, if they are both 0 (and we need to assume no other term of the form $m_u^p$ to be present), then the dynamics are driven by $(C_2-C_{(2,2)}/2)m_u^2(t)m_v(t)$ which will be attracting towards 0 if the coefficient $C_2-C_{(2,2)}/2<0$. Finally if $C_2\ge0$, then $m_u$ does not have the drive to grow exponentially, hence both $m_u$ and $m_v$ need to take longer to reach weak recovery.
\end{proof}

\paragraph{Connection with the correlation exponent.}
\cref{thm:population} can be interpreted through the lens of the \emph{correlation exponent}
via the identity $c_{p,1}^L = {\mathbb E}[\lambda^p \nu]$,
which is proved in \cref{lemma:cL_corr_exp} in the appendix. This identity implies that
cases \SUCCESS[1]\ and \SUCCESS[2]\ establish weak recovery of $\bu^\star$ and $\bv^\star$ when $k^\star=2,3$,
whereas case \FAIL[1]\ shows that if $k^\star=\infty$, gradient flow on the population loss cannot
reach weak recovery of $\bv^\star$ on the same time scale as for $\bu^\star$ (we remark that this holds for any $\sigma$, and given $C_{2,2}>2C_2$). Finally, case \FAIL[2]\
shows that the condition $C_2<0$ is not merely technical, but a fundamental requirement: without
it, gradient flow fails to achieve rapid weak recovery of either $\bu^\star$ or $\bv^\star$. In \cref{tbl:classification_activation_function}, we illustrate the prediction that \cref{thm:population} leads to on several activation functions that are commonly employed in practice. Note that some of the activations in \cref{tbl:classification_activation_function} are not guaranteed to satisfy the regularity assumptions of \cref{thm:population}, so the prediction applies to smooth approximations that share the first four Hermite coefficients as discussed in \cref{app:different_radius}. We refer to \cref{app:k2_activations_GD} and \cref{app:k3_simulations} for empirical simulations of the activations listed in \cref{tbl:classification_activation_function} in the cases $k^\star=2,3$.

\paragraph{Predictions for online SGD sample complexity.} 
The gradient-flow time scales identified in \cref{thm:population} can be heuristically translated into
predictions for the sample complexity of \emph{online SGD} as in the setting of \cite{arous2021online}. 
For the number of samples $n$, the dimension $d$ and the gradient-flow time to weak
recovery $T$, the correspondence is $n^\star \approx T d$. A formal justification of this correspondence would require
controlling the stochastic noise in online SGD, which we leave for future work.

Under this correspondence, we predict that cases \SUCCESS[1]\  and \SUCCESS[2]\  yield weak recovery of both
$\bu^\star$ and $\bv^\star$ after $n^\star=\Theta(d\log d)$ samples; case \FAIL[1]\ yields weak recovery of
$\bu^\star$ but not $\bv^\star$ at the same scale; and case \FAIL[2], when $C_2\ge0$, requires at least
$n \gtrsim d^{3/2}$ samples for weak recovery. These predictions are consistent with recent
results on online SGD dynamics \cite{bardone2024sliding}.

\begin{figure*}[!t]
    \centering
    \includegraphics[width=0.9\linewidth]{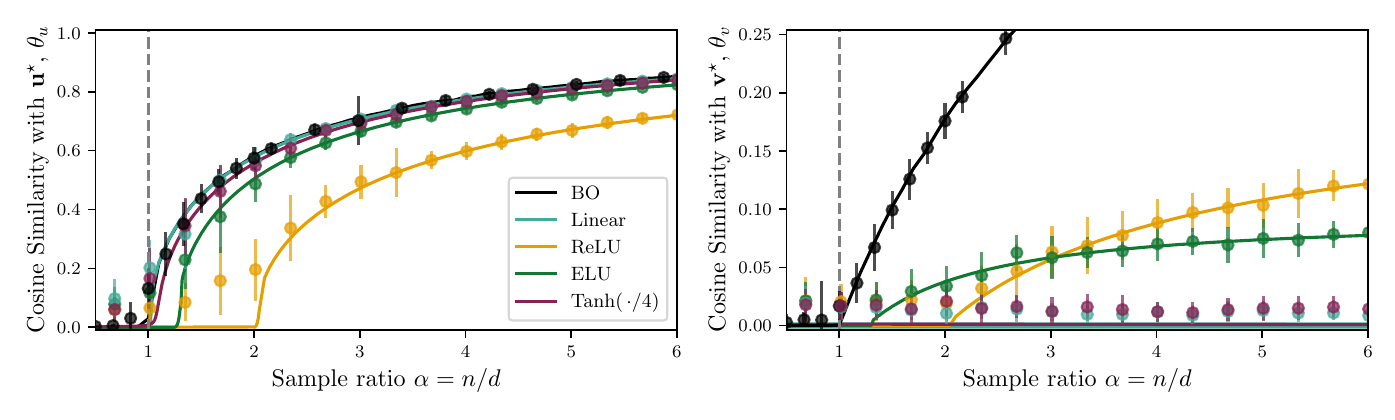}
    \caption{Cosine similarity of the BO and ERM estimators with, respectively, $\mathbf{u^\star}$ and $\mathbf{v^\star}$, for different choices of the activation of the autoencoder. Here, we consider a distribution $P_{\rm latents}$ with correlation exponent $k^\star=2$, concretely $\lambda \sim \mathcal{N}(0, 1)$ and $\nu$ is $-\sqrt{2}$ if $|\lambda| < \Phi^{-1}(0.75)$ and $+\sqrt{2}$ otherwise, where $\Phi$ is the standard Gaussian cdf. Solid lines are analytical predictions in the high-dimensional limit (see \cref{sec:ERM}). Colored dots are numerical simulations of minimization with full-batch Adam \cite{DBLP:journals/corr/KingmaB14}, with learning rate $\eta=0.1$ over $800$ epochs and no weight decay ($d = 2000 $, averaged over $30$ instances, error bars represent one standard deviation). Black dots are runs of AMP for $d=10000$, averaged over $72$ seeds. The vertical dashed line is $\alpha^{\rm lin}_{u}$ given in \cref{res:bo-weak}. The theoretical predictions and the simulation match, modulo finite-size effects that are further discussed in Appendix~\ref{app:fixed_alpha_simulations}.
    We see that the linear autoencoder does not achieve weak recovery of the $\mathbf{v^\star}$ spike, nor does the one with $\tanh$ non-linearity, while autoencoders with other depicted non-linearities do. The theoretical prediction of $\tanh$ also holds for the rescaled activation.
    }
    \label{fig:cosine}
\end{figure*}

\paragraph{Remark on the spherical constraint.}
\cref{thm:population} and \cref{tbl:classification_activation_function} describe gradient
flow dynamics constrained to the sphere $\|\bw\|=\sqrt d$. While \cref{thm:population} can be adapted
to other radii of the same scale $\|\bw\|=r\sqrt d$ 
 the relevant Hermite coefficients 
  may differ, potentially altering the classification
in \cref{tbl:classification_activation_function} (see \cref{app:different_radius}).
 For this reason, the \SUCCESS\ cases in \cref{tbl:classification_activation_function} are stated
specifically for the radius $\|\bw\|=\sqrt d$: in general, they can be invalidated by constraining the
dynamics to sufficiently large radii. Our positive results therefore establish the \emph{existence} of a radius leading to weak recovery (which holds for any $P_\latents$ with the specified $k^\star$ value). In contrast, each \FAIL\ case in the Table is driven by more fundamental
obstructions and does not achieve weak recovery for any choice of the radius.

\paragraph{About weak recovery.}
\cref{thm:population} characterizes the learning dynamics only up to the time when either $m_u$ or $m_v$ reaches a positive threshold independent of $d$. Beyond this regime, the low-order polynomial approximation of the population loss is no longer valid. This limitation is intrinsic rather than technical: after weak recovery, the dynamics depend on all coefficients in \cref{eq:loss_series_maintext}, and a general analysis (e.g., of the limiting values of $m_u$ and $m_v$) is impossible without fully specifying $P_{\text{latents}}$ and $\sigma$. Nevertheless, weak recovery already captures the essential phenomenon of dimension reduction, yielding an effective low-dimensional dynamical system independent of $d$.

\section{Autoencoder: Empirical risk minimization\label{sec:ERM}}

We study the global minimum of the empirical risk \cref{eq:empiricalrisk} along similar technical lines to \cite{cui2023high}.
    \begin{figure*}[!t]
    \centering
    \includegraphics[width=\linewidth]{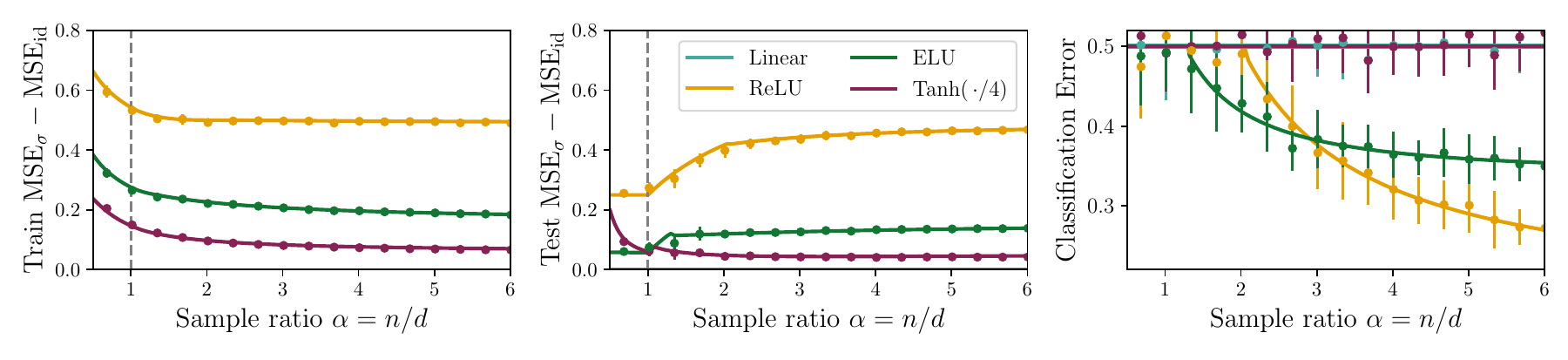}

    \caption{Train loss (left panel) and test loss (center panel) of the non-linear autoencoders minus the corresponding loss of the linear autoencoder, $\sigma=\mathrm{id}$, as a function of the sample complexity. The vertical dashed line is $\alpha^{\rm lin}_{u}$. The parameters and $P_{\rm latents}$ are the same as in Fig.~\ref{fig:cosine}. 
    We see that both train and test losses are smaller for the linear autoencoder. Right panel: Classification error on the downstream task, where we see the advantage of the non-linearities that recovered correlation with the hidden spike. }
    \label{fig:train-test-down}
\end{figure*}     
\begin{result}[Asymptotic characterization of the ERM]
\label{res:erm}
    Consider the empirical risk \cref{eq:empiricalrisk} under the spiked cumulant model, defined in \cref{sec:intro}, in the high-dimensional limit $n,d \to \infty$ with $\alpha = n/d = \mathcal{O}_d(1)$. Call $\mathbf{\hat w} \in \mathbb{R}^d$ any global minimum of \cref{eq:empiricalrisk} and define the order parameters
     \begin{equation}\label{eq:ERM_order_parameters}
        m^{\rm ER}_{u} = \frac{\mathbf{\hat w}^\top \mathbf{u}^\star}{d}
        \, , \qquad
        m^{\rm ER}_{v} = \frac{\mathbf{\hat w}^\top \mathbf{v}^\star}{d}
        \, , \qquad
        q^{\rm ER} = \frac{\mathbf{\hat w}^\top \mathbf{\hat w}}{d}
        \, .
    \end{equation}
    Then,  under the so-called replica symmetric assumption (see \cref{app:replica_computation}) the order parameters concentrate onto their mean w.r.t. the randomness of the data distribution, and their limiting values are given by
    \begin{equation} 
         \label{eq:freeentropyERM}
         \{ m^{\rm ER}_{u}, m^{\rm ER}_{v}, q^{\rm ER} \} =
         \argmax_{m_u,m_v,q} \,
         \phi_{\rm ER}(m_u,m_v,q)
        \, ,
    \end{equation} 
    where $\phi_{\rm ER}$ 
     is given by \cref{eq:phi_ER} in \cref{app:replicaspecific}. $\phi_{\rm ER}$ depends on $\alpha$, $P_{\rm latents}$ and the activation function $\sigma$.
        The cosine similarity defined in \cref{eq:cos_sim} for the ERM estimator is in the high-dimensional limit given by 
     $\theta^{\rm ER}_s \to m^{\rm ER}_s / \sqrt{q^{\rm ER}}$
  for both spikes $s \in \{u,v\}$.
\end{result}
Detailed derivation of \cref{res:erm} is given in \cref{app:replicaspecific}. Here, the empirical loss is a non-convex function of the weights; thus, in principle, the replica-symmetric assumption, detailed in \cref{app:replica_computation}, could be violated. If this were the case, the result above would yield a lower bound on the global minimum of the empirical risk \cite{guerra2003broken,franz2003replica,talagrand2010mean}.

Nevertheless, in all the examples in the main text, the replica-symmetric predictions closely agree with the outcomes of numerical gradient-based optimization of the ERM objective; see Fig.~\ref{fig:cosine} and \ref{fig:train-test-down}. This agreement suggests both that the replica-symmetric description is accurate in these examples and that \cref{res:erm}, despite characterizing global minima, is largely predictive of the solutions reached by gradient descent. Making \cref{res:erm} mathematically rigorous in this setting remains a considerable technical challenge \cite{vilucchio2025asymptotics}. More generally, for some activations and parameter regimes, the replica-symmetric predictions may disagree with simulations. One such discrepancy, arising for the $\tanh$ activation function without rescaling its argument, is discussed in \cref{app:tanh_simulations}. These discrepancies do not affect the key results of this work, and their theoretical characterization is left for future work.

\paragraph{Weak-recovery of the hidden spike~$\mathbf{v^\star}$.} 
 In Fig.~\ref{fig:cosine}, we plot the cosine similarity of the ERM estimator with the data spikes for a given example of $P_{\rm latents}$ with correlation exponent $k^\star =2$ as a function of the ratio of samples $\alpha = n/d$ and for a variety of nonlinearities.
 For the considered non-linearities and ranges of $\alpha$, we observe a phenomenology fully compatible with the results for the gradient flow on the population loss described in \cref{sec:pop}. If a \mbox{non-linearity} should not weakly recover the $\mathbf{v}^\star$ spike in $\Theta(\log d)$ \mbox{time-steps} according to \cref{thm:population}, in ERM it does not weakly recover it at considered sample ratio $\alpha$ and vice versa.

The linear autoencoder ($\sigma(x)=x$) aligns only with $\mathbf{u^\star}$ and never weak-recovers $\mathbf{v^\star}$. This is compatible with the fact that the linear autoencoder is performing PCA on the data covariance (see \cref{prop:linear_pca}) where no information on~$\mathbf{v^\star}$ is present.
 The weak recovery of $\mathbf{u^\star}$ then follows again a Baik-Ben Arous-Péché transition \cite{baik2005phase} at $\alpha^{\rm lin}_{u}=1/\mathbb{E}[\lambda^2]^2$. 

For the activation $\sigma = \tanh(\cdot/4)$ and correlation exponent $k^\star=2$, \cref{thm:population} suggests weak recovery of $\mathbf{v^\star}$ requires at least $\Theta(\sqrt{d})$ time steps. Correspondingly, we see in Fig.~\ref{fig:cosine} that $\tanh$ does not recover $\mathbf{v^\star}$ in the considered range of $\alpha$.

Non-linear autoencoders with other considered activations (ReLU, ELU, but also Swish, Sigmoid and GELU see \cref{app:k2_activations_GD})  manage to weakly recover both $\mathbf{u^\star}$ and $\mathbf{v^\star}$ at finite sample ratio $\alpha^{\sigma}_{\rm weak}$, confirming a clear advantage of nonlinearities in the setting of finite sample complexity. Interestingly, $\alpha^{\rm lin}_{u} < \alpha^{\sigma}_{\rm weak}$ for all nonlinearities we considered, revealing a tradeoff between how many samples are needed to weakly recover $\mathbf{u^\star}$ and $\mathbf{v^\star}$: the linear autoencoder has an advantage in the $\mathbf{u^\star}$ recovery at the price of not recovering $\mathbf{v^\star}$.

\section{Lower test loss $\neq$ better generalization}
\label{sec:validation}

In the left and center plots of \cref{fig:train-test-down} we report train and test reconstruction error for different activation functions $\sigma$, observing that the linear autoencoder consistently achieves lower loss than its nonlinear counterparts. Indeed, for any activation $\sigma$ and any $\bw\in\mathbb R^d$,
     \begin{equation}
  \label{eq:MSE_lower_bound}
    \mathbb E\left[\norm{\bx-\frac{\bw}{\sqrt{d}}\sigma\left(\bx^\top\frac{\bw}{\sqrt d}\right)}^2\right]
    \ge\underbrace{\mathbb E\left[\norm{\bx-\tilde \bw^\top \bx \tilde \bw}^2\right]}_{\text{Linear autoencoder with $||\tilde \bw||=1$}} \, ,
\end{equation}
where $\tilde\bw=\bw/\|\bw\|$. Interpreting $\mathbb E$ as either an empirical or population average yields $\mathcal L^\sigma(\bw)\ge\mathcal L^{\mathrm{linear}}(\tilde\bw)$ for all~$\bw$.  
We thus see that reconstruction error is not aligned with representation quality in the present model. \Cref{prop:linear_pca} shows that linear autoencoders reduce to PCA \cite{bourlard1988auto} and therefore fail to recover the hidden spike~$\bv^\star$, while \cref{thm:population} and the analysis of \cref{sec:ERM} show that nonlinear autoencoders achieve nontrivial overlap with $\bv^\star$ by exploiting higher-order cumulants, even though they incur higher reconstruction loss.

In practice in self-supervised and generative learning, representation quality is rarely assessed through the value of the test loss itself, but instead via downstream tasks or linear probing \cite{grill2020bootstrap,he2022masked,devlin2019bert}. We thus consider a toy downstream classification task where we assume that we have access to a small labeled dataset. For concreteness and simplicity, we model such labels as being obtained by the data-generative process
 $y_\mu = \mathrm{sign}\!\left( \mathbf{x}_\mu^\top \mathbf{v}^\star \right)$
 for $\mu \in \{ 1,\dots,n_{\rm down}\}$. We thus have access to $n$ unlabeled samples on which we learn the 
representation $\hat{\mathbf{w}}$ during pre-training (the weight of the auto-encoder studied above). The downstream task then consists of a small number $n_{\rm down} \ll n$ of labeled samples $\{y_\mu, \mathbf{x}_\mu\}_{\mu =1}^{n_{\rm down}}$ and the goal is to predict the labels. We assume a linear predictor 
  $\hat y_{\mathrm{new}} = \mathrm{sign}\!\left( \mathbf{x}_{\mathrm{new}}^\top \hat{\mathbf{w}} \right)$
 with weights initialized using the pre-training and that can be finetuned on the labeled dataset. We assume the labeled dataset is so small that finetuning does not provide an improvement with respect to initialization.
 
As shown in the rightmost panel of \cref{fig:train-test-down}, features learned by the linear autoencoder yield classification error at chance level, whereas nonlinear features achieve substantially lower error which can be further improved with mild label aggregation, see \cref{App:Downstream_task}.
 These results identify a clear tractable example in which test reconstruction error and downstream performance are misaligned: linear autoencoders minimize reconstruction loss while discarding identifiable latent structure, whereas nonlinear (with suitable non-linearity) autoencoders recover both latent components and outperform on downstream classification despite higher test loss.

In practice, justifications for when test loss-based evaluation reflects representation quality and its failures remain limited. From this perspective, our model provides a simple and tractable setting in which a failure of such alignment can be explicitly demonstrated.

This issue is particularly relevant because validation loss is still widely used in self-supervised and generative pipelines for early stopping, and to guide the choice of regularization and architectural hyperparameters. \cref{fig:train-test-down} illustrates how such loss-based choices can be misleading. In the central panel, the ReLU activation yields a higher test loss than ELU, yet in the right-hand panel, ReLU achieves better performance on the downstream task for sufficiently large sample ratios. Note that a similar misalignment can arise when varying the width of the autoencoder (see the simulations in \cref{app:simulations_two_neurons}). Together, these observations further highlight  that loss-based criteria may be misaligned with representation quality in self-supervised learning, motivating further study of when such criteria are justified.

\section{Conclusion}

We introduced a solvable high-dimensional data model in which nonlinear autoencoders provably recover latent structure invisible to PCA. The model also exhibits a concrete misalignment between reconstruction loss and representation quality: linear autoencoders achieve lower test loss while failing to recover identifiable latent factors. We hope this framework will serve as a tractable testbed for further theoretical study of nonlinear representation learning and evaluation in self-supervised settings, including extensions to contrastive and masked-objective architectures.
A limitation of this work is its focus on minimal single-neuron autoencoders and a specific class of higher-order dependencies. Extending the analysis to multi-neuron architectures, richer dependence structures, and other self-supervised objectives remains open.

\section*{Acknowledgment} 

We thank Itay Griniasty for preliminary discussions on the architecture and data model, Hugo Cui and Léo Catteau for discussions related to the analytic solution for the autoencoder architecture, Sebastian Goldt and Florent Krzakala for many insightful discussions about the advantage of non-linearity in autoencoders and on the spiked cumulant model, Marc M\'ezard and Eric Vanden-Eijnden for discussions about subtleties of validation in self-supervised learning.

We acknowledge funding from the Swiss National Science Foundation grants SNSF SMArtNet (grant number 212049), and the Simons Collaboration on the Physics of Learning and Neural Computation via the Simons Foundation grant (\#1257413 (LZ)).

\bibliographystyle{unsrtnat}
\bibliography{biblio}
     \newpage
\appendix
\onecolumn

\section{Details and proofs for gradient flow on the  population loss\label{app:population}}
\subsection{Linear $\sigma$\label{app:population_linear}}
\begin{proof}[Proof of \cref{prop:linear_pca}]
With $\sigma=\mathrm{id}$ the reconstruction equals $(\bw\bw^\top/d)\bx$, hence
\begin{equation}
\mathcal L(\bw)=\E\Bigl[\bigl\|\bx-\tfrac{\bw\bw^\top}{d}\bx\bigr\|^2\Bigr]
=\E\|\bx\|^2-\frac{2}{d}\E\bigl[(\bw^\top\bx)^2\bigr]+\frac{1}{d^2}\E\bigl[\|\bw\|^2(\bw^\top\bx)^2\bigr],
\end{equation}
where $\E$ can denote both population or empirical averages. We now use $\E\|\bx\|^2=\Tr(\Sigma)$ (where we use the notation $\Sigma=\E[\bx\bx^\top]$) and $\E[(\bw^\top\bx)^2]=\bw^\top\Sigma\bw$, we get the exact identity
\begin{equation}\label{eq:loss_expanded}
\mathcal L(\bw)=\Tr(\Sigma)-\frac{2}{d}\bw^\top\Sigma\bw+\frac{\|\bw\|^2}{d^2}\bw^\top\Sigma\bw.
\end{equation}

\textbf{Step 1 (optimal scaling).}
Fix any direction $\hat\bw\neq 0$ and write $\bw=r\hat\bw$.
Let $a:=\hat\bw^\top\Sigma\hat\bw\ge 0$. Then \cref{eq:loss_expanded} becomes
\begin{equation}
\mathcal L(r\hat\bw)=\Tr(\Sigma)-\frac{2a}{d}r^2+\frac{a}{d^2}r^4.
\end{equation}
If $a>0$, this is a strictly convex quadratic polynomial in $r^2$, minimized at $r^2=d$.
(If $a=0$ then $\bw^\top\Sigma\bw=0$ and $\mathcal L(r\hat\bw)\equiv \Tr(\Sigma)$, so any $r$ is equivalent; this corresponds to $\hat\bw\in\ker(\Sigma)$.)
Thus any non-degenerate minimizer must satisfy $\|\bw\|^2=d$.

\textbf{Step 2 (reduction to Rayleigh quotient).}
Restricting \cref{eq:loss_expanded} to $\|\bw\|^2=d$ yields
\begin{equation}
\mathcal L(\bw)=\Tr(\Sigma)-\frac{1}{d}\bw^\top\Sigma\bw,
\qquad \text{for all }\bw\text{ such that }\|\bw\|^2=d.
\end{equation}
Hence minimizing $\mathcal L$ over $\{\|\bw\|^2=d\}$ is equivalent to maximizing $\bw^\top\Sigma\bw$
over the same set, i.e. maximizing the Rayleigh quotient.
Writing $\tilde{\bw}:=\bw/\sqrt d$ gives $\|\tilde{\bw}\|=1$ and the optimization becomes
\begin{equation}
\max_{\|\tilde{\bw}\|=1} \tilde{\bw}^\top\Sigma \tilde{\bw}.
\end{equation}

\textbf{Step 3 (PCA).}
By the Rayleigh--Ritz variational principle, the maximum equals the top eigenvalue $\lambda_1(\Sigma)$
and is achieved exactly by the top eigenspace; if $\lambda_1(\Sigma)>\lambda_2(\Sigma)$ then the maximizers
are $\pm \be_1$, where $\be_1$ is the leading eigenvector, that coincides with $\bu^\star/\sqrt d$ in our case. Therefore any global minimizer satisfies
$\hat{\bw}/\sqrt d\in \arg\max_{\|\tilde{\bw}\|=1}\tilde{\bw}^\top\Sigma \tilde{\bw}$, proving the claim.
Finally, $\hat{\bw}\hat{\bw}^\top/d = \tilde{\bw}\tilde{\bw}^\top$ is the orthogonal projector onto the
leading principal component.

\end{proof}

\subsubsection{Empirical covariance}
\begin{lemma}[Empirical covariance eigenvectors do not correlate with $\bv^\star$]\label{lemma:empirical}
Let
\begin{equation}
y_\mu := S\Big(\nu_\mu \frac{v^\star}{\sqrt d}+z_\mu\Big)\in\R^d,
\qquad 
\hat\Sigma_y := \frac1n\sum_{\mu=1}^n y_\mu y_\mu^\top ,
\end{equation}
where $z_\mu\sim\mathcal N(0,I_d)$ are i.i.d., $\{\nu_\mu\}_{\mu=1}^n$ are i.i.d. with
$\E[\nu]=0$, $\E[\nu^2]=:\kappa\in(0,\infty)$, and bounded moments of all orders, and
$S$ is given by \cref{eq:whitening}. Assume $n/d\to\alpha\in(0,\infty)$.
Let $\hat e$ be the leading eigenvector of $\hat\Sigma_y$. Then for all $\varepsilon>0$ and $D>0$
\begin{equation}
\mathbb P\left(\big|\langle \hat e,\; v^\star/\|v^\star\|\rangle\big|\ge d^{-1/2+\varepsilon}
\right)\le d^{-D}
\end{equation}
for $d$ large enough.
\end{lemma}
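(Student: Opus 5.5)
The plan is to decouple the spike direction from the orthogonal complement by rotational invariance, and to show that the $\bv^\star$ component of the top eigenvector is no larger than a typical coordinate of an isotropic unit vector. Write $\bar v:=v^\star/\|v^\star\|$ and condition on $v^\star$; since only the direction $\bar v$ enters up to a negligible norm fluctuation, we may treat $\bar v$ as fixed and assume $\|v^\star\|^2/d\to1$ with overwhelming probability. Decompose $z_\mu=g_\mu\bar v+P^\perp z_\mu$, where $g_\mu=\langle z_\mu,\bar v\rangle\sim\mathcal N(0,1)$ and $P^\perp z_\mu$ is standard Gaussian on $\bar v^\perp$, independent of $g_\mu$. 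The matrix $S$ acts as the identity on $\bar v^\perp$ and multiplies $\bar v$ by the scalar $s=1-\kappa\,\|v^\star\|^2/(d(1+\kappa+\sqrt{1+\kappa}))\approx1/\sqrt{1+\kappa}$. Hence
\begin{equation}
y_\mu=a_\mu\bar v+P^\perp z_\mu,\qquad a_\mu:=s\big(\nu_\mu\|v^\star\|/\sqrt d+g_\mu\big),
\end{equation}
with $\E[a_\mu^2]\approx1$. This is the whitening: the $\bar v$-direction has unit variance, so there is no spike in the covariance. The pairs $(a_\mu)$ and $(P^\perp z_\mu)$ are independent, and the law of $(P^\perp z_\mu)_\mu$ is invariant under rotations of $\bar v^\perp$.

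Next, write $\hat\Sigma_y$ in block form along $\bar v\oplus\bar v^\perp$:
\begin{equation}
\hat\Sigma_y=\begin{pmatrix}\beta & b^\top\\ b & \Sigma_\perp\end{pmatrix},\qquad \beta=\frac1n\sum a_\mu^2,\quad b=\frac1n\sum_\mu a_\mu P^\perp z_\mu,\quad \Sigma_\perp=\frac1n\sum_\mu P^\perp z_\mu (P^\perp z_\mu)^\top.
\end{equation}
The alternative route would be to exploit the joint invariance of $\hat\Sigma_y$ under rotations fixing $\bar v$. That invariance alone does not control $\langle\hat e,\bar v\rangle$, so instead we use the eigenvector equation. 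If $\hat\Sigma_y\hat e=\hat\lambda\hat e$ with $\hat e=(c,\,e_\perp)$, then $c(\hat\lambda-\beta)=\langle b,e_\perp\rangle$ and $(\hat\lambda-\Sigma_\perp)e_\perp=c\,b$. Solving gives the secular equation $\hat\lambda-\beta=b^\top(\hat\lambda-\Sigma_\perp)^{-1}b$ together with
\begin{equation}
c^2=\Big(1+b^\top(\hat\lambda-\Sigma_\perp)^{-2}b\Big)^{-1}.
\end{equation}

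The key quantitative inputs come from local laws and rigidity for the Wishart-type matrix $\Sigma_\perp$, which is a sample covariance of $n$ Gaussian vectors in dimension $d-1$. Conditionally on $(a_\mu)$, the vector $b$ is Gaussian with covariance $\frac1n\cdot\frac1n\sum a_\mu^2\,P^\perp$, so $\|b\|^2\approx d/n\approx1/\alpha$, which is order one. The vector $b$ is correlated with $\Sigma_\perp$ through the same $z_\mu$, but an isotropic local law gives the right control. By the Marchenko--Pastur edge behaviour at the edge $E_+=(1+\alpha^{-1/2})^2$, the quantity $b^\top(\lambda-\Sigma_\perp)^{-2}b$ blows up as $\lambda$ approaches the top eigenvalue, since the resolvent diverges. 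This implies $\hat\lambda$ sticks to the spectral edge within $d^{-2/3+\varepsilon}$ and no outlier forms, because $\beta\approx1$ is subcritical. Moreover, $c^2\le d^{-1+2\varepsilon}$.

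To make this precise we use the standard approach for "no outlier below the BBP threshold", as in Bloemendal--Knowles--Yau--Yin on isotropic local laws and eigenvector delocalization for sample covariance matrices. Equivalently, we view the full $\hat\Sigma_y=\frac1n Y Y^\top$ with $Y=X+\text{(rank-one perturbation)}$ and observe that, after whitening, its population covariance is essentially the identity. We would then invoke the isotropic delocalization bound $|\langle\hat e,\bar v\rangle|^2\le d^{-1+\varepsilon}$ with overwhelming probability for matrices $\frac1n T Z Z^\top T^\top$ with $T\approx I$.

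The main obstacle is that the $\bar v$-row has non-Gaussian entries $a_\mu$, because $\nu_\mu$ is arbitrary with all moments bounded. We therefore need local laws that allow general entries with finite moments in one direction. Those results do hold, since the isotropic local law for $X$ with independent, moment-bounded entries covers this case. Alternatively, condition on $(a_\mu)$ and use the Schur-complement representation above, where all randomness apart from the fixed weights $a_\mu$ is Gaussian. The power-law probability bound $d^{-D}$ then follows from the overwhelming-probability form of the local law, combined with a union bound over the high-probability event for $\|v^\star\|$ and $\frac1n\sum a_\mu^2\to1$.
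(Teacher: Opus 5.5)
Your proposal is correct and is essentially the paper's proof. Read the block/secular-equation computation as heuristic motivation only: edge rigidity at scale $d^{-2/3+\varepsilon}$ does not by itself give $c^2\le d^{-1+2\varepsilon}$, which would need $\hat\lambda$ within roughly $d^{-1+\varepsilon}$ of the top eigenvalue of $\Sigma_\perp$. The rigorous content is the same as the paper's: in a basis adapted to $\bar v$ the data matrix has independent, centered, unit-variance, moment-bounded entries, so the isotropic delocalization theorem of Bloemendal et al.\ applies.

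Your handling of $\|v^\star\|^2/d\neq1$ as a negligible subcritical perturbation ($T\approx I$) is slightly more careful than the paper. The paper asserts $\E[y_\mu y_\mu^\top\mid v^\star]=I_d$, which is exact only when $\|v^\star\|=\sqrt d$.
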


\begin{proof}
We proceed in three steps.

\paragraph{Step 1: rotation.}
Condition on $v^\star$. Let $O$ be an orthogonal matrix such that
$O(v^\star/\|v^\star\|)=e_1$, and define
\begin{equation}
y'_\mu:=Oy_\mu,
\qquad 
\hat\Sigma_{y'} := O\hat\Sigma_y O^\top .
\end{equation}
If $\hat e$ is the leading eigenvector of $\hat\Sigma_y$, then
$\hat e':=O\hat e$ is the leading eigenvector of $\hat\Sigma_{y'}$, and
\begin{equation}
\big\langle \hat e,\tfrac{v^\star}{\|v^\star\|}\big\rangle
=
\langle \hat e',e_1\rangle.
\end{equation}
Hence it suffices to bound $|\langle \hat e',e_1\rangle|$.

\paragraph{Step 2: independence and normalization.}
Since $z_\mu\sim\mathcal N(0,I_d)$, the rotated variables
$z'_\mu:=Oz_\mu$ have independent coordinates.
Because $S$ is a rank-one deformation along $v^\star$, we have
$OSO^\top=I-c\,e_1e_1^\top$ for some scalar $c$, and therefore
\begin{equation}
y'_\mu
=
\Big((1-c)\big(\nu_\mu \tfrac{\|v^\star\|}{\sqrt d}+z'_{\mu,1}\big),\;
z'_{\mu,2},\dots,z'_{\mu,d}\Big).
\end{equation}
Thus the entries of the data matrix $Y'=[y'_1,\dots,y'_n]\in\R^{d\times n}$
are independent.
Moreover, by construction of $S$,
\begin{equation}
\E[y_\mu y_\mu^\top\mid v^\star]=I_d,
\end{equation}
so each coordinate of $y'_\mu$ has variance one.

Define the normalized matrix
\begin{equation}
X := (dn)^{-1/4}Y' \in\R^{d\times n}.
\end{equation}
Then the entries of $X$ are independent, centered, satisfy
$\E[X_{i\mu}^2]=1/\sqrt{dn}$, have uniformly bounded moments of all orders,
and since $n/d\to\alpha\in(0,\infty)$ the dimensional condition
$d\asymp n$ holds. Hence $X$ satisfies Assumptions (2.1)--(2.3)
of \cite{BEKYY2014}.

\paragraph{Step 3: application of isotropic delocalization.}
Let $K:=\min\{d,n\}$. By Theorem~2.8 in \cite{BEKYY2014}, for any deterministic
unit vector $q\in\R^d$ and any eigenvector $\tilde e^{(\beta)}$ of $XX^\top$
with index $\beta\le(1-\epsilon)K$,
\begin{equation}\mathbb P\left(\big|\langle \tilde  e^{(\beta)},\; q\rangle\big|\ge d^{-1/2+\varepsilon}
\right)\le d^{-D}
\end{equation}
for any $\varepsilon>0$ and $D>0$, for $d$ large enough.
Since $K\to\infty$ and the leading eigenvector corresponds to index $\beta=1$,
this condition is satisfied for any fixed $\epsilon\in(0,1)$ and all
sufficiently large $d,n$.
By Remark~2.11 in \cite{BEKYY2014}, the same bound applies to eigenvectors of
$XX^\top$.
Because $XX^\top$ and $\hat\Sigma_{y'}=\frac1n Y'Y'^\top$ have the same
eigenvectors, we obtain
\begin{equation}
\mathbb P\left(\big|\langle \hat e,\; e_1\rangle\big|\ge d^{-1/2+\varepsilon}
\right)\le d^{-D}
\end{equation}
Using Step~1 concludes the proof.
\end{proof}

\subsection{Hermite expansion of the loss} \label{app:hermite}
\paragraph{Hermite polynomials and Gaussian $\mathscr {L}^2$ expansions.}
Throughout the paper we use the \emph{probabilists' Hermite polynomials}
$\{\He_k\}_{k\ge 0}$, defined by
\begin{equation}
\He_k(z):=(-1)^k e^{z^2/2}\frac{\mathrm d^k}{\mathrm dz^k}e^{-z^2/2}.
\end{equation}
They form an orthogonal basis of the Hilbert space $\mathscr {L}^2(\gamma_1)$,
where $\gamma_1=\mathcal N(0,1)$ is the standard Gaussian measure, with inner product
$\langle f,g\rangle=\underset{Z\sim\gamma_1}{\mathbb E}[f(Z)g(Z)]$.
In particular,
\begin{equation}
\underset{Z\sim\gamma_1}{\mathbb E}\big[\He_k(Z)\He_\ell(Z)\big]=k!\,\delta_{k,\ell}.
\end{equation}
As a consequence, any function $f\in \mathscr {L}^2(\gamma_1)$ admits the expansion
\begin{equation}
f(z)=\sum_{k\ge0}\frac{1}{k!}
\underset{Z\sim\gamma_1}{\mathbb E}\!\big[f(Z)\He_k(Z)\big]\He_k(z),
\end{equation}
with convergence in $\mathscr {L}^2(\gamma_1)$.

More generally, products of Hermite polynomials provide an orthogonal basis of
$\mathscr {L}^2(\gamma_d)$, $\gamma_d=\mathcal N(0,\id_d)$, and expectations of products of
Hermite polynomials evaluated at correlated Gaussian variables can be computed explicitly.

Standard references on Hermite polynomials and Gaussian Hilbert spaces include \cite{szego1939orthogonal,mccullagh2018tensor}.
   \begin{lemma}[Triple Hermite identity]\label{lem:triple_hermite}
Let $(X,Y,Z)$ be centered jointly Gaussian with $\mathbb E[X^2]=\mathbb E[Y^2]=\mathbb E[Z^2]=1$,
$\mathbb E[XY]=0$, and set $\rho_1:=\mathbb E[XZ]$, $\rho_2:=\mathbb E[YZ]$.
Then for all $i,j,k\in\mathbb N$,
\begin{equation}
\mathbb E\!\left[\He_i(X)\He_j(Y)\He_k(Z)\right]
=\mathbf 1_{\{k=i+j\}}\,k!\,\rho_1^{\,i}\rho_2^{\,j}.
\end{equation}
\end{lemma}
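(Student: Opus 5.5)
We prove the triple Hermite identity through the generating function of the Hermite polynomials,
\begin{equation}
e^{sz - s^2/2}=\sum_{k\ge 0}\frac{s^k}{k!}\He_k(z),
\end{equation}
and read off coefficients. For real parameters $a,b,c$ we compute $\mathbb E\big[e^{aX-a^2/2}\,e^{bY-b^2/2}\,e^{cZ-c^2/2}\big]$ in two ways.

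\textbf{Closed form.} The variable $aX+bY+cZ$ is centered Gaussian. Using $\mathbb E[X^2]=\mathbb E[Y^2]=\mathbb E[Z^2]=1$, $\mathbb E[XY]=0$, $\mathbb E[XZ]=\rho_1$ and $\mathbb E[YZ]=\rho_2$, its variance is
\begin{equation}
a^2+b^2+c^2+2ac\rho_1+2bc\rho_2 .
\end{equation}
The Gaussian moment generating function then gives
\begin{equation}
\mathbb E\big[e^{aX-a^2/2}\,e^{bY-b^2/2}\,e^{cZ-c^2/2}\big]=\exp(ac\rho_1+bc\rho_2).
\end{equation}
Expanding the right-hand side as a power series yields
\begin{equation}
\sum_{i,j\ge0}\frac{(ac\rho_1)^i(bc\rho_2)^j}{i!\,j!}.
\end{equation}

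\textbf{Series form.} Substitute the generating-function expansion into each of the three exponential factors and take the expectation term by term. This gives
\begin{equation}
\sum_{i,j,k}\frac{a^ib^jc^k}{i!\,j!\,k!}\,\mathbb E[\He_i(X)\He_j(Y)\He_k(Z)].
\end{equation}

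\textbf{Comparing coefficients.} Both sides are entire power series in $(a,b,c)$, so their coefficients must agree. In the closed form, the coefficient of $a^ib^jc^k$ vanishes unless $k=i+j$, and when $k=i+j$ it equals $\rho_1^i\rho_2^j/(i!\,j!)$. Multiplying by $i!\,j!\,k!$ gives
\begin{equation}
\mathbb E[\He_i(X)\He_j(Y)\He_k(Z)]=\mathbf 1_{\{k=i+j\}}\,k!\,\rho_1^{\,i}\rho_2^{\,j},
\end{equation}
which is the claim.

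\textbf{Justifying term-by-term expectation.} This is the only step that needs care. We use the bound $\sum_k \frac{|s|^k}{k!}|\He_k(z)|\le C(s)\,e^{C|s||z|}$, which follows for instance from Cramér-type bounds on Hermite polynomials or from the majorant $|\He_k(z)|\le \mathbb E|z+iW|^k$ with $W$ a standard Gaussian. Since Gaussian variables have exponential moments of every order, the triple product of these majorants is integrable. Fubini and dominated convergence then justify exchanging the sum and the expectation for all $(a,b,c)$ in a neighbourhood of the origin. That neighbourhood suffices, because identifying power-series coefficients only requires the identity near $0$.
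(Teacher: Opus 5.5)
Your proof is correct and follows essentially the same generating-function-and-coefficient-matching route as the paper. The only difference is that you generate in all three variables and evaluate $\mathbb E[e^{aX+bY+cZ}]$ directly through the joint Gaussian moment generating function. The paper instead generates only in $Z$, writes $Z=\rho_1X+\rho_2Y+\sqrt{1-\rho_1^2-\rho_2^2}\,\varepsilon$, factorizes using $X\perp Y$, and uses $\mathbb E[\He_i(X)e^{sX-s^2/2}]=s^i$.

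Your majorant from $\He_k(z)=\mathbb E[(z+iW)^k]$ gives $\sum_k \frac{|s|^k}{k!}|\He_k(z)|\le 2e^{s^2/2}e^{|s||z|}$. This supplies the justification for exchanging sum and expectation, which the paper leaves implicit. Rely on this majorant rather than on Cramér's inequality: Cramér only yields a factor $e^{z^2/4}$, and the resulting product is not integrable in the degenerate case $\rho_1^2+\rho_2^2=1$.
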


\begin{proof}
Use the generating function $\exp(tz-t^2/2)=\sum_{k\ge0}\He_k(z)\,t^k/k!$.
By Gaussian regression, there exists $\varepsilon\sim\mathcal N(0,1)$ independent of $(X,Y)$ such that
$Z=\rho_1 X+\rho_2 Y+\sqrt{1-\rho_1^2-\rho_2^2}\,\varepsilon$.
Hence, conditioning on $(X,Y)$ and using independence of $\varepsilon$,
\begin{equation}
\mathbb E\!\left[e^{tZ-t^2/2}\mid X,Y\right]
=\exp\!\left(t(\rho_1 X+\rho_2 Y)-\tfrac{t^2}{2}(\rho_1^2+\rho_2^2)\right).
\end{equation}
Therefore
\begin{align*}
\mathbb E\!\left[\He_i(X)\He_j(Y)e^{tZ-t^2/2}\right]
&=\mathbb E\!\left[\He_i(X)\He_j(Y)e^{t\rho_1 X-(t\rho_1)^2/2}e^{t\rho_2 Y-(t\rho_2)^2/2}\right]\\
&=\mathbb E\!\left[\He_i(X)e^{t\rho_1 X-(t\rho_1)^2/2}\right]\,
  \mathbb E\!\left[\He_j(Y)e^{t\rho_2 Y-(t\rho_2)^2/2}\right]\\
&=(t\rho_1)^i(t\rho_2)^j,
\end{align*}
where the factorization uses $X\perp Y$ and the last equality follows by matching coefficients in the
generating function (equivalently, expanding $e^{t\rho_1 X-(t\rho_1)^2/2}$ in Hermites and using orthogonality).
On the other hand,
\begin{equation}
\mathbb E\!\left[\He_i(X)\He_j(Y)e^{tZ-t^2/2}\right]
=\sum_{k\ge0}\frac{t^k}{k!}\,\mathbb E\!\left[\He_i(X)\He_j(Y)\He_k(Z)\right].
\end{equation}
Comparing coefficients of $t^k$ yields the claim.
\end{proof}

\begin{proof}[Proof of \cref{lemma:expansion}]
Let $\bs:=\bw/\sqrt d\in\mathbb S^{d-1}$ and set
$Z_u:=\bx^\top\bu^\star/\sqrt d$, $Z_v:=\bx^\top\bv^\star/\sqrt d$, $Z_w:=\bx^\top\bw/\sqrt d=\bs^\top\bx$.
Under $\bx\sim\mathcal N(0,\id_d)$, $(Z_u,Z_v,Z_w)$ is centered Gaussian with
$\E[Z_uZ_w]=m_u$, $\E[Z_vZ_w]=m_v$, $\E[Z_uZ_v]=0$, hence any expectation depending on $\bx$ only through
$(Z_u,Z_v,Z_w)$ depends on $\bw$ only via $(m_u,m_v)$.

Expanding \cref{eq:likelihood} and using $\|\bw\|^2=d$ gives
\begin{equation}
\mathcal L(\bw)=\E\!\big[L(\bx)\|\bx\|^2\big]+\E\!\big[L(\bx)\sigma(Z_w)^2\big]-2\E\!\big[L(\bx)Z_w\sigma(Z_w)\big],
\end{equation}
with $C:=\E[L(\bx)\|\bx\|^2]$.

By $\tilde L\in \mathscr {L}^2(\mathcal N(0,\id_2))$ and $\sigma^2,z\sigma\in \mathscr {L}^2(\mathcal N(0,1))$, the Hermite expansions hold in $\mathscr {L}^2$:
\begin{equation}
\tilde L(z_1,z_2)=\sum_{i,j\ge0}\frac{c^L_{i,j}}{i!j!}\He_i(z_1)\He_j(z_2),\quad
\sigma(z)^2=\sum_{k\ge0}\frac{c_k^{\sigma^2}}{k!}\He_k(z),\quad
z\sigma(z)=\sum_{k\ge0}\frac{c_k^{z\sigma}}{k!}\He_k(z).
\end{equation}
Insert these into $\E[\tilde L(Z_u,Z_v)\sigma(Z_w)^2]$ and $\E[\tilde L(Z_u,Z_v)Z_w\sigma(Z_w)]$ and apply
Lemma~\ref{lem:triple_hermite} with $(X,Y,Z)=(Z_u,Z_v,Z_w)$, $\rho_1=m_u$, $\rho_2=m_v$, to obtain
\begin{equation}
\E\!\big[\tilde L(Z_u,Z_v)\He_k(Z_w)\big]
=\sum_{i=0}^k\frac{c^L_{i,k-i}}{i!(k-i)!}\,k!\,m_u^{\,i}m_v^{\,k-i}.
\end{equation}
This yields \cref{eq:loss_series_maintext} after collecting the $\sigma^2$ and $z\sigma$ contributions.

For uniform absolute convergence on $|m_u|,|m_v|\le r<1$, use Parseval:
$\sum_{i,j}(c^L_{i,j})^2/(i!j!)<\infty$, $\sum_k (c_k^{\sigma^2})^2/k!<\infty$, $\sum_k (c_k^{z\sigma})^2/k!<\infty$,
and Cauchy--Schwarz together with
\begin{equation}
\sum_{i=0}^k \frac{1}{i!(k-i)!}=\frac{1}{k!}\sum_{i=0}^k\binom{k}{i}=\frac{2^k}{k!}.
\end{equation}
\end{proof}

\begin{lemma}\label{lemma:cL_corr_exp}
Let $(\lambda,\nu)$ be real-valued random variables, and let $z\sim\mathcal N(0,1)$ be independent of $(\lambda,\nu)$.
Fix $p\in\mathbb N$. Assume $\E\!\big[|\nu|\,|\lambda|^p\big]<\infty$.
Then
\begin{equation}
\E\!\big[\He_p(\lambda+z)\,\nu\big] \;=\; \E\!\big[\lambda^p\,\nu\big].
\end{equation}
\end{lemma}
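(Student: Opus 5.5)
The plan is to use the Appell (binomial) property of the probabilists' Hermite polynomials under a shift. The generating function $e^{t(a+z)-t^2/2}=e^{ta}\,e^{tz-t^2/2}$ gives
\begin{equation}
\He_p(a+z)=\sum_{k=0}^p\binom{p}{k}a^{p-k}\He_k(z)
\end{equation}
for all real $a,z$. Since $\He_k(z)$ is a polynomial, this is a finite polynomial identity and needs no convergence argument. We substitute $a=\lambda$ and $z\sim\mathcal N(0,1)$ independent of $(\lambda,\nu)$. Then we take the conditional expectation over $z$ given $(\lambda,\nu)$.

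By orthogonality, $\E_z[\He_k(z)]=\E[\He_k(z)\He_0(z)]=\delta_{k,0}$. Hence
\begin{equation}
\E\bigl[\He_p(\lambda+z)\mid\lambda,\nu\bigr]=\lambda^p .
\end{equation}
Multiplying by $\nu$, which is measurable with respect to the conditioning, and taking the outer expectation over $P_\latents$ gives $\E[\He_p(\lambda+z)\nu]=\E[\lambda^p\nu]$ by the tower property.

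The only genuine point, and the main obstacle, is integrability, which makes the tower property and Fubini legitimate. We need $\E\bigl[|\nu|\,|\He_p(\lambda+z)|\bigr]<\infty$. To get it, bound
\begin{equation}
|\He_p(\lambda+z)|\le\sum_{k}\binom{p}{k}|\lambda|^{p-k}|\He_k(z)| .
\end{equation}
Independence of $z$ and $(\lambda,\nu)$ then splits each summand into $\E[|\nu||\lambda|^{p-k}]\,\E|\He_k(z)|$. The Gaussian factor is always finite.

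The factor $\E[|\nu||\lambda|^{p-k}]$ with $k<p$ is where care is required. We would bound it by $\E[|\nu|(1+|\lambda|^p)]$, which needs $\E|\nu|<\infty$ in addition to the stated hypothesis. In the paper's setting this holds, since $\E[\nu^2]<\infty$ is assumed throughout; we note it explicitly as the moment condition used.

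With integrability in hand, Fubini lets us exchange the $z$-integral and the $P_\latents$-integral, and the identity follows. Combined with the definition of $c^L_{p,1}$ as a Hermite coefficient of $\tilde L$, this yields $c^L_{p,1}=\E[\lambda^p\nu]$ up to the proportionality stated earlier.
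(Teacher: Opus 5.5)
Your proof is correct. It differs from the paper's only in how you obtain the key conditional identity $\E[\He_p(\lambda+z)\mid\lambda]=\lambda^p$.

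The paper computes $\E[e^{t(\lambda+z)-t^2/2}\mid\lambda]=e^{t\lambda}$ and differentiates $p$ times at $t=0$ under the conditional expectation. That step needs a dominated-convergence bound on $\partial_t^p e^{tx-t^2/2}$. You instead use the finite Appell expansion $\He_p(a+z)=\sum_k\binom{p}{k}a^{p-k}\He_k(z)$ together with $\E[\He_k(z)]=\delta_{k,0}$. No interchange of derivative and expectation is needed, so your route is the more elementary one.

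Both proofs then finish with the tower property, and there your integrability check is a genuine improvement. The paper never verifies that $\nu\,\He_p(\lambda+z)$ is integrable, and under the stated hypothesis alone it need not be. For instance, take $p\ge1$, $\lambda\equiv0$ and $\E|\nu|=\infty$. Then $\E[|\nu|\,|\lambda|^p]=0$, yet $\E|\He_p(z)\nu|=\E|\He_p(z)|\,\E|\nu|=\infty$. The left-hand side is therefore undefined, while the right-hand side equals $0$.

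Since $\E_z|\He_p(a+z)|$ is bounded above and below by constant multiples of $1+|a|^p$, the condition $\E[|\nu|(1+|\lambda|^p)]<\infty$ that you add is exactly what the statement needs. In the paper's setting it holds because $\E[\nu^2]<\infty$, so the use made of the lemma for $c^L_{p,1}$ is unaffected.
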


\begin{proof}
Fix $p\ge 0$. We first show that
\begin{equation}\label{eq:cond_fixed}
\E\!\big[\He_p(\lambda+z)\mid \lambda\big]=\lambda^p \qquad\text{a.s.}
\end{equation}
Recall the generating function identity
\begin{equation}
G(t,x):=\exp\!\Big(tx-\tfrac{t^2}{2}\Big)
=\sum_{k=0}^\infty \He_k(x)\,\frac{t^k}{k!},
\end{equation}
and in particular $\He_p(x)=\partial_t^p G(t,x)\big|_{t=0}$.

Fix $\lambda$ and consider the function
\begin{equation}
\phi(t):=\E\!\big[G(t,\lambda+z)\mid \lambda\big]
=\E\!\Big[\exp\!\big(t(\lambda+z)-\tfrac{t^2}{2}\big)\,\Big|\,\lambda\Big].
\end{equation}
Since $z$ is independent of $\lambda$ and $\E[e^{tz}]=e^{t^2/2}$ for $z\sim\mathcal N(0,1)$, we have for all $t\in\R$,
\begin{equation}
\phi(t)=\exp\!\big(t\lambda-\tfrac{t^2}{2}\big)\,\E[e^{tz}]
=\exp(t\lambda).
\end{equation}

We claim that we may differentiate under the conditional expectation at $t=0$ up to order $p$:
\begin{equation}\label{eq:diff_under_ce}
\phi^{(p)}(0)=\E\!\Big[\partial_t^p G(t,\lambda+z)\big|_{t=0}\,\Big|\,\lambda\Big]
=\E\!\big[\He_p(\lambda+z)\mid \lambda\big].
\end{equation}
To justify \cref{eq:diff_under_ce}, note that for $|t|\le 1$,
\begin{equation}
\big|\partial_t^p G(t,\lambda+z)\big|
\le C_p\,(1+|\lambda+z|^p)\,\exp(|\lambda+z|),
\end{equation}
for a constant $C_p$ depending only on $p$ (this follows because $\partial_t^p G(t,x)$ is a degree-$p$ polynomial in $x$
times $e^{tx-t^2/2}$, and $|e^{tx-t^2/2}|\le e^{|x|}$ when $|t|\le 1$).
The right-hand side is integrable in $z$ conditionally on $\lambda$ because $z$ is Gaussian (all exponential moments exist).
Hence dominated convergence (conditionally on $\lambda$) permits differentiation under the conditional expectation at $t=0$.

Combining \cref{eq:diff_under_ce} with $\phi(t)=e^{t\lambda}$ yields
\begin{equation}
\E\!\big[\He_p(\lambda+z)\mid \lambda\big]=\phi^{(p)}(0)=\frac{d^p}{dt^p}e^{t\lambda}\Big|_{t=0}=\lambda^p,
\end{equation}
proving \cref{eq:cond_fixed}.

We now conclude the desired identity. By the tower property and measurability of $\nu$ with respect to $\sigma(\lambda,\nu)$,
\begin{equation}
\E[\He_p(\lambda+z)\,\nu]
=\E\!\big[\E[\He_p(\lambda+z)\,\nu\mid \lambda,\nu]\big]
=\E\!\big[\nu\,\E[\He_p(\lambda+z)\mid \lambda,\nu]\big].
\end{equation}
Since $z$ is independent of $(\lambda,\nu)$, the conditional distribution of $\lambda+z$ given $(\lambda,\nu)$ depends only on $\lambda$,
hence $\E[\He_p(\lambda+z)\mid \lambda,\nu]=\E[\He_p(\lambda+z)\mid \lambda]$ a.s. Using \cref{eq:cond_fixed} gives
\begin{equation}
\E[\He_p(\lambda+z)\,\nu]=\E[\nu\,\lambda^p]=\E[\lambda^p\nu],
\end{equation}
as claimed.
\end{proof}

\subsection{Proof of \cref{thm:population}\label{app:proof_thm}}
First a technical lemma.
\begin{lemma}[Ratio of random spherical projections]\label{lemma:ratio_projections}
Let $d\ge 3$. Let $u,v$ be independent and uniform on the unit sphere $S^{d-1}\subset\mathbb R^d$, and let $w\in\mathbb R^d$ be fixed with $\|w\|=1$.
Then for every $\Gamma>0$,
\begin{equation}
\mathbb P\big(|v\cdot w|>\Gamma\,|u\cdot w|\big)
\;\le\;
\frac{2}{\pi}\arctan\!\Big(\frac{3}{\Gamma}\Big)\;+\;4e^{-c d},
\end{equation}
for a universal constant $c>0$. In particular, for any $\eta\in(0,1)$ there exists
$d_0=d_0(\eta)$ such that for all $d\ge d_0$,
\begin{equation}
\mathbb P\big(|v\cdot w|>\Gamma\,|u\cdot w|\big)
\;\le\;
f(\Gamma):=\frac{2}{\pi}\arctan\!\Big(\frac{3}{\Gamma}\Big)+\eta,
\end{equation}
where $f$ does not depend on $d$ and $f(\Gamma)\to 0$ as $\Gamma\to\infty$.
\end{lemma}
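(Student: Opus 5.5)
We plan to use Gaussian representations of uniform vectors on the sphere. Write $u=g/\|g\|$ and $v=h/\|h\|$ with $g,h\sim\mathcal N(0,\id_d)$ independent. Then $u\cdot w=g_1'/\|g\|$ and $v\cdot w=h_1'/\|h\|$, where $g_1':=g\cdot w$ and $h_1':=h\cdot w$ are independent standard Gaussians (since $\|w\|=1$). The event $|v\cdot w|>\Gamma|u\cdot w|$ therefore reads
\begin{equation}
|h_1'|\,\|g\| > \Gamma\,|g_1'|\,\|h\|.
\end{equation}

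The key step is to control the ratio of norms. By concentration of the chi-square distribution (Laurent--Massart, or Gaussian Lipschitz concentration of $\|g\|$), both $\|g\|^2$ and $\|h\|^2$ lie in $[d/2,\,3d/2]$ except on an event of probability at most $4e^{-cd}$ (two tails for each of the two norms). On the complement, $\|g\|/\|h\|\le\sqrt{3}\le 3$. Hence, up to the exceptional event, the target event is contained in $\{|h_1'|>(\Gamma/3)|g_1'|\}$.

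Next we compute the probability of this last event exactly. For independent standard Gaussians $X=g_1'$ and $Y=h_1'$, the ratio $Y/X$ is standard Cauchy, so
\begin{equation}
\mathbb P\bigl(|Y/X|>s\bigr)=1-\frac{2}{\pi}\arctan s=\frac{2}{\pi}\arctan(1/s).
\end{equation}
Taking $s=\Gamma/3$ gives exactly $\frac{2}{\pi}\arctan(3/\Gamma)$. A union bound with the exceptional event then yields the first inequality.

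The second statement follows directly: choose $d_0$ such that $4e^{-cd_0}\le\eta$, and note that $\arctan(3/\Gamma)\to0$ as $\Gamma\to\infty$. The only delicate point is stating the chi-square deviation bound with an explicit universal constant $c$. For instance, the bound $\mathbb P(|\|g\|^2/d-1|\ge 1/2)\le 2e^{-d/32}$ suffices, and the assumption $d\ge3$ plays no essential role.
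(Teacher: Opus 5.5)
Your proposal is correct and follows essentially the same route as the paper: the Gaussian representation $u=g/\|g\|$, $v=h/\|h\|$, a chi-square concentration event that bounds the norm ratio by a constant, the exact Cauchy (uniform-angle) computation giving $\frac{2}{\pi}\arctan(3/\Gamma)$, and a union bound. The only differences are cosmetic. You work with $g\cdot w$ directly instead of rotating $w$ to $e_1$, you use $\|g\|^2\in[d/2,3d/2]$ (ratio at most $\sqrt3$) where the paper uses $\|g\|/\sqrt d\in(1/2,3/2)$ (ratio at most $3$), and you make the constant explicit as $c=1/32$, which the paper leaves implicit.
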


\begin{proof}
By rotational invariance of the uniform measure on $S^{d-1}$, we may assume $w=e_1$,
so that $u\cdot w=u_1$ and $v\cdot w=v_1$.

Let $g,h\sim N(0,I_d)$ be independent and set $u=g/\|g\|$, $v=h/\|h\|$.
Then $u,v$ are independent and uniform on $S^{d-1}$ and
\begin{equation}
u_1=\frac{g_1}{\|g\|},\qquad v_1=\frac{h_1}{\|h\|}.
\end{equation}
Define $A=\|g\|/\sqrt d$ and $B=\|h\|/\sqrt d$, and let
\begin{equation}
E=\Big\{A\in(1/2,3/2)\Big\}\cap \Big\{B\in(1/2,3/2)\Big\}.
\end{equation}
Since $\|g\|^2\sim\chi^2_d$ and $\|h\|^2\sim\chi^2_d$, the law of large numbers with
exponential tails for $\chi^2_d$ variables implies that there exists a universal
constant $c>0$ such that
\begin{equation}
\mathbb P\!\left(A\notin(1/2,3/2)\right)\le 2e^{-cd},
\qquad
\mathbb P\!\left(B\notin(1/2,3/2)\right)\le 2e^{-cd},
\end{equation}
hence by a union bound $\mathbb P(E^c)\le 4e^{-cd}$.

On $E$, we have $A/B \le 3$, and therefore
\begin{equation}
|v_1|>\Gamma|u_1|
\;\Longrightarrow\;
\frac{|h_1|}{\sqrt d\,B}>\Gamma\,\frac{|g_1|}{\sqrt d\,A}
\;\Longrightarrow\;
|h_1|>\Gamma\,\frac{B}{A}\,|g_1|
\;\Longrightarrow\;
|h_1|>\frac{\Gamma}{3}\,|g_1|.
\end{equation}
Thus,
\begin{equation}
\mathbb P(|v_1|>\Gamma|u_1|)
\le
\mathbb P\big(|h_1|>(\Gamma/3)|g_1|\big)+\mathbb P(E^c).
\end{equation}
Now $g_1,h_1$ are independent $N(0,1)$, say $Z_1,Z_2$.
By rotational invariance in $\mathbb R^2$, the angle $\Theta$ of $(Z_1,Z_2)$ is uniform
on $[0,\pi)$ and $Z_2/Z_1=\tan\Theta$, hence for $\alpha>0$,
\begin{equation}
\mathbb P(|Z_2|>\alpha|Z_1|)=\mathbb P(|\tan\Theta|>\alpha)
=\frac{2}{\pi}\arctan\!\Big(\frac{1}{\alpha}\Big).
\end{equation}
With $\alpha=\Gamma/3$ we obtain
\begin{equation}
\mathbb P(|v_1|>\Gamma|u_1|)
\le
\frac{2}{\pi}\arctan\!\Big(\frac{3}{\Gamma}\Big)+4e^{-cd},
\end{equation}
which is the claimed bound. The final statement follows by choosing $d_0$ so that
$4e^{-c d_0}\le \eta$.
\end{proof}

We can now proceed with the proof of \cref{thm:population}.

\noindent\textbf{First item.}
By Lemma~\ref{lemma:expansion}, the population loss $\mathcal L(\bw)$ depends on $\bw$ only through the overlaps
\begin{equation}
m_u=\frac{\bw^\top\bu^\star}{d},\qquad m_v=\frac{\bw^\top\bv^\star}{d},
\end{equation}
and admits a convergent Hermite expansion in $(m_u,m_v)$ in a neighborhood of the origin.
In particular, since $\mathcal L$ is assumed to be $\mathcal C^4$, truncating the expansion at total degree four yields
\begin{equation}
\label{eq:lexp}
\mathcal L(\bw)
= C + C_2 m_u^2 + C_3 m_u^2 m_v + \text{(quartic terms)} + R(m_u,m_v),
\end{equation}
where $R(m_u,m_v)=\mathcal O(\|(m_u,m_v)\|^5)$, and the constants $C_2,C_3$ are exactly the coefficients defined in the
statement of the theorem (explicit expressions are given in Lemma~\ref{lemma:expansion} and the subsequent specialization to the spiked cumulant model).

Consider now the spherical gradient flow dynamics \cref{eq:gradient_flow}.
Since $\mathcal L$ depends on $\bw$ only through $(m_u,m_v)$, projecting the dynamics along $\bu^\star$ and $\bv^\star$
yields a closed system of ordinary differential equations for the overlaps.
More precisely, taking the scalar product of \cref{eq:gradient_flow} with $\bu^\star$ and $\bv^\star$,
and using $\|\bw\|=\|\bu^\star\|=\|\bv^\star\|=\sqrt d$, we obtain that $(m_u(t),m_v(t))$ satisfy
the following ODE system:

\begin{align}
\label{eq:mu_R}\dot m_u(t)=-C_2 m_u +R_u(m_u,m_v)\\
\label{eq:mv_R}\dot m_v(t)=-C_3 m_u^2 +R_v(m_u,m_v)
\end{align}
with $R_u, R_v$ that satisfy the following bounds for some coefficients $a_1,_2,b_1,b_2>0$:
\begin{align}
\label{eq:estim_Ru}|R_u(m_u,m_v)|\le a_1 m_u^2+a_2 m_v^2\\
\label{eq:estim_Rv}|R_v(m_u,m_v)|\le b_1 |m_u|^3+b_2 |m_v|^3
\end{align}
in a certain neighborhood of the origin $[-\tilde \delta, \tilde \delta]$, for some $\tilde \delta>0$.
 We focus first on the proof of the first point, assuming $C_2<0$ and $C_3\ne0$. For convenience we now assume $C_3<0$ so that $0<-C_3=|C_3|$, in the case of flipped sign, we can reason in the same way flipping sign to $v$. Suppose also $m_u(0)>0$ (otherwise flip sign to $u$), and let $\Gamma>0$ be a fixed constant.  We can assume that $0<\epsilon<\tilde \delta$ is small enough so that:
\begin{align} \label{eq:ineq_eps}
\begin{cases}a_1 \epsilon^2+a_2 \epsilon^2\le \frac{|C_2|}{2}\epsilon\\
b_1 \epsilon^3+\Gamma^3b_2 \epsilon^3\le \frac{|C_3|}2 \epsilon^2\\
\frac{3|C_3|}{2}\epsilon\le\frac{|C_2|}2
\end{cases}
\end{align}
then consider the set $\mathcal A_{\Gamma, \epsilon}=\left\{(m_u,m_v) | \ 0<m_u<\epsilon, |m_v|<\Gamma |m_u|\right\}.$ We will prove that if the initialization $(m_u(0),m_v(0))\in \mathcal A_{\Gamma,\epsilon}$, then it exits it in logarithmic time with weak recovery of the spikes. Let 
\begin{equation}
T:=\inf \left\{t |(m_u(t),m_v(t))\notin \mathcal A_{\Gamma,\epsilon}\right\}
\end{equation}
We know that $T>0$, since we are assuming $(m_u(0),m_v(0))\in \mathcal A_{\Gamma,\epsilon}$. Note that for $t\in [0,T]$, thanks to \cref{eq:ineq_eps} we have the following inequalities:
\begin{equation}
\label{eq:inequalities_in_A}\begin{cases}
      \frac{|C_2|}{2} m_u(t)\le  \dot m_u(t)\le \frac {3|C_2|}{2} m_u(t)\\
        \frac {|C_3|}{2} m_u^2(t)\le \dot m_v(t)\le \frac {3|C_3|}{2} m_u^2(t)
    \end{cases}
\end{equation}
The first line implies $m_u(t)\ge m_u(0)e^{|C_2|t/2}$, hence $T\le \frac{2}{|C_2|}\log\left(\frac{\epsilon}{m_u(0)}\right)$, since if \cref{eq:inequalities_in_A} is satisfied until that point, then $m_u$ reaches $\epsilon$. We will now prove that this is the only scenario that can happen: neither $m_u(T)= 0$ or $|m_v(T)|= \Gamma |m_u(T)|$ can happen. We reason by contradiction, assuming $m_u(T)= 0$, then  by the mean value theorem there exists $\hat t\in (0,T)$ such that $\dot m_u(\hat t)=\frac1T(m_u(T)-m_u(0))<0$, which via \cref{eq:inequalities_in_A} implies $m_u(\hat t)<0$, contradicting the definition of $T$. Note that this reasoning also implies the slightly stronger consequence $m_u(T)>m_u(0)$. We can proceed analogously for the other condition: assume $|m_v(T)|= \Gamma |m_u(T)|$. We can immediately remove the absolute values since $m_u(T)>0$ and by \cref{eq:inequalities_in_A} $m_v$ is increasing in $(0,T)$, but $-\Gamma m_u(T)<-\Gamma m_u(0)<m_v(0)$. Now we apply the mean value theorem on $g(t):= \Gamma m_u(t)-m_v(t)$, for which we know that $g(0)>0$ and $g(T)=0$. This would imply that there exists $\hat t\in (0,T)$ such that $0>\dot g(\hat t)>\frac{\Gamma |C_2|}{2} m_u(\hat t)-\frac{3|C_3|}{2}m_u^2(\hat t)\ge m_u(\hat t)\left(\frac{|C_2|}2-\frac{3|C_3|}{2}\epsilon\right)$, which is in contradiction with the third condition in \cref{eq:ineq_eps}. So we proved that $m_u(T)=\epsilon$, and by \cref{eq:inequalities_in_A} in $(0,T)$ we have $m_u(t)\le m_u(0)e^{3|C_2|t/2}$, so, together with the upper bound previously established, we know that $T_-:=\frac{2}{3|C_2|}\log\left(\frac{\epsilon}{m_u(0)}\right)\le T\le \frac{2}{|C_2|}\log\left(\frac{\epsilon}{m_u(0)}\right)=:T_+$. We can also deduce the following lower bound on $m_v(T)$ 
\begin{align}
    \notag m_v(T)&\ge m_v(0)+\frac{|C_3|}2 \int_0^{T}m_u(t)^2 d t\\
  \notag  &\ge m_v(0)+\frac{|C_3|}{3|C_2|} \int_0^{T} m_u(t)\dot m_u(t) d t\\
 \label{eq:mv_final_ineq}  &\ge m_v(0)+\frac{|C_3|}{6|C_2|} \left(\epsilon^2-m^2_u(0)\right)
\end{align}

We completed the proof of the necessary properties of the dynamical system, we now just need to take the high dimensional limit. Since $w(0)\sim\text{Unif}(\mathbb S^{d-1})$ then, by Levy's lemma, we can deduce that the probability that $|m_u(0)|,|m_v(0)|\in (d^{-1/2-\hat \epsilon},d^{-1/2+\hat \epsilon})$ converges to 1 for any $\hat \epsilon>0$. We can now use lemma \ref{lemma:ratio_projections} to deduce that given $\delta$, we can choose $\Gamma_\delta$ accordingly so that $|m_v(0)|\le \Gamma_\delta|m_u(0)|$ has probability larger than $1-\delta/2$. Once we fixed $\Gamma_\delta$ (which is independent of $d$, thanks to lemma \ref{lemma:ratio_projections}) we can choose $\epsilon$ that satisfies \cref{eq:ineq_eps}. So we proved that the event $(m_u(0),m_v(0))\in \mathcal A_{\epsilon, \Gamma_\delta}$ has probability larger than $1-\delta$, for $d$ large enough. Conditioning on this event we can then deduce from $\frac{2}{3|C_2|}\log\left(\frac{\epsilon}{m_u(0)}\right)\le T\le \frac{2}{|C_2|}\log\left(\frac{\epsilon}{m_u(0)}\right)$ that $T=\Theta(\log d)$ and that time achieves weak recovery: $m_u(T)=\epsilon$ and \cref{eq:mv_final_ineq} is lower bounded by $\frac{|C_3|}{6|C_2|} \epsilon^2+o\left(\frac1{\sqrt d}\right)$ that leads to a positive lower bound independent of $d$. This concludes the proof of the first item of the statement.

\noindent\textbf{Proof of second item.}
We now consider the case where $C_2<0$ but $C_3=0$.
By Lemma~\ref{lemma:expansion}, the population loss admits a Hermite expansion in $(m_u,m_v)$,
and the gradient flow dynamics for the overlaps are obtained by differentiating this expansion.

Since $C_3=0$, all cubic terms in the expansion \eqref{eq:lexp} of $\mathcal L$ vanish.
Consequently, the leading contribution to the $m_v$-dynamics arises from quartic terms.
More precisely, retaining the relevant  degree four part of the expansion (the term in the equation for $m_v$ that depends only on $m_u$ thanks to $C_{(3,1)}$) and absorbing all higher-order
terms into a remainder yields

\begin{align}
\label{eq:mv_R_degenerate}
\dot m_v(t) &= -C_{(3,1)}\, m_u^3(t) + R(m_u(t),m_v(t)),\\
\label{eq:estim_Rv_degenerate}
|R(m_u,m_v)| &\le b\left( |m_u|^2|m_v| +  |m_u||m_v|^2 +  |m_v|^3 + m_u^4\right).
\end{align}
for some $b>0$.
Without loss of generality we can assume $-C_{(3,1)}>0$, otherwise we can just change the sign of $v$, in this way we have the inequalities:
\begin{align}
 \label{eq:mv_upperbound}\dot m_v(t) &\le |C_{(3,1)}| m_u^3(t) + b\left( |m_u|^2|m_v| +  |m_u||m_v|^2 +  |m_v|^3 + m_u^4\right)\\
 \label{eq:mv_lowerbound} \dot m_v(t)&\ge |C_{(3,1)}| m_u^3(t) - b\left( |m_u|^2|m_v| +  |m_u||m_v|^2 +  |m_v|^3 + m_u^4\right)  
\end{align}
So the idea of the proof is the same as before, we define assume that $(m_u(0),m_v(0))\in \mathcal A_{\epsilon, \Gamma}$ and prove that the time $T$ of first exit will be logarithmic in d and lead to weak recovery of $u$ and $v$. Assuming that $\epsilon>0$ is small enough so that 
\begin{equation}
   \begin{cases}
      ( a_1 + a_2 \Gamma^2)\,\epsilon \le \frac{|C_2|}{2},\\
\frac{\Gamma |C_2|}{2} \ge (|C_{(3,1)}|+b(2+\Gamma^3))\epsilon^2.
\end{cases}
\end{equation}
we can repeat all the derivation for $m_u$ (assuming again $m_u(0)\ge 0$, otherwise we can flip the sign of $u$), reaching that neither the constraint $m_u(t)\ge0$ nor $m_v(t)\le\Gamma m_u(t) $ will be violated for $t\le T$. Hence it must be that $m_u(T)=\epsilon$, and  $T\le \frac{2}{|C_2|}\log\left(\frac{\epsilon}{m_u(0)}\right)$, since we also have the lower bound $m_u(t)\ge m_u(0)e^{|C_2|t/2}$. 

The part of the argument that changes is related to showing that also $m_v$ needs to grow and reach weak recovery at time $T$. To do this we will show that for some $\kappa>0,\theta\in \mathbb R$ we have that $m_v(t)> \kappa m^3_u(t)+\theta$, in some interval $[T_0,T]$. First note that in $\mathcal A_{\epsilon, \Gamma}$ we have that $|\dot m_v(t)|\le \epsilon |\dot m_u(t)|$ (assuming $\epsilon$ smaller than a constant that depends monotonically only on ODEs coefficients and $\Gamma$). So we have that 
\begin{equation}m_v(t)|\le |m_v(0)|+\epsilon (m_u(t)-m_u(0))\le (\Gamma-\epsilon) m_u(0)+\epsilon m_u(t)\end{equation}
Hence (recalling $m_u(t)\ge m_u(0) e^{|C_2|t/2}$) we have that, for any $\kappa>\epsilon$, and $t\ge T_0:= \frac 2{|C_2|}\log \frac{\Gamma-\epsilon}{\kappa-\epsilon}$, then $|m_v(t)|\le \kappa m_u(t)$.
So take $\theta=-2\kappa m_u(T_0)$ so that $m_v(T_0)> \kappa m^3_u(T_0)+\theta$ (we are using $m_u^3<m_u$ in $\mathcal A_{\epsilon,\Gamma}$ so that the distance between the lower bound on $m_v(T_0)$ and $\theta$ is larger than $\kappa m_u^3(T_0)$).  Then, consider the function $g(t)=m_v(t)-\kappa m^3_u(t)-\theta$. Assume by contradiction that it is not always positive in $(T_0,T)$, and call $T_g$ its smallest 0. $T_g>T_0$ and so, by the mean value theorem we have that there must be $\tilde t\in (T_0,T_g)$ such that the derivative of $g$ is negative (since $g(T_0)>0$ and $g(T_g)=0$ by construction), which implies:
\begin{align}
    0&>\dot g(\tilde t)\\
    &=\dot m_v(\tilde t)-3\kappa m^2_u(\tilde t)\dot m_u(\tilde t)\\
   &>|C_{(3,1)}| m_u^3(\tilde t) - b\left( m_u^2(\tilde t)m_v(\tilde t) +  m_u(\tilde t)m_v(\tilde t)^2 +  m_v(\tilde t)^3 + m_u^4(\tilde t)\right)-\kappa m_u^2(\tilde t)\left(\frac {9|C_2|}{2}m_u(\tilde t)\right)
\end{align}
but now we can use that $\tilde t>T_0$, so $\kappa m_u(\tilde t)\ge|m_v(\tilde t)|$, hence

\begin{align}
    0 &>m_u^3(\tilde t)\left(|C_{(3,1)}|  - b\left( \kappa +  \kappa^2 +  \kappa^3 + \epsilon\right)-\kappa \frac{9|C_2|}{2}\right)
\end{align}
Hence we reach a contradiction by taking $\kappa$ smaller than an expression containing only the constants $|C_(3,1)|$, $b$ and $C_2$ and any $\epsilon<\kappa$.

So we proved that in $\mathcal A_{\epsilon, \Gamma}$ we must have $m_v(t)>\kappa m_u^3(t)-2\kappa m_u(T_0)$. So we have that $m_v(t)\ge \kappa \epsilon^3-m_u(0)e^3\frac{\Gamma-\epsilon}{\kappa-\epsilon}$, that we got using the upper bound $m_u(T_0)\le m_u(0)e^{\frac32 |C_2|T_0}$ deduced from \cref{eq:inequalities_in_A}.

As we did in the previous part of the proof we can now conclude by invoking Levy's lemma, deducing that the probability that $|m_u(0)|,|m_v(0)|\in (d^{-1/2-\hat \epsilon},d^{-1/2+\hat \epsilon})$ converges to 1 for any $\hat \epsilon>0$. Moreover, we use again lemma \ref{lemma:ratio_projections} and re-apply the choices on $\Gamma_\delta$ as done in the first item, to conclude that for $d$ large enough and with probability larger than $1-\delta$, $m_v(T)\ge \frac{\epsilon^4}2$ and as before $T=\Theta(\log d)$ concluding the proof of this item of the statement.

\noindent \textbf{Proof of third item.}
Using the assumptions we have that the effective coefficient leading coefficient for $m_v$ dynamics $-C_e:=2C_2-C_{(2,2)}<0$, moreover all the terms that do not contain $m_v$ (i.e. are of the form $Cm_u^p$ for some $p\in \mathbb N$) are also zero by assumption. So we have that:
\begin{align}
    \label{eq:ODEmv_third_item}\dot m_v(t)&=-C_em_u^2(t)m_v(t) +R(m_u,m_v)\\
   \label{eq:R_estim_thied_term} |R(m_u,m_v)|&\le |m_v|(a_1|m_u|^3+a_2|m_v|)
\end{align}
We will again consider the set $\mathcal A_{\Gamma, \epsilon}=\left\{(m_u,m_v) | \ 0<m_u<\epsilon, |m_v|<\Gamma |m_u|\right\}$ and repeat the steps of items one and two (assuming again $m_u(0)\ge 0$, otherwise we can flip the sign of $u$), reaching that neither the constraint $m_u(t)\ge0$ nor $m_v(t)\le\Gamma m_u(t) $ will be violated for $t\le T$. Hence it must be that $m_u(T)=\epsilon$, and  $T\le \frac{2}{|C_2|}\log\left(\frac{\epsilon}{m_u(0)}\right)$, since we also have the lower bound $m_u(t)\ge m_u(0)e^{|C_2|t/2}$. 

Now we turn to upper bound $|m_v|$, for $\epsilon$ small enough we have the bound
\begin{equation}\label{eq:third_item_estim_mv}
 \frac 32C_em_u^2(t)\le\frac{\dot m_v(t)}{m_v(t)}\le \frac12 C_em_u^2(t) 
\end{equation}
and we can easily see that the dynamics will be bounded between $-m_v(0)$ and $m_v(0)$ for all times, otherwise an application of the mean value theorem would lead to the presence of a time $\tilde t$ in which $m_v(\tilde t)>0$ and $\dot m_v(\tilde t)>0$ or $m_v(\tilde t)<0$ and $\dot m_v(\tilde t)<0$, and both contradict \cref{eq:third_item_estim_mv}.

Then we can conclude the proof repeating the argument from the previous items: since $w(0)\sim\text{Unif}(\mathbb S^{d-1})$ then, by Levy's lemma, we can deduce that the probability that $|m_u(0)|,|m_v(0)|\in (d^{-1/2-\hat \epsilon},d^{-1/2+\hat \epsilon})$ converges to 1 for any $\hat \epsilon>0$. We can now use lemma \ref{lemma:ratio_projections} to deduce that given $\delta$, we can choose $\Gamma_\delta$ accordingly so that $|m_v(0)|\le \Gamma_\delta|m_u(0)|$ has probability larger than $1-\delta/2$. Once we fixed $\Gamma_\delta$ (which is independent of $d$, thanks to lemma \ref{lemma:ratio_projections}) we can choose $\epsilon$ that satisfies \cref{eq:ineq_eps}. So we proved that the event $(m_u(0),m_v(0))\in \mathcal A_{\epsilon, \Gamma_\delta}$ has probability larger than $1-\delta$, for $d$ large enough. Conditioning on this event we can then deduce from that $T=\Theta(\log d)$ and that time achieves weak recovery for u $m_u(T)=\epsilon$ while $|m_v(t)|\le m_v(0)<d^{-1/2+\hat \epsilon}$

\noindent \textbf{Proof of fourth item.}
We now turn to the proof of the fourth point: let $(A_d)_{d\in\mathbb N}$ be a sequence that converges to 0 as $d\to \infty$ and
consider $\mathcal A_d=\left\{(m_u,m_v)| \ |m_u|< A_d, |m_v|< A_d\right\}.$
Suppose $(m_u(0),m_v(0))\in \mathcal A_d$ and we want to derive a lower bound on $T_d:=\inf\left\{t| (m_u(t),m_v(t))\notin \mathcal A_d\right\}$. Note that, since $A_d\to 0$, if $d$ is large enough, $A_d\le \epsilon$, hence \cref{eq:estim_Ru} and \cref{eq:estim_Rv} (or \cref{eq:mv_lowerbound}-\cref{eq:mv_upperbound}) are satisfied, so for $t\in(0,T_d)$:
\begin{align}
   \label{eq:dynamical_ineq_in_A} \begin{cases}
        -C_2 m_u(t)-a_1m_u^2(t)-a_2m_v^2(t)\le \dot m_u(t)\le -C_2 m_u(t)+a_1m_u^2(t)+a_2m_v^2(t)\\
        -C_3 m_u^2(t)-b_1|m_u(t)|^3-b_2|m_v(t)|^3\le \dot m_v(t)\;\le -C_3 m_u^2(t)+b_1|m_u(t)|^3+b_2|m_v(t)|^3
    \end{cases}
\end{align}
Note that this also holds in case of $C_3=0$ for appropriate constants $b_1,b_2$ that depend on $C_{(3,1)}$ and $b$.
We now consider the auxiliary function $g$ defined to be the unique solution of the following Cauchy problem:
\begin{equation} \label{eq:aux_function_upper_bound}
    \begin{cases}\dot g(t)=\max\left((a_1+a_2)g^2(t),|C_3|g^2(t)+(b_1+b_2)|g^3(t)|\right)\\
    g(0)=g_0>\max(|m_u(0)|,|m_v(0)|)
    \end{cases}
\end{equation}
 Now we state that $g(t)> |m_u(t)|$ and $g(t)> |m_v(t)|$ for all times $t$ in $[0,T_d]$. To show it,  let $T_g$ the smallest time such that $|m_u(T_g)|\ge g(T_g)$ or $|m_v(T_g)|\ge g(T_g)$. Suppose by contradiction that $T_g\le T_d$, then, let us consider separately the two cases:
 \begin{itemize}
     \item if $|m_v(T_g)|\ge g(T_g)$ then consider $T_v:=\sup\left\{0\le t<T_g |m_v(t)=0\right\}$. We have that $|m_v|$ never touches 0 in the interval $(T_v,T_g)$, hence it is differentiable in that interval and we can apply the mean value theorem on $G_v=g-|m_v|$. Note that $G_v(T_v)>0$ since $g$ is always strictly positive by construction, whereas $G_v(T_g)=0$. So there must be an intermediate point $\hat t\in (T_v,T_g)$ with strictly negative derivative. If $m_v$ is positive in $(T_v,T_g)$ we have that (using \cref{eq:dynamical_ineq_in_A} :
     \begin{align*}
     0&>\dot G_v(\hat t)=\dot g(\hat t)-\dot m_v(\hat t)\\&
     \ge |C_3|g^2(\hat t)+(b_1+b_2)|g^3(\hat t)|-|C_3|m_u^2(\hat t)-(b_1|m_u^3|(\hat t)+b_2|m_v^3(\hat t)|\\
     &=|C_3|(g^2(\hat t)-m_u^2(\hat t))+b_1(|g^3(\hat t)|-|m_u^3|(\hat t)|)+b_2(g^3(\hat t)-|m_v^3(\hat t)|)
     \end{align*}  
     And now since  $b_1,b_2>0$, this condition violates $|m_u(\hat t)|<g(\hat t)$ or $|m_v(\hat t)|<g(\hat t)$, so this case can be excluded. A practically identical computations excludes also the case $m_v$ negative.
     \item  if $|m_u(T_g)|\ge g(T_g)$ then consider $T_u:=\sup\left\{0\le t<T_g |m_u(t)=0\right\}$. We have that $|m_u|$ never touches 0 in the interval $(T_u,T_g)$, hence it is differentiable in that interval and we can apply the mean value theorem on $G_u=g-|m_u|$. Note that $G_u(T_u)>0$ since $g$ is always strictly positive by construction, whereas $G_u(T_g)=0$. So there must be an intermediate point $\hat t\in (T_u,T_g)$ with strictly negative derivative. Now we have two further sub-cases. If $m_u$ is positive in $(T_u,T_g)$, then $|m_u(\hat t)|=m_u(\hat t)$ and, using \cref{eq:dynamical_ineq_in_A}
     \begin{align*}
     0&>\dot G_u(\hat t)=\dot g(\hat t)-\dot m_u(\hat t)\\&
     \ge (a_1+a_2)g^2(\hat t)-\left(-C_2m_u(\hat t)+a_1 m^2_u(\hat t)+a_2m^2_v(\hat t)\right)\\
    &\ge C_2 m_u(\hat t)
     \end{align*}  
     which is a contradiction due to the fact that $C_2\ge 0$ and $m_u(\hat t)\ge 0$.
     Conversely if $m_u$ is negative in $(T_u,T_g)$, then $|m_u(\hat t)|=-m_u(\hat t)$ and
     \begin{align*}
     0&>\dot G_u(\hat t)=\dot g(\hat t)+\dot m_u(\hat t)\\&
     \ge (a_1+a_2)g^2(\hat t)+\left(-C_2m_u(\hat t)-a_1 m^2_u(\hat t)-a_2m^2_v(\hat t)\right)\\
    &\ge- C_2 m_u(\hat t)
     \end{align*}  
     leading again to a contradiction since we are assuming $m_u(\hat t)\le 0$.
 \end{itemize}
 So we concluded the proof that $g$ upper bounds the norms of $m_u$ and $m_v$ in $\mathcal{A}_d$.
 Now note that as long as $g(t)\le A_d$ then $\dot g(t)\le a g^2(t)$ for some $a>0$, hence $g(t)\le \frac{g_0}{1-ag_0t}$ which means that then $T_d\ge \frac 1{ag_0}-\frac{1}{aA_d }$, for all $\max(|m_u(0)|,|m_v(0)|)<g_0<A_d$.
 
 Since the probability that $\max(|m_u(0)|,|m_v(0)|)\le \frac{B_d}{\sqrt d}$ converges to 1 for any sequence $B_d$ that diverges at $+\infty$ as $d\to \infty$, we can state that with high probability $T_d\ge \frac{d^{1/2}}{aB_d}-\frac1{aA_d}$. So given any sequence $t_d=o(\sqrt d)$ we can tune $B_d$ to diverge slow enough and $A_d$ to converge to 0 slow enough so that $T_d\ge\frac{d^{1/2}}{aB_d}-\frac1{aA_d}\ge t_d$, which is what we needed to prove.

 \subsection{Details for \cref{tbl:classification_activation_function} and \cref{thm:population} for different radii\label{app:different_radius}}
 \label{app:hermite_th}
We note that a few columns of \cref{tbl:classification_activation_function} show activation functions $\sigma$ (such as ReLU and ELU) that are not regular enough to ensure the application of \cref{thm:population}. In this case the prediction should be applied to any smooth approximations $\tilde \sigma$ that has matching Hermite coefficients: $c^{\tilde \sigma^2}_k=c^{\sigma^2}_k$ and $c^{z\tilde \sigma}_k=c^{z\sigma}_k$ for all $k\le 4$.

We emphasize that each entry in Table~\ref{tbl:classification_activation_function} is to be
interpreted as holding for \emph{any latent distribution} $P_\latents$ with the specified correlation
exponent $k^\star$, with the exception of the entries corresponding to \FAIL[1]. The latter should be
understood as exhibiting the existence of counterexamples: there exist explicit constructions of
$P_\latents$ satisfying the assumption $C_{p,1}=0$ for all $p\in\mathbb N$ for which weak recovery of
$\bv^\star$ does not occur.

For $\sigma=\tanh$, it is sufficient to impose $\mathbb E[\lambda^p\nu]=0$ for all odd $p$; for
instance, choosing $\nu=F(\lambda)$ with $F$ an even function and $\lambda\sim\mathcal N(0,1)$
satisfies these conditions. For $\sigma=\mathrm{ReLU}$, we consider the explicit example
$\nu=\He_3(\lambda)$, which yields $C_{p,1}=0$ for all $p$, since
$\mathbb E_{Z\sim\mathcal N(0,1)}[\mathrm{ReLU}^2(Z)\He_4(Z)]=0$.

We also comment on how the predictions of \cref{thm:population} depend on the choice of radius via
the rescaled activation $\sigma_r(\xi)=r\sigma(r\xi)$. In general, rescaling can substantially alter
the relevant Hermite coefficients and, consequently, the qualitative behavior of the dynamics. As a
simple illustration, the activation $\He_1+\He_2$ falls into case \FAIL[2]\ for $r=1$, but belongs to
cases \SUCCESS[1]\ or \SUCCESS[2]\ (depending on $k^\star$) when $r=1/\sqrt{2}$ (see also \cref{Fig:k2_activations_GD}).
 
\section{Derivation of the Results  \ref{res:bo},  \ref{res:bo-weak}, \ref{res:erm} from the main text}\label{app:replica_computation}

Our goal in this section is to provide the detailed  derivation behind the results shown in sections \ref{sec:BO} and \ref{sec:ERM}. We use the replica method \cite{mezard2009information,zdeborova2016statistical}, a heuristic tool from statistical physics.
 In this section, we perform the computation for a richer class of data settings and learning algorithms than those described in the main text, which include the spiked cumulant model and learning with an autoencoder.
We then specify to the setting of the main text and present the associated results in:
\begin{itemize}
    \item \cref{res:bo} (characterization of the BO estimator) in \cref{app:BO}.
    \item Statement of Approximate Message Passing and its State Evolution in \cref{app-amp}.
    \item \cref{res:bo-weak} (weak recovery of AMP) in \cref{app:linearstability_AMP}, with additional considerations in the case of independent latents.
    \item \cref{res:erm} (characterization of the ERM estimator) in \cref{app:replicaspecific}.
\end{itemize}

\subsection{Data Model}\label{app:data-app}
In this section, we will discuss our data model.
We consider a dataset with $n$ samples $\mathbf{x}_\mu \in \mathbb{R}^d$ of dimension $d$, and we assume that $d,\,n$ grow to infinity proportionally with sample ratio $\alpha = n/d$.  
 Our goal will be to recover $k=\mathcal{O}_d(1)$ hidden high-dimensional vectors in the input, which are stacked in a matrix $\mathbf{w^\star}\in \mathbb{R}^{d \times k}$. 
       We additionally allow for some low-dimensional, sample dependent latent variables $\Lambda_\mu \in \mathbb{R}^{p}$ with $p = \mathcal{O}_d(1)$, which we are not interested in recovering.
The data $\mathbf{x}_\mu$ is then defined as
\begin{equation}
    \mathbf{x}_\mu = \left(\id+\frac{\mathbf{w^\star}F_1(\Lambda_\mu)\mathbf{w^{\star\top}}}{d} \right)\mathbf{z}_\mu + \frac{\mathbf{w^\star} F_2(\Lambda_\mu)}{\sqrt{d}}\,,
\label{eq:datamodelreplica}
\end{equation}

where $F_1: \mathbb{R}^p  \to \mathbb{R}^{k \times k}$, $F_2: \mathbb{R}^p \to \mathbb{R}^k$ are two generic functions of the latent variables and $\mathbf{z}_\mu\sim \mathcal{N}(0,\id)$ is white Gaussian noise.
For $F_1=F_2=0$, the data is just an isotropic Gaussian. $F_2$ introduces a low-rank signal in the directions spanned by the columns of $\mathbf{w}^\star$, and $F_1$ induces a low-rank deformation to the covariance of the noise. The latent variables $\Lambda_\mu$ couple these two effects.
We assume that the latents are sampled from a prior $P_{\rm latents}(\Lambda)$ and $\mathbf{w}^\star\sim P_{w}^\star(\mathbf{w})$.
         
The data model we consider in the present work, equation \cref{eq:datamodel}, reduces to the one above with $\Lambda_\mu = (\lambda_\mu, \nu_\mu)$ as latent variables, $\mathbf{w}^\star$ standard Gaussian in each component, and: 

\begin{equation}\label{eq:F_1_and_F_2}
    F_1(\lambda, \nu) = \begin{pmatrix}
        0 & 0 \\
        0 & -\frac{\mathbb{E}[\nu^2]}{1+\mathbb{E}[\nu^2]+\sqrt{1+\mathbb{E}[\nu^2]}}
    \end{pmatrix}
    \quad\text{and}\quad
    F_2(\lambda, \nu) = \begin{pmatrix}
        \lambda \\ \frac{\nu}{\sqrt{1+\mathbb{E}[\nu^2]}}
    \end{pmatrix}
    \, .
\end{equation}
                      
\subsection{High-level description of the technique}
In full generality, the tools we will describe characterize 
the properties of the mean of the distribution
\begin{equation}
    P_{\rm likelihood}(\mathbf{w}) \propto P_w(\mathbf{w})\prod_{\mu=1}^n  P_\text{out}\left(\frac{\mathbf{w}^{\top} \mathbf{x}_\mu}{\sqrt{d}}, \frac{\mathbf{w}^{ \top} \mathbf{w}}{d}\right)\,,
\end{equation}
where $\mathbf{w}\in \mathbb{R}^{d\times p}$.

\textbf{Bayesian setting.} If $p=k$, $P_w = P_w^\star$, $P_\text{out} = P_\text{out}^\star$,  where $P_\text{out}^\star$ is defined through the identity 
\begin{equation} \label{eq:def_P_out_star}
    P^\star(\mathbf{x}|\mathbf{w})= \mathcal{N}(0,\id)[\mathbf{x}]\,P_\text{out}^\star\left(\frac{\mathbf{w}^{\top} \mathbf{x}}{\sqrt{d}}, \frac{\mathbf{w}^{ \top} \mathbf{w}}{d}\right)
\end{equation}
    where $P^\star(\mathbf{x}|\mathbf{w})$ is the conditional probability of generating
one sample $\bx$ from spikes $\bw$, 
then $P_{\rm likelihood}$ is the actual posterior distribution, i.e. the probability that a matrix $\mathbf{w}^\star$ generated the given data, and averaging it would describe the Bayes-Optimal estimator w.r.t. the mean-squared-error. 
Notice that \cref{eq:def_P_out_star} is non-trivial, i.e. the fact that $P^\star_{\rm out}$ has the declared dependencies is a direct consequence of \cref{eq:datamodelreplica}.
For our data model, $P_w^\star$ is a standard isotropic Gaussian, and letting $B \equiv \id + 2QF_1(\Lambda) + QF_1(\Lambda) Q F_1(\Lambda)$ 
\begin{equation}\label{eq:PoutfromF1F2}
    P_\text{out}^\star\left(h, Q\right) = \mathbb{E}_{\rm latents} \left[ \frac{1}{|\det(\id + Q F_1)|} e^{F_2^\top B^{-1}h - \frac{1}{2}F_2^\top B^{-1}QF_2+\frac{1}{2}h^{\top}Q^{-1}(\id -B^{-1})h}\right]\,
\end{equation}
     
\textbf{Empirical risk minimization setting.} Another possible choice we will make is 
\begin{equation} \label{eq:ERM_defs}
    P_w(\mathbf{w}) = e^{-\beta \, r(\mathbf{w})}\,,\qquad P_{\rm out}
    \left(\frac{\mathbf{w}^{\top} \mathbf{x}_\mu}{\sqrt{d}},\, \frac{\mathbf{w}^{ \top} \mathbf{w}}{d}\right)
    = e^{-\beta \,  \ell\,\left(\frac{\mathbf{w}^{\top} \mathbf{x}_\mu}{\sqrt{d}},\, \frac{\mathbf{w}^{ \top} \mathbf{w}}{d}\right)}\,.
\end{equation}
In the limit of $\beta \to \infty$ the average of $P_{\rm likelihood}$ will minimize the risk $\mathcal{R}(\mathbf{w})$
\begin{equation}
    \mathcal{R}(\mathbf{w}) = \sum_{\mu = 1}^n \ell\,\left(\frac{\mathbf{w}^{\top} \mathbf{x}_\mu}{\sqrt{d}},\, \frac{\mathbf{w}^{ \top} \mathbf{w}}{d}\right) + r(\mathbf{w})\,.
\end{equation}
This choice will then allow us to study the global minima of the empirical risk 
\cref{eq:empiricalrisk} for the autoencoder architecture, by setting
\begin{equation}
    r(\mathbf{w}) = 0
    \quad\text{ and } \quad
    \ell\,\left(h,\, q\right) = \sigma(h)^\top q \,\sigma(h) - 2 h^\top \sigma(h)
\end{equation}

\subsection{Details of the computation}

As it's customary in this kind of analysis, one wishes to analyze the averaged intensive free entropy  $\Phi \equiv \mathbb{E}[\log \mathcal{Z}]/ d$, where $\mathcal{Z}$ is the normalization of $P_{\rm post}$
\begin{equation}
    \mathcal{Z} = \int  d\mathbf{w} \,P_w(\mathbf{w})\prod_{\mu=1}^n  P_\text{out}\left( \frac{\mathbf{w}^{\top} \mathbf{x}_\mu}{\sqrt{d}}, \frac{\mathbf{w}^{ \top} \mathbf{w}}{d} \right)\,,
\end{equation}
and the average is taken over all sources of randomness. We will compute $\Phi$ through the replica \textit{trick}:
\begin{equation}
    \mathbb{E}[\log \mathcal{Z}] = \lim_{s \to 0} \frac{\mathbb{E}[\mathcal{Z}^s]-1}{s}\,.
\end{equation}
Indeed, computing the expected moments of $\mathcal{Z}$ is a much easier task than its average logarithm.
The $s$-th moment of $\mathcal{Z}$ is
\begin{equation}
    \mathcal{Z}^s = \int \prod_{a=1}^s d\mathbf{w}^a P_w(\mathbf{w}^a)\prod_{\mu=1}^n \prod_{a=1}^s P_\text{out}\left( \frac{\mathbf{w}^{a\top} \mathbf{x}_\mu}{\sqrt{d}}, \frac{\mathbf{w}^{a \top} \mathbf{w}^a}{d} \right)\,,
\end{equation}
and its expectation will be:
\begin{equation}\label{eq:moments}
    \mathbb{E}[\mathcal{Z}^s] = \mathbb{E}_{\mathbf{w}^\star} \mathbb{E}_{\mathbf{x}|\mathbf{w^\star}}[\mathcal{Z}^s] = \mathbb{E}_{\mathbf{w}^\star} \int \prod_{a=1}^s d\mathbf{w}^a P_w(\mathbf{w}^a) \left[ \mathbb{E}_{\mathbf{x}|\mathbf{w^\star}}\left[\prod_{a=1}^s P_\text{out}\left(\frac{\mathbf{w}^{a\top} \mathbf{x}}{\sqrt{d}}, \frac{\mathbf{w}^{a \top} \mathbf{w}^a}{d}\right)\right]\right]^n\,.
\end{equation}

where we used the law of total expectation and the assumption that the samples are taken i.i.d..

Consider the inner expectation: even though it is an expectation over a high-dimensional $\mathbf{x}$, the function only depends on the low dimensional projection $\mathbf{w}^{a\top} \mathbf{x} \in \mathbb{R}^p$. We can use this to write this as a low-dimensional expectation, which will decouple over the index $a$.

Using the identity \cref{eq:def_P_out_star} we have
\begin{equation}
     \mathbb{E}_{\mathbf{x} |\mathbf{w^\star}}\left[\prod_{a=1}^s P_\text{out}\left(\frac{\mathbf{w}^{a\top} \mathbf{x}}{\sqrt{d}}, \frac{\mathbf{w}^{a \top} \mathbf{w}^a}{d}\right)\right] = \mathbb{E}_{\mathbf{x} \sim \mathcal{N}(0, \id)}\left[ P_\text{out}^\star\left(\frac{\mathbf{w}^{{\star}\top} \mathbf{x}}{\sqrt{d}}, \frac{\mathbf{w}^{{\star} \top} \mathbf{w}^\star}{d}\right) \prod_{a=1}^s P_\text{out}\left( \frac{\mathbf{w}^{a\top} \mathbf{x}}{\sqrt{d}}, \frac{\mathbf{w}^{a \top} \mathbf{w}^a}{d}\right) \right]\,.
\end{equation}
We now notice that the random variables $(h^\star, h^a)$ defined by
\begin{equation}
    (h^\star, h^a) \equiv \left(\frac{\mathbf{w}^{\star\top} \mathbf{x}}{\sqrt{d}},\, \frac{\mathbf{w}^{a\top} \mathbf{x}}{\sqrt{d}}\right) \in \mathbb{R}^{k+(p\times s)}\,,
\end{equation}
are Gaussian with mean zero and covariances that depend on the matrices $Q^\star\in\mathbb{R}^{k\times k},\,Q^{ab}\in\mathbb{R}^{p\times p},\,M^a\in\mathbb{R}^{p\times k}$ for $1\leq a,b\leq s$:
\begin{equation}
    \mathbb{E}[h^\star h^{\star\top}] = Q^\star\,,\qquad \mathbb{E}[h^a h^{b\top}] = Q^{ab}\,,\qquad \mathbb{E}[h^a h^{{\star}\top}] = M^{a}\,.
\end{equation}
All this allows us to do following decomposition of the integral in \cref{eq:moments}:
\begin{equation}
\begin{split}
    \mathbb{E}[\mathcal{Z}^s]&=\mathbb{E}_{\mathbf{w}^\star} \int \prod_{a=1}^s d\mathbf{w}^a P_w(\mathbf{w}^a) \left[ \mathbb{E}_{\mathbf{x}|\mathbf{w^\star}}\left[\prod_{a=1}^s P_\text{out}\left(\frac{\mathbf{w}^{a\top} \mathbf{x}}{\sqrt{d}}, \frac{\mathbf{w}^{a \top} \mathbf{w}^a}{d}\right)\right]\right]^n\\
    &=\mathbb{E}_{\mathbf{w}^\star} \int \prod_{a=1}^s d\mathbf{w}^a P_w(\mathbf{w}^a) \left[ \mathbb{E}_{\mathbf{x} \sim \mathcal{N}(0, \id)}\left[ P_\text{out}^\star\left(\frac{\mathbf{w}^{{\star}\top} \mathbf{x}}{\sqrt{d}}, \frac{\mathbf{w}^{{\star} \top} \mathbf{w}^\star}{d}\right) \prod_{a=1}^s P_\text{out}\left( \frac{\mathbf{w}^{a\top} \mathbf{x}}{\sqrt{d}}, \frac{\mathbf{w}^{a \top} \mathbf{w}^a}{d}\right) \right]\right]^n =\\
    & =\mathbb{E}_{\mathbf{w}^\star} \int \prod_{a=1}^s d\mathbf{w}^a P_w(\mathbf{w}^a) \left[ \mathbb{E}_{(h^\star,\,h^a)}\left[ P_\text{out}^\star\left(h^\star, Q^\star(\mathbf{w}^\star)\right) \prod_{a=1}^s P_\text{out}\left(h^a, Q^{aa}(\mathbf{w}^a)\right) \right]\right]^n \\
\end{split}
\end{equation}
where the last expectation is over the random variables $(h^a,\,h^\star)$. Notice how the matrices $Q^{ab},\,M^a$ still depend on $\mathbf{w}{\star},\,\mathbf{w}^a$. We can decouple them by introducing some Dirac deltas, giving us
\begin{equation}
\begin{split}
    \mathbb{E}[\mathcal{Z}^s]=\int{dQ^{ab}\,dM^a\,dQ^\star}&\left[\mathbb{E}_{\mathbf{w}^\star} \int \prod_{a=1}^s d\mathbf{w}^a P_w(\mathbf{w}^a) \,\prod_{a\leq b}\delta(d Q^{ab}-\mathbf{w}^{a\top} \mathbf{w}^{b})\,\times \right.\\
    & \left. \times \prod_{a=1}^s\delta(d M^a-\mathbf{w}^{a\top} \mathbf{w}^\star)\,\delta(d Q^\star-\mathbf{w}^{{\star}\top} \mathbf{w}^{{\star}})
    \right]\\
    &\left[ \mathbb{E}_{(h^\star,\,h^a)}\left[ P_\text{out}^\star\left(h^\star, Q^\star(\mathbf{w}^\star)\right) \prod_{a=1}^s P_\text{out}\left(h^a, Q^{aa}(\mathbf{w}^a)\right) \right]\right]^n
\end{split}
\end{equation}
where by symmetry we imposed Dirac deltas only for $a \leq b$. Similarly, the integration is intended over all $(a,b)$ for $a\leq b$.
We will now mostly focus on simplifying the first square bracket. Introducing the Lagrange multipliers $\hat{Q}^\star,\,\hat{M}^a,\,\hat{Q}^{ab}$, we write
\begin{equation}
\begin{split}
    &\prod_{a\leq b}\delta(d Q^{ab}-\mathbf{w}^{a\top} \mathbf{w}^{b})\,\prod_{a=1}^s\delta(d M^a-\mathbf{w}^{a\top} \mathbf{w}^\star)\,\delta(d Q^\star-\mathbf{w^\star}^\top \mathbf{w^\star}) =
    \int{d\hat{Q}^{ab}\,d\hat{M}^a\,d\hat{Q}^\star} \cdot\\&\qquad\exp \left\{d \sum_{a\leq b}{\rm Tr}(\hat{Q}^{ab} Q^{ab \top}) + d \sum_{a=1}^s{\rm Tr}(\hat{M}^{a} M^{a \top}) + d {\rm Tr}(\hat{Q}^{{\star}} Q^{{\star} \top}) \right.\\&\qquad\qquad\qquad\left. - {\rm Tr}(\hat{Q}^{{\star}\top}\mathbf{w}^{{\star}\top} \mathbf{w}^\star) - \sum_{a\leq b}{\rm Tr}(\hat{Q}^{ab\top}\mathbf{w}^{a\top} \mathbf{w}^b) - \sum_{a=1}^s{\rm Tr}(\hat{M}^{a\top}\mathbf{w}^{a\top} \mathbf{w}^\star)\right\}\,.
\end{split}
\end{equation}
The integral over $\hat{Q}^\star,\,Q^\star$ is actually not needed. Let's compute the expected zeroth moment of $\mathcal{Z}$, which is one. Notice that here we literally ask for $s=0$, and not for the limit $s\to 0$, as we will do later. From the derivation above we get
\begin{equation}
    \mathbb{E}[\mathcal{Z}^0] = \int d\hat{Q}^\star\,dQ^\star \exp\left\{ d{\rm Tr}(\hat{Q}^{{\star}} Q^{{\star} \top}) -{\rm Tr}(\hat{Q}^{{\star}\top}\mathbf{w^\star}^\top \mathbf{w^\star})\right\} = 1\,.
\end{equation}
In the large $d$ limit we can use the saddle point method to see that the integral is dominated by $\hat{Q}^\star = \mathbb{O}$ (zero matrix) and
\begin{equation}
    Q^\star = \mathbb{E}_\mathbf{w^\star}\left[\frac{\mathbf{w^\star}^\top \mathbf{w^\star}}{d}\right]\,.
\end{equation}
We will thus fix these two variables.
Gathering all our progress, we can write
\begin{equation}
    \mathbb{E}[\mathcal{Z}^s] = \int dM^a\, d\hat{M}^a\,dQ^{ab}\, d\hat{Q}^{ab} \exp\left\{d (\Phi_{\rm trace} + \Phi_{\rm prior} + \alpha \Phi_{\rm output})\right\}
\end{equation}
where
\begin{equation}
\begin{split}
    &\Phi_{\rm trace} = \sum_{a\leq b}{\rm Tr}(\hat{Q}^{ab} Q^{ab \top}) +  \sum_{a=1}^s{\rm Tr}(\hat{M}^{a} M^{a \top})    \\
    &\Phi_{\rm prior} = \frac{1}{d} \log\mathbb{E}_{\mathbf{w}^\star}\left[ \int  \prod_{a=1}^s d\mathbf{w}^aP_w(\mathbf{w}^a) \exp\left\{ - \sum_{a\leq b}{\rm Tr}(\hat{Q}^{ab\top}\mathbf{w}^{a\top} \mathbf{w}^b) - \sum_{a=1}^s{\rm Tr}(\hat{M}^{a\top}\mathbf{w}^{a\top} \mathbf{w}^\star) \right\} \right] \\
    &\Phi_{\rm output} = \log\mathbb{E}_{(h^\star,\,h^a)}\left[ P_\text{out}^\star\left(h^\star, Q^\star\right) \prod_{a=1}^s P_\text{out}\left(h^a, Q^{aa}\right) \right]
\end{split}
\end{equation}
We will eventually need to take the limit $s\to 0$. On the other hand, $s$ is an integer, so to take the limit we will first assume a specific structure on $Q^{ab},\,M^{a},\,\hat{Q}^{ab},\,\hat{M}^a$, and then do an analytic continuation for real valued $s$.
In particular we assume the so-called \textit{replica symmetric ansatz} 
\begin{equation}
\begin{split}
    &M^a = m,\\
    &Q^{ab} = (r-q)\delta_{ab} + q,\\
    &\hat{M}^a = -\hat{m} ,\\
    &\hat{Q}^{ab} = \left(\frac{\hat{r}}{2} + \hat{q} \right) \delta_{ab} - \hat{q}.
\end{split}
\end{equation}
The trace piece is the simplest one to simplify in the $s \to 0$ limit
\begin{equation}
    \Phi_{\rm trace} = s \left[-{\rm Tr} (m^\top \hat{m}) + \frac{1}{2}{\rm Tr} (r^\top \hat{r}) + \frac{1}{2}{\rm Tr} (q^\top \hat{q}) \right] + \mathcal{O}(s^2)\,.
\end{equation}
Let's move to the prior piece. First, we use our ansatz to write
\begin{equation}
    \Phi_{\rm prior} = \frac{1}{d} \log\mathbb{E}_{\mathbf{w}^\star}\left[ \int  \prod_{a=1}^s d\mathbf{w}^aP_w(\mathbf{w}^a) \exp\left\{  \sum_{a < b}{\rm Tr}(\hat{q}^{\top}\mathbf{w}^{a\top} \mathbf{w}^b)- \frac{1}{2}\sum_{a=1}^s{\rm Tr}(\hat{r}^{\top}\mathbf{w}^{a\top} \mathbf{w}^a) + \sum_{a=1}^s{\rm Tr}(\hat{m}^{\top}\mathbf{w}^{a\top} \mathbf{w}^\star) \right\} \right] 
\end{equation}
We will now do a number of manipulations in order to decouple the first piece in the exponent over the index $a$. 
Let's call $\mathbf{w}_i$ for $i=1,...,d$ the rows of $\mathbf{w}$. We then have
\begin{equation}
    \exp\left\{ \sum_{a< b}{\rm Tr}(\hat{q}\,\mathbf{w}^{a\top} \mathbf{w}^b)\right\} = \prod_{i=1}^d \exp\left\{\sum_{a< b}\mathbf{w}_i^{a\top}\hat{q}\, \mathbf{w}_i^b\right\} = \prod_{i=1}^d \exp\left\{ \sum_{a< b}(\mathbf{w}_i^{a}\hat{q}^{1/2})\, (\mathbf{w}_i^{b}\hat{q}^{1/2})^\top\right\}\,.
\end{equation}
Using the identity for any set of vectors $\mathbf{v}_a$
\begin{equation}
    \sum_{a< b} \mathbf{v}^{a\top}_i\mathbf{v}^b_i = \frac{1}{2}\left\|\sum_{a=1}^s \mathbf{v}_i^a\right\|^2 - \frac{1}{2}\sum_{a=1}^s \|\mathbf{v}_i^a\|^2\,,
\end{equation}
as well as the following Gaussian identity true for every vector $v$
\begin{equation}
    \exp\left\{\frac{v^\top v}{2}\right\} = \mathbb{E}_{\xi\sim\mathcal{N}(0,\id)}\left[\exp\left\{v^\top \xi\right\}\right]\,,
\end{equation}
one can write
\begin{equation}
\begin{split}
    \exp\left\{ \sum_{a < b}{\rm Tr}(\hat{q}\,\mathbf{w}^{a\top} \mathbf{w}^b)\right\} &= \prod_{i=1}^d \exp\left\{\frac{1}{2} \left\| \sum_{a=1}^s \hat{q}^{1/2} \mathbf{w}_i^a\right\|^2 - \frac{1}{2} \sum_{a=1}^s \| \hat{q}^{1/2} \mathbf{w}_i^a \|^2\right\} \\
    &= \prod_{i=1}^d \mathbb{E}_{\xi} \left[\prod_{a=1}^s\exp\left\{ \mathbf{w}_i^{a \top} \hat{q}^{1/2} \xi_i -\frac{1}{2} \mathbf{w}_i^{a\top} \hat{q}\, \mathbf{w}_i^a\right\}\right] \\
    &=\mathbb{E}_{\Xi} \left[\prod_{a=1}^s\exp\left\{ {\rm Tr}(\mathbf{w}^{a \top} \hat{q}^{1/2} \Xi) -\frac{1}{2}{\rm Tr}( \mathbf{w}^{a\top} \hat{q}\, \mathbf{w}^a)\right\}\right]\,,
\end{split}
\end{equation}
where $\Xi\in\mathbb{R}^{d\times p}$ is Gaussian in every entry.
The whole expression will then be, in the limit $s\to 0$
\begin{equation}
\begin{split}    
    \Phi_{\rm prior} &= \frac{1}{d} \log\mathbb{E}_{\Xi,\mathbf{w}^\star}\left[ \int  \prod_{a=1}^s d\mathbf{w}^aP_w(\mathbf{w}^a) \exp\left\{ {\rm Tr}(\mathbf{w}^{a \top} \hat{q}^{1/2} \Xi) -\frac{1}{2}{\rm Tr}( \mathbf{w}^{a\top} (\hat{q} + \hat{r})\, \mathbf{w}^a)+{\rm Tr}(\hat{m}^{\top}\mathbf{w}^{a\top} \mathbf{w}^\star)\right\}\right] \\
    &= \frac{s}{d} \mathbb{E}_{\Xi,\mathbf{w}^\star}\left[ \log\int  d\mathbf{w} P_w(\mathbf{w}) \exp\left\{ {\rm Tr}(\mathbf{w}^{\top} \hat{q}^{1/2} \Xi) -\frac{1}{2}{\rm Tr}( \mathbf{w}^{\top} (\hat{q} + \hat{r})\, \mathbf{w}) + {\rm Tr}(\hat{m}^{\top}\mathbf{w}^{\top} \mathbf{w}^\star)\right\}\right] + \mathcal{O}(s^2) \,.
\end{split}
\end{equation}
The last piece, $\Phi_{\rm output}$, is the most challenging. With our ansatz, it will be
\begin{equation}
    \Phi_{\rm output} =  \log\mathbb{E}_{(h^\star,\,h^a)}\left[ P_\text{out}^\star\left(h^\star, Q^\star\right) \prod_{a=1}^s P_\text{out}\left(h^a, r\right) \right]\,.
\end{equation}
To simplify it further, we need to look at the covariance of $(h^\star,\,h^a)$ across all $a$. Recall that $(h^\star, h^a) \sim \mathcal{N}(0, \Sigma)$, with
\begin{equation}
    \Sigma = \begin{pmatrix}
        Q^\star & M^T \\
        M & Q
    \end{pmatrix} = \begin{pmatrix}
        Q^\star & m^\top & \dots & m^\top \\
        m & r & \dots & q \\
        \vdots & \vdots & \ddots & \vdots \\
        m & q & \dots & r
    \end{pmatrix} \in \mathbb{R}^{(k+sp)\times(k+sp)}
\end{equation}
Its inverse is
\begin{equation}
    \Sigma^{-1} = \begin{pmatrix}
        \Tilde{Q}^\star & \Tilde{m}^{\top} & \dots & \Tilde{m}^{\top} \\
        \Tilde{m} & \Tilde{r} & \dots & \Tilde{q} \\
        \vdots & \vdots & \ddots & \vdots \\
        \Tilde{m} & \Tilde{q} & \dots & \Tilde{r}
    \end{pmatrix} \in \mathbb{R}^{(K+sp)\times(K+sp)}
\end{equation}
where $V \equiv r -q$ and
\begin{equation}
\begin{split}    
    &\Tilde{Q}^\star = (Q^\star-sm^\top(V+sq)^{-1} m)^{-1}\,,  \\
    &\Tilde{r} = V^{-1} + (V+sq)^{-1} (m \Tilde{Q}^\star m^\top (V+sq)^{-1}-qV^{-1})\,, \\
    &\Tilde{q} = \Tilde{r} - V^{-1}\,, \\
    &\Tilde{m} = -(V+sq)^{-1} m \Tilde{Q}^\star \,.\\
\end{split}
\end{equation}
The determinant of $\Sigma$ is:
\begin{equation}
    \log \det \Sigma = (s-1) \log \det V + \log \det (V+sq) + \log \det (Q^\star - s m^\top(V+sq)^{-1}m)\,.
 \end{equation}
We are now ready to explicitly write the distribution of $h^\star,\,h^a$:
\begin{equation}
    \exp\left\{ -\frac{1}{2}\log\det (2\pi\Sigma)  -\frac{1}{2}h^{{\star}\top}\tilde{Q}^\star h^\star - \sum_{a=1}^s h^{a\top}\tilde{m}\,h^\star - \frac{1}{2}\sum_{a=1}^s h^{a\top}(\tilde{r} - \tilde{q})h^a - \frac{1}{2}\sum_{a,b=1}^s h^{a\top} \tilde{q}\, h^b \right\}\,.
\end{equation}
The last piece gets decoupled using the same Gaussian trick as before
\begin{equation}
    \exp\left\{ - \frac{1}{2}\sum_{a,b=1}^s h^{a\top} \tilde{q}\, h^b \right\} = \exp\left\{\frac{1}{2}\left\|\sum_{a=1}^s (-\tilde{q})^{1/2}h^a\right\|^2\right\} = \mathbb{E}_{\xi \sim \mathcal{N}(0, \id)} \left[\exp\left\{\sum_{a=1}^s \xi^\top (-\tilde{q})^{1/2}h^a\right\}\right]\,,
\end{equation}
where we assumed $-\tilde{q}$ is positive semi-definite. We thus have
\begin{equation}
\begin{split}
    \Phi_{\rm output} =  &\log\mathbb{E}_{\xi}\int {d h^\star}e^{-\frac{1}{2}h^{{\star}\top} \tilde{Q}^\star h^\star - \frac{1}{2}\log\det{(2\pi\Sigma)}}P_\text{out}^\star\left(h^\star, Q^\star\right) \times\\
    &\times \left[ \int{d h}P_\text{out}\left(h, r\right) \exp\left\{ -h^{\top}\tilde{m}\,h^\star - \frac{1}{2} h^{\top}V^{-1}h + \xi^\top (-\tilde{q})^{1/2}h\right\}\right]^s.
\end{split}
\end{equation}
Now, we should take the $s\to 0$ limit, but this is particularly challenging, as all the tilde variables depend on $s$. 
Let's complete the square in the exponential
\begin{small}
\begin{equation}
\begin{split}
    \exp\left\{ - \frac{1}{2} h^{\top}V^{-1}h + \Big[-\tilde{m}h^\star + \xi^\top (-\tilde{q})^{1/2}\Big]h\right\} =& \exp\left\{ - \frac{1}{2} \Big[h - V\Big((-\tilde{q})^{1/2}\xi- \tilde{m}h^\star\Big)\Big]^\top V^{-1} \Big[h - V\Big((-\tilde{q})^{1/2}\xi- \tilde{m}h^\star\Big)\Big]\right\} \\
    &\times \exp\left\{\frac{1}{2} \Big[(-\tilde{q})^{1/2}\xi- \tilde{m}h^\star\Big]^\top V \Big[(-\tilde{q})^{1/2}\xi- \tilde{m}h^\star\Big]  \right\}.
\end{split}
\end{equation}    
\end{small}

Using the following limits 
\begin{equation}\label{eq:tilde_0_variables}
\begin{split}    
    &\lim_{s\to0} \Tilde{Q}^\star = Q^{{\star}-1}\,,  \\
    &\lim_{s\to0}-\Tilde{q} = V^{-1} (q - m Q^{{\star}-1} m^\top )V^{-1}=-\tilde{q}_0\,, \\
    &\lim_{s\to0}\Tilde{m} = -V^{-1} m {Q}^{{\star}-1} = \tilde{m}_0 \,,
\end{split}
\end{equation}
and
\begin{equation}
    \log \det \Sigma = \log \det Q^\star + s\left[\log\det V + {\rm Tr}(V^{-1}q) - {\rm Tr}(Q^{{\star}-1}m^\top V^{-1} m)\right] + \mathcal{O}(s^2),
\end{equation}
one can show that
\begin{equation}
\label{eq:before_substitution}
\begin{split}
    \Phi_{\rm output} =&  \log\mathbb{E}_{\xi}\int {d h^\star}e^{-\frac{1}{2}h^{{\star}\top} Q^{{\star}-1} h^\star - \frac{1}{2}\log\det{(2\pi Q^\star)}}P_\text{out}^\star\left(h^\star, Q^\star\right) \\
    &\left[ \int{d h}P_\text{out}\left(h, r\right) \exp\left\{ - \frac{1}{2} \Big[h - V\Big((-\tilde{q}_0)^{1/2}\xi- \tilde{m}_0h^\star\Big)\Big]^\top \times \right.\right. \\
    &\left. \left. \times V^{-1} \Big[h - V\Big((-\tilde{q}_0)^{1/2}\xi- \tilde{m}_0h^\star\Big)\Big] - \frac{1}{2}\log \det (2\pi V)\right\}\right]^s
\end{split}
\end{equation}
There is a nicer way of writing this result after the substitution
\begin{equation}\label{eq:substitution}
    V\Big((-\tilde{q}_0)^{1/2}\xi- \tilde{m}_0h^\star\Big) \to q^{1/2}\xi\,,
\end{equation}
which after some manipulations gives
\begin{equation}
\begin{split}
    \Phi_{\rm output} =&  \log\mathbb{E}_{\xi}\int {d h^\star}e^{-\frac{1}{2}(h^\star - m^\top q^{-1/2}\xi)^{\top} (Q^\star - m^\top q^{-1}m)^{-1} (h^\star - m^\top q^{-1/2}\xi) - \frac{1}{2}\log\det{(2\pi (Q^\star - m^\top q^{-1}m))}}P_\text{out}^\star\left(h^\star, Q^\star\right) \\
    &\left[ \int{d h}P_\text{out}\left(h, r\right) \exp\left\{ - \frac{1}{2} (h - q^{1/2}\xi)^\top V^{-1} (h - q^{1/2}\xi) - \frac{1}{2}\log \det (2\pi V)\right\}\right]^s\,.
\end{split}
\end{equation}
For $s\to 0$ we have
\begin{equation}
\begin{split}
    \Phi_{\rm output} =&  \,\mathbb{E}_{\xi}\int {d h^\star}e^{-\frac{1}{2}(h^\star - m^\top q^{-1/2}\xi)^{\top} (Q^\star - m^\top q^{-1}m)^{-1} (h^\star - m^\top q^{-1/2}\xi) - \frac{1}{2}\log\det{(2\pi (Q^\star - m^\top q^{-1}m))}}P_\text{out}^\star\left(h^\star, Q^\star\right) \\
    &\log \int{d h}P_\text{out}\left(h, r\right) \exp\left\{ - \frac{1}{2} (h - q^{1/2}\xi)^\top V^{-1} (h - q^{1/2}\xi) - \frac{1}{2}\log \det (2\pi V)\right\}\,.
\end{split}
\end{equation}
We are at the end of the computation. The averaged intensive free entropy can be written  in terms of $m,\,q,\,V,\,\hat{m},\,\hat{q},\,\hat{V}$ using the saddle point method, where $\hat{V} = \hat{r} + \hat{q}$:
\begin{equation} \label{eq:freeentropy_general}
\begin{split}
        &\qquad\qquad\qquad\qquad\qquad\quad\max_{m,\,q,\,V,\,\hat{m},\,\hat{q},\,\hat{V}}{\Phi(m,\,q,\,V,\,\hat{m},\,\hat{q},\,\hat{V})}, \\
        &\Phi(m,\,q,\,V,\,\hat{m},\,\hat{q},\,\hat{V}) = \,\Phi_{\rm trace}(m,\,q,\,V,\,\hat{m},\,\hat{q},\,\hat{V}) + \Phi_{\rm prior}(\hat{m},\,\hat{q},\,\hat{V}) + \alpha\Phi_{\rm output}(m,\,q,\,V)
\end{split}
\end{equation}
with
\begin{equation} \label{eq:result_replica}
\begin{split}\
    &\Phi_{\rm trace} = -{\rm Tr} (m^\top \hat{m}) + \frac{1}{2}{\rm Tr} (q^\top \hat{V}) + \frac{1}{2}{\rm Tr} (V^\top (\hat{V}-\hat{q})), \\
    &\Phi_{\rm prior} = \frac{1}{d} \mathbb{E}_{\Xi,\mathbf{w}^\star}\left[ \log\int  d\mathbf{w} P_w(\mathbf{w}) \exp\left\{ {\rm Tr}(\mathbf{w}^{\top} \hat{q}^{1/2} \Xi) -\frac{1}{2}{\rm Tr}( \mathbf{w}^{\top} \hat{V}\, \mathbf{w}) + {\rm Tr}(\hat{m}^{\top}\mathbf{w}^{\top} \mathbf{w}^\star)\right\}\right], \\
    &\Phi_{\rm output}  =
    \mathbb{E}_{\xi}\int {d h^\star}e^{-\frac{1}{2}(h^\star - m^\top q^{-1/2}\xi)^{\top} (Q^\star - m^\top q^{-1}m)^{-1} (h^\star - m^\top q^{-1/2}\xi) - \frac{1}{2}\log\det{(2\pi (Q^\star - m^\top q^{-1}m))}}P_\text{out}^\star\left(h^\star, Q^\star\right) \\
    &\qquad\qquad\,\,\log \int{d h}P_\text{out}\left(h, V+q\right) \exp\left\{ - \frac{1}{2} (h - q^{1/2}\xi)^\top V^{-1} (h - q^{1/2}\xi) - \frac{1}{2}\log \det (2\pi V)\right\}.\\
\end{split}
\end{equation}

\subsection{Specialization of \cref{eq:result_replica} for Bayes-Optimal estimators}
We start by imposing that $P_w = P_{w}^{\star} = \mathcal{N}(0,\id)$. This enables us to simplify $\Phi_{\rm prior}$ drastically, as the integrals over $\Xi,\,\mathbf{w},\,\mathbf{w}^\star$ are all Gaussian. After a tedious computation one gets
\begin{equation}
     \Phi_{\rm prior} = - \frac{1}{2}\log\det(\id + \hat{V}) + \frac{1}{2}{\rm Tr}[(\id+\hat{V})^{-1}(\hat{q} + \hat{m}\hat{m}^\top)]
\end{equation}
Next, we impose the Nishimori conditions $m=q,\,r=Q^\star=\id$, which implies $\hat{m} = \hat{q}$, $V = \id - q$ and $\hat{V} = \hat{q}$, which gives
\begin{equation} \label{eq:freeentropy_general_parts}
\begin{split}
    &\Phi_{\rm trace} = -\frac{1}{2}{\rm Tr} (q^\top \hat{q}),  \\
    &\Phi_{\rm prior} = - \frac{1}{2}\log\det(\id + \hat{q}) + \frac{1}{2}{\rm Tr}(\id+\hat{q}),  \\
    &\Phi_{\rm output}  =
    \mathbb{E}_{\xi}\int {d h^\star}e^{-\frac{1}{2}(h^\star - q^{1/2}\xi)^{\top} (\id - q)^{-1} (h^\star - q^{1/2}\xi) - \frac{1}{2}\log\det{(2\pi (\id - q))}}P_\text{out}^\star\left(h^\star, \id\right) \\
    &\qquad\qquad\,\,\log \int{d h}P_\text{out}^\star\left(h, \id\right) \exp\left\{ - \frac{1}{2} (h - q^{1/2}\xi)^\top (\id - q)^{-1} (h - q^{1/2}\xi) - \frac{1}{2}\log \det (2\pi (\id - q))\right\}.\\
\end{split}
\end{equation}
There is an alternative writing of $\Phi_{\rm output}$ using some auxiliary functions. Define $\mathcal{Z}_{\rm out},\,f_{\rm out}$ as
\begin{equation} \label{eq:zoutfoutdefinition}
    \mathcal{Z}_{\rm out}^\star(\omega, V, Q) = \mathbb{E}_{h\sim\mathcal{N}(\omega,\, V)} [P_{\rm out}^\star(h, Q)]\,,\qquad f_{\rm out}^\star(\omega, V, Q) = \mathbb{E}_{h\sim\mathcal{N}(\omega,\, V)} [P_{\rm out}^\star(h, Q)V^{-1}(h - \omega)]
\end{equation}
which gives us 
\begin{equation} \label{eq:free_entropy_output}
    \Phi_{\rm output} = \mathbb{E}_\xi\left[\mathcal{Z}_{\rm out}^\star(q^{1/2}\xi, \id - q, \id) \log \mathcal{Z}_{\rm out}^\star(q^{1/2}\xi, \id - q, \id)\right].
\end{equation}
The saddle point method gives us two equations that $q,\,\hat{q}$ have to satisfy
\begin{equation} \label{eq:bosaddlepointgeneral}
\begin{cases}
     q = \hat{q}(\id + \hat{q})^{-1}\,, \\
     \hat{q} = \alpha \,\mathbb{E}_{\xi}\left[Z_{\rm out}^\star(q^{1/2}\xi, \id - q, \id)f_{\rm out}^\star(q^{1/2}\xi, \id - q, \id) \,f_{\rm out}^\star(q^{1/2}\xi, \id - q, \id)^\top\right]\,.
\end{cases}
\end{equation}

\subsection{Specialization of \cref{eq:result_replica} for minimizers of the empirical risk}
We start from \cref{eq:result_replica} and substitute in $P_w$ and $P_{\rm out}$ from \cref{eq:ERM_defs}. Starting with $\Phi_{\rm prior}$, we have

\begin{equation}
    \Phi_{\rm prior} = \frac{1}{d} \mathbb{E}_{\Xi,\mathbf{w}^\star}\left[ \log\int  d\mathbf{w} \exp\left\{-\beta\, r(\mathbf{w})+   {\rm Tr}(\mathbf{w}^{\top} \hat{q}^{1/2} \Xi) -\frac{1}{2}{\rm Tr}( \mathbf{w}^{\top} \hat{V}\, \mathbf{w}) + {\rm Tr}(\hat{m}^{\top}\mathbf{w}^{\top} \mathbf{w}^\star)\right\}\right].
\end{equation}
This prompts us to rescale the order parameters \begin{equation}
    \hat{m} \to \beta \,\hat{m},\qquad \hat{V} \to \beta \,\hat{V},\qquad \hat{q} \to \beta^2 \,\hat{q}\,,
\end{equation}
and take the limit $\beta\to \infty$. The integral over $\mathbf{w}$ will be dominated by the minimum of the exponential:
\begin{equation}
    \mathbf{w} = \arg\min_\mathbf{w} \left\{r(\mathbf{w})-   {\rm Tr}(\mathbf{w}^{\top} \hat{q}^{1/2} \Xi) +\frac{1}{2}{\rm Tr}( \mathbf{w}^{\top} \hat{V}\, \mathbf{w}) - {\rm Tr}(\hat{m}^{\top}\mathbf{w}^{\top} \mathbf{w}^\star)\right\}.
\end{equation}
Our choice of regularization will be $r(\mathbf{w}) = \tilde{\lambda}\|\mathbf{w}\|^2/2$, for which we have
\begin{equation} \label{eq:erm_phi_prior}
    \Phi_{\rm prior} = \frac{\beta}{2} {\rm Tr} [(\hat{V} + \tilde{\lambda}\id)^{-1} (\hat{q} + \hat{m} \hat{m}^\top)].
\end{equation}
The output piece will be
\begin{equation}
\begin{split}
    &\Phi_{\rm output}  =
    \mathbb{E}_{\xi}\int {d h^\star}e^{-\frac{1}{2}(h^\star - m^\top q^{-1/2}\xi)^{\top} (Q^\star - m^\top q^{-1}m)^{-1} (h^\star - m^\top q^{-1/2}\xi) - \frac{1}{2}\log\det{(2\pi (Q^\star - m^\top q^{-1}m))}}P_\text{out}^\star\left(h^\star, Q^\star\right) \\
    &\qquad\qquad\,\,\log \int{d h}\exp\left\{-\beta \ell(h,V+q) - \frac{1}{2} (h - q^{1/2}\xi)^\top V^{-1} (h - q^{1/2}\xi) - \frac{1}{2}\log \det (2\pi V)\right\}\\
\end{split}
\end{equation}    
which prompts us to take $V \to V / \beta$. As before the integral over $h$ is dominated by the minimum, which we can write using the proximal operator ${\rm prox}_\ell(\omega,V, Q)$
\begin{equation} \label{eq:prox_def}
{\rm prox}_\ell(\omega,V, q) = \arg\min_h \left\{ \ell(h,q) + \frac{1}{2} (h - \omega)^\top V^{-1} (h - \omega) \right\}.
\end{equation}
The whole integral will then depend on the Moreau envelope $\mathcal{M}_\ell(\omega,V, Q)$
\begin{equation} \label{eq:moreau_def}
    \mathcal{M}_\ell(\omega,V, q) = \min_h \left\{ \ell(h,q) + \frac{1}{2} (h - \omega)^\top V^{-1} (h - \omega) \right\}
\end{equation}
which gives
\begin{equation}\label{eq:erm_phi_output}
    \Phi_{\rm output}  = -\beta \mathbb{E}_\xi \left[ \mathcal{Z}_{\rm out}^\star(m^\top q^{-1/2}\xi, Q^\star - m^\top q^{-1}m, q) \,\mathcal{M}_\ell(q^{1/2}\xi,V, q) \right].
\end{equation}
We can then obtain after a tedious computation the saddle point equations
\begin{small}
    \begin{equation}\label{eq:erm_saddlepoint_general}
\begin{cases}
     m = (\hat{V} + \tilde{\lambda}\id)^{-1}\hat{m}, \\
     q = (\hat{V} + \tilde{\lambda}\id)^{-1}(\hat{q}+ \hat{m}\hat{m}^\top)(\hat{V} + \tilde{\lambda}\id)^{-1}, \\
     V = (\hat{V} + \tilde{\lambda}\id)^{-1}, \\
    \hat{m} = \alpha  \mathbb{E}_{\xi} [\mathcal{Z}_\text{out}^{\star}(m^\top q^{-1/2} \xi, Q^\star-m^\top q^{-1} m, Q^\star)\,f_\text{out}(\sqrt{q} \xi, V, q) \,f_\text{out}^{\star \top}(m^\top q^{-1/2} \xi, Q^\star - m^\top q^{-1} m, Q^\star)], \\    
    \hat{q} = \alpha \mathbb{E}_{ \xi}\left[\mathcal{Z}_\text{out}^\star (m^\top q^{-1/2} \xi, Q^\star - m^\top q^{-1} m, Q^\star) \left( f_\text{out}(\sqrt{q} \xi, V, q)^{\otimes 2}\right) \right], \\
    \hat{V} = \alpha \mathbb{E}_{\xi}[\mathcal{Z}_\text{out}^\star( m^\top q^{-1/2} \xi, Q^\star - m^\top q^{-1} m, Q^\star) (2\partial_q l({\rm prox}_\ell(\omega,V, q), q)- f_\text{out}( \sqrt{q} \xi, V, q) \xi^\top q^{-1/2})) ] + \hat{m}m^\top q^{-1}, \\
\end{cases}
\end{equation}
\end{small}

where we defined 
\begin{equation} \label{eq:foutdef}
    f_{\rm out}(\omega, V, q) = V^{-1}({\rm prox}_\ell(\omega,V, q) - \omega).
\end{equation}

\subsection{Test and Train Loss}

The test loss or generalization error is defined in full generality as:

\begin{equation} \label{eq:testlossreplica}
    \epsilon_g = \mathbb{E}_{x}\left[\ell\left( \frac{\mathbf{w^{\star\top}\mathbf{x}}}{\sqrt{d}}, \frac{\mathbf{w^\top} \mathbf{x}}{\sqrt{d}},\frac{
\mathbf{w^\top}\mathbf{w}}{d}, \frac{\mathbf{w^{\star\top}\mathbf{w^\star}}}{d}\right)\right]
\end{equation}
where $x$ is intended to be independent from the train set.
This is a quantity that will depend on the solutions $m, q, V$ of the saddle point equations. This dependence appears through its arguments, that we assume concentrate to the following quantities in the $d \to \infty, \beta \to \infty$ limit:

\begin{equation}
\frac{\mathbf{w^{\star\top}\mathbf{w^\star}}}{d} \xrightarrow[d \to\infty]{} Q^\star\,,
\qquad
\frac{\mathbf{w}^\top\mathbf{w}}{d}
\xrightarrow[d\to\infty]{}
Q
\xrightarrow[\beta\to\infty]{}
q\,,
\end{equation}
as well as
\begin{equation}
\label{eq:concentration}
\begin{split}
    &\frac{\mathbf{w^\top} \mathbf{x}}{\sqrt{d}} = \frac{\mathbf{w^\top} \left( \left(\id + \frac{\mathbf{W^\star}F_1(\Lambda, Q^\star)\mathbf{W^{\star\top}}}{d} \right)\mathbf{z}_\mu + \frac{\mathbf{W^\star} F_2(\Lambda, Q^\star)}{\sqrt{d}}\right) }{\sqrt{d}} \xrightarrow[d\to\infty]{} h_0 + m (F_1(\Lambda, Q^\star)h_0^\star + F_2(\Lambda, Q^\star))\\
    &\frac{\mathbf{w}^{\star\top}\mathbf{x}}{\sqrt{d}}
\xrightarrow[d\to\infty]{}
h_0^\star + Q^\star (F_1(\Lambda, Q^\star) h_0^\star + F_2(\Lambda, Q^\star))\\
\end{split}
\end{equation}

with $h_0 \equiv \frac{\mathbf{w}^\top \mathbf{z}}{\sqrt{d}}$ and $h_0^\star \equiv \frac{\mathbf{w^\star}^\top \mathbf{z}}{\sqrt{d}}$. 
 
Therefore, the randomness in equation \cref{eq:testlossreplica} reduces to a low-dimensional integral over $P_{\rm latents}$ and the random variables $(h_0^\star, h_0)$, which follow the distribution $\begin{pmatrix}
    h_0^\star \\ h_0
\end{pmatrix} \sim \mathcal{N}\left(0, \begin{pmatrix}
    \ide & m^\top \\
    m & q
\end{pmatrix}\right)\,.$  

In \cref{app:replicaspecific} we will specify the test loss form for the specific setting of interest in the main text, as well as the numerical procedure used to compute it.
 
Another metric of interest is the training loss $\epsilon_t$ (at the global minimizer of the loss). While it is not straightforward to express it as a function of the summary statistics $m,q,V$, it is in fact easily obtained from our analysis, as it's proportional to the intensive free entropy.
 This is quite natural, as we choose $P_{\rm likelihood}$ such that for $\beta \to \infty$ it will describe the minima of the empirical risk:
        
\begin{equation}\label{eq:train_loss_generic}
    \epsilon_t =- \lim_{\beta, d \to \infty} \frac{\mathbb{E}[\log \mathcal{Z}]}{\beta d} .
 \end{equation}
 
In \cref{app:replicaspecific} we will specify the free entropy form for the specific setting of interest in the main text, as well as the numerical procedure used to compute it.

\section{Derivation of results \ref{res:bo} and \ref{res:bo-weak} from the main text}
\label{app:BO_AMP}

\subsection{Derivation of result 3.1}\label{app:BO}

In order to obtain \cref{res:bo} from the main text,
we follow the computation for the more generic model detailed in \cref{app:replica_computation}. We consider the Bayes optimal free entropy $\Phi^{BO}$ in \cref{eq:freeentropy_general} and the corresponding maximization conditions in \cref{eq:bosaddlepointgeneral} and specialize them for the data model \cref{eq:datamodel} by setting $K=2$, $\Lambda_\mu = (\lambda_\mu, \nu_\mu)$, $\mathbf{w}^\star$ standard Gaussian in each component, and 
\begin{equation}
    F_1(\lambda, \nu) = \begin{pmatrix}
        0 & 0 \\
        0 & -\frac{\mathbb{E}[\nu^2]}{1+\mathbb{E}[\nu^2]+\sqrt{1+\mathbb{E}[\nu^2]}}
    \end{pmatrix}
    \quad\text{and}\quad
    F_2(\lambda, \nu) = \begin{pmatrix}
        \lambda \\ \frac{\nu}{\sqrt{1+\mathbb{E}[\nu^2]}}
    \end{pmatrix}
    \, .
\end{equation}
Additionally, we remark that the order parameters will be $q, \hat{q} \in \mathbb{R}^{2\times2}$ (see \cref{app:data-app} for the notations).

Starting from equations \cref{eq:freeentropy_general}, \cref{eq:freeentropy_general_parts} and \cref{eq:free_entropy_output}, the free entropy becomes

{\small
\begin{equation}\label{eq:BOfreeentropy} 
\begin{aligned}
\Phi^{\mathrm{BO}} &= \max_q \phi^{\rm BO}(q) \\
&= \underset{q,\hat q}{\mathrm{max}}
\Bigg\{
    -\frac{1}{2}\Tr\!\left(q \hat q^\top\right)
    + \frac{1}{2}\Tr(\ide + \hat q)
    - \frac{1}{2}\log\det(\ide + \hat q) 
    + \alpha\, \mathbb{E}_\xi\!\left[
        Z_{\text{out}}^\star(\sqrt{q}\,\xi,\, \ide - q, \ide)\,
        \log Z_{\text{out}}^\star(\sqrt{q}\,\xi,\, \ide - q, \ide)
    \right]
\Bigg\}.
\end{aligned}
\end{equation}
}
where we recall the definition $\mathcal{Z}_{\rm out}^\star(\omega, V, Q) = \mathbb{E}_{h\sim\mathcal{N}(\omega,\, V)} [P_{\rm out}^\star(h, Q)]\,$ from \cref{eq:zoutfoutdefinition}. We specify our data model \cref{eq:datamodel} following \cref{eq:F_1_and_F_2} and \cref{eq:PoutfromF1F2} to get its component $P_{\rm out}^\star \left( h, Q=\ide\right)$:

\begin{equation} \label{eq:poutspecific}
    P_\text{out}^\star\left(h, \ide\right) = \sqrt{1+\mathbb{E}[\nu^2]} \mathbb{E}_{\lambda, \nu} \left[  e^{ \lambda h_1 + \sqrt{1+\E[\nu^2]}\nu h_2 -\frac{1}{2}(\lambda^2 + \nu^2) -\frac{1}{2}\E[\nu^2]h_2^2}\right]\,.
\end{equation}

The extremisation in Eq. \ref{eq:BOfreeentropy} is performed over the order parameters $q, \hat q$. In particular, we can now define
{\small
\begin{equation}\label{eq:phi_BO}
     \phi^{\rm BO}(q)= \max_{\hat q}
    \Bigg\{
    -\frac{1}{2}\Tr\!\left(q \hat q^\top\right)
    + \frac{1}{2}\Tr(\ide + \hat q)
    - \frac{1}{2}\log\det(\ide + \hat q) 
    + \alpha\, \mathbb{E}_\xi\!\left[
        Z_{\text{out}}^\star(\sqrt{q}\,\xi,\, \ide - q, \ide)\,
        \log Z_{\text{out}}^\star(\sqrt{q}\,\xi,\, \ide - q, \ide)
    \right]
\Bigg\}.
\end{equation}
}
to fully characterize \cref{res:bo}.

\textbf{Saddle point equations}

To find the solution $q^{BO} = {\rm argmax}_{q} \, \phi^{\rm BO}(q)$ we extremise $\phi^{BO}$ from which we get the following system of equations:
\begin{equation} \label{eq:stateevolutionapp}
\left\{
\begin{aligned}
q &= \hat{q} (\ide + \hat{q})^{-1}, \\[6pt]
\hat q
&=
\alpha \mathbb{E}_\xi \left[  Z_\text{out}^\star( \sqrt{q} \xi, \ide -q, \ide) f_\text{out}^\star( \sqrt{q} \xi, \ide -q, \ide)^{\otimes 2}\right].
\end{aligned}
\right.
\end{equation}

 We often will represent this system of equations with $q = F^{BO}(q)$.

\subsection{Derivation of \cref{res:bo-weak}}
\label{app:linearstability_AMP}

In this section, we derive \cref{res:bo-weak} starting from stating the Approximate Message Passing (AMP) algorithm and linearising its corresponding state evolution equations.

\subsubsection{Approximate Message Passing (AMP)}\label{app-amp}

First, we state the particular Approximate Message Passing algorithm we used for the experiments shown in \cref{fig:cosine}, algorithm  \ref{alg:amp}. This particular AMP algorithm was derived from relaxed-BP, following the lines of \cite{aubin2018committee}. Note that we have introduced $f_Q^\star$, another function derived from $P_\text{out}^\star$ as in \cref{eq:fQ}:

\begin{equation} \label{eq:fQ}
    f_Q^\star(\omega, V, Q) \equiv \partial_Q \log Z_\text{out}^\star(\omega, V, Q) = \frac{1}{Z_\text{out}^\star(\omega, V, Q) } \int dz \mathcal{N}(z; \omega, V) \partial_Q P_\text{out}^\star(z, Q).
\end{equation}
 \begin{algorithm}[tb]
\caption{Spiked Gaussian Model AMP}
\label{alg:amp}
\begin{algorithmic}[1]
\STATE \textbf{Input:} Data $\mathbf{X}\in\mathbb{R}^{n\times d}$.
\STATE \textbf{Initialize:} $\widehat{\mathbf{W}}^{t=0}\in\mathbb{R}^{K\times d}$, $\widehat{\mathbf{C}}^{t=0}_i\in\mathcal{S}^+_{K}$ for $i\in[d]$, $\mathbf{f}^{t=0}\in\mathbb{R}^{n\times K}$.
\FOR{$t \le T$}
  \STATE \textit{/* Update likelihood mean and variance */}
  \STATE \hspace{0.5em} For each $\mu\in[n]$,
  \STATE \hspace{1.5em} $\mathbf{V}_\mu^{t}=\frac{1}{d}\sum_{i=1}^d X_{\mu i}^2\,\widehat{\mathbf{C}}_i^{t}\in\mathbb{R}^{K\times K},\qquad
  \boldsymbol{\omega}_\mu^{t}=\frac{1}{\sqrt{d}}\sum_{i=1}^d X_{\mu i}\,\widehat{\mathbf{W}}_i^{t}-\mathbf{V}_\mu^{t}\mathbf{f}_\mu^{t-1}\in\mathbb{R}^{K}.$
  \STATE \hspace{0.5em} $\mathbf{Q}^{t}=\frac{1}{d}\sum_{i=1}^d\Big(\widehat{\mathbf{C}}_i^{t}+\widehat{\mathbf{W}}_i^{t}\widehat{\mathbf{W}}_i^{t\top}\Big)\in\mathbb{R}^{K\times K}.$
  \STATE \hspace{0.5em} For each $\mu\in[n]$,
  \STATE \hspace{1.5em} $\mathbf{f}_\mu^{t}=f^\star_{\mathrm{out}}\!\left(\boldsymbol{\omega}_\mu^{t},\mathbf{V}_\mu^{t},\mathbf{Q}^{t}\right)\in\mathbb{R}^{K},\qquad
  \partial\mathbf{f}_\mu^{t}=\partial_{\boldsymbol{\omega}} f^\star_{\mathrm{out}}\!\left(\boldsymbol{\omega}_\mu^{t},\mathbf{V}_\mu^{t},\mathbf{Q}^{t}\right)\in\mathbb{R}^{K\times K}.$
  \STATE \hspace{1.5em} $\mathbf{f}_{Q\mu}^{t}=f^\star_{Q}\!\left(\boldsymbol{\omega}_\mu^{t},\mathbf{V}_\mu^{t},\mathbf{Q}^{t}\right)\in\mathbb{R}^{K\times K}.$
  \STATE \textit{/* Update prior first and second moments */}
  \STATE \hspace{0.5em} For each $i\in[d]$,
  \STATE \hspace{1.5em} $\mathbf{A}_i^{t}=-\frac{1}{d}\sum_{\mu=1}^n X_{\mu i}^2\,\partial\mathbf{f}_\mu^{t}-\frac{2}{d}\sum_{\mu=1}^n \mathbf{f}_{Q\mu}^{t}\in\mathbb{R}^{K\times K}.$
  \STATE \hspace{1.5em} $\mathbf{b}_i^{t}=\frac{1}{\sqrt{d}}\sum_{\mu\in[n]} X_{\mu i}\mathbf{f}_\mu^{t}
  -\frac{1}{d}\sum_{\mu=1}^n X_{\mu i}^2\,\partial\mathbf{f}_\mu^{t}\,\widehat{\mathbf{W}}_i^{t}\in\mathbb{R}^{K}.$
  \STATE \hspace{1.5em} $\widehat{\mathbf{W}}_i^{t+1}=\big(\mathbf{I}+\mathbf{A}_i^{t}\big)^{-1}\mathbf{b}_i^{t}\in\mathbb{R}^{K},\qquad
  \widehat{\mathbf{C}}_i^{t+1}=\big(\mathbf{I}+\mathbf{A}_i^{t}\big)^{-1}\in\mathbb{R}^{K\times K}.$
\ENDFOR
\STATE \textbf{Return:} estimators $\widehat{\mathbf{W}}_{\mathrm{amp}}\in\mathbb{R}^{d\times K}$, $\widehat{\mathbf{C}}_{\mathrm{amp}}\in\mathbb{R}^{d\times K\times K}$.
\end{algorithmic}
\end{algorithm}

From \cref{fig:cosine} it can be seen that the performance of AMP, at least for the $P_{\rm latents}$ used in those experiments, perfectly matches the Bayes optimal estimator. In fact, it is known that the iterates of AMP follow so-called \emph{state evolution} equations \cite{berthier2020state},
\begin{equation} \label{eq:ampstateevolution}
    q^{AMP}_{t+1} = F^{BO}(q^{AMP}_t)
\end{equation}
where $F^{BO}$ is exactly the same as in equation \cref{eq:stateevolutionapp}.
 
\subsubsection{Linear stability of AMP and weak recovery}

We are interested in computing for which values of $\alpha$ the AMP iterate started from random initialization obtains positive overlap with the hidden spikes, i.e. $q_{11} >0$ for $\bu^\star$ and $q_{22} >0$ for $\bv^\star$. 
In order to do this, we linearize state evolution, \cref{eq:ampstateevolution} around its fixed point $q=\mathbb{O}$ (zero matrix), and look for which value of $\alpha$ the fixed point is linearly unstable.
 Note that $q=\mathbb{O}$ is always a fixed point of \cref{eq:ampstateevolution}, since
\begin{equation}
    Z_{\rm out}^\star( 0, \ide, \ide) = 1, \qquad f_{\rm out}^\star( 0, \ide, \ide) = 0 \, .
\end{equation} 
   
Note that by definition, $q \in \mathbb{S}_+^2$, the cone of positive semi-definite matrices. We linearize around the fixed point $q=\mathbb{O}$, with a perturbation $\delta q \in \mathbb{S}_+^2$:
\begin{equation}
    F^{BO}( q) \approx \alpha \mathcal{F}(\delta q) + \mathcal{O}(\| \delta q\|^2_F)
\end{equation}
with 
\begin{equation}
    \mathcal{F}(q) = \partial_\omega f_\text{out}^\star(\omega=0, V=\ide, Q=\ide) q \partial_\omega f_\text{out}^\star(\omega=0, V=\ide, Q=\ide)^\top 
\end{equation}
and we look at which value of alpha $\alpha_c$ the perturbation starts growing during the iteration, i.e. 
\begin{equation}
    \frac{1}{\alpha_c} = \sup_{q \in \mathbb{S}^{2}_+, \|q\|_F=1} \|\mathcal{F}(q) \|_F.
    \label{eq:alpha_critical}
\end{equation}
Let $\lambda_+$ be the largest eigenvalue of $\partial_\omega f_{\rm out}(0, \ide)$ and $v_+$ be the corresponding eigenvector, then
\begin{equation}
    \frac{1}{\alpha_c} = \sup_{q \in \mathbb{S}^{2}_+, \|q\|_F=1} \|\mathcal{F}(q) \|_F = \lambda_+^2
\end{equation}
and the perturbation along which the fixed point becomes unstable is in direction of $v_+^{\otimes 2}$.
Indeed, 
$\partial_\omega f_{\rm out}^\star$ is symmetric therefore diagonalizable, so let $\partial_\omega f_{\rm out}^\star = Q \Lambda Q^\top$ with $\Lambda = \begin{pmatrix}
    \lambda_1 & 0 \\ 0 & \lambda_2
\end{pmatrix}$ ordered such that  $|\lambda_1| > |\lambda_2|$.
The Frobenius norm is invariant under orthogonal change of basis, so we have:
\begin{equation}
    \sup_{q \in \mathbb{S}^2_+, \|q\|_F=1} \|\mathcal{F}(q) \|_F = \sup_{q' \in \mathbb{S}^2_+, \|q'\|_F=1} \|\Lambda q' \Lambda \|_F
\end{equation}
with $q' \equiv Q^\top q Q$. Since $\|\Lambda q' \Lambda \|_F^2 = \lambda_1^4 q_{11}^{'2} + 2 \lambda_1^2 \lambda_2^2 q_{12}^{'2} + \lambda_2^4 q_{22}^{'2}$ and $|\lambda_1| > |\lambda_2|$, this is maximized by $q' = \begin{pmatrix}
    1 & 0 \\ 0 & 0
\end{pmatrix}$ and
\begin{equation}
    \sup_{q \in \mathbb{S}^{2}_+, \|q\|_F=1} \|\mathcal{F}(q) \|_F = \lambda_+^2
\end{equation}
where the supremum is reached for a perturbation in the direction
\begin{equation} \label{eq:perturbation}
    q_{\rm pert} = v_+^{\otimes 2} = 
    Q \begin{pmatrix}
    1 & 0 \\ 0 & 0
\end{pmatrix} Q^\top .
\end{equation}

\textbf{Weak recovery of $\bu^\star$.}
Now, using the explicit form
\begin{equation} \label{eq:dfout}
    \partial_\omega f_\text{out}^\star(0, \ide, \ide) = \begin{pmatrix}
        \mathbb{E}[\lambda^2] & \sqrt{\frac{1}{1+\mathbb{E}[\nu^2]}} \mathbb{E}[\lambda \nu] \\\sqrt{\frac{1}{1+\mathbb{E}[\nu^2]}} \mathbb{E}[\lambda \nu] & 0
    \end{pmatrix} 
\end{equation}
we have, letting $\gamma \equiv \frac{\mathbb{E}[\lambda \nu]^2}{\mathbb{E}[\lambda^2]^2(1+\mathbb{E}[\nu^2])}$, that the eigenvalues of $\partial_\omega f_\text{out}^\star$ are 
\begin{equation}
    \lambda_{\pm} = \frac{\mathbb{E}[\lambda^2](1\pm \sqrt{1+4 \gamma})}{2}
\end{equation}

with 

    \begin{equation}
    v_+ \propto \begin{pmatrix}
        1 \\
        \frac{2\sqrt{\gamma}}{1+\sqrt{1+4\gamma}}
    \end{pmatrix} \, \text{such that} \|v_+\|=1 .
\end{equation}

This gives
\begin{equation} 
    \alpha_c = \frac{1}{\mathbb{E}[\lambda^2]^2} f(\mathbb{E}[\lambda^2], \mathbb{E}[\nu^2], \mathbb{E}[\lambda\nu]) 
\end{equation}
with 
\begin{equation}\label{eq:boweakrecovery_app}
    f(\mathbb{E}[\lambda^2], \mathbb{E}[\nu^2], \mathbb{E}[\lambda\nu]) = \frac{4}{\left(1+\sqrt{1+\frac{4 \mathbb{E}[\lambda \nu]^2}{\mathbb{E}[\lambda^2]^2(1+\mathbb{E}[\nu^2])}}\right)^2}
\end{equation}
and
\begin{equation} \label{eq:pertdirection}
    q_{\rm pert} = v_+^{\otimes 2} \propto \begin{pmatrix}
        1 & \frac{2\sqrt{\gamma}}{1+\sqrt{1+4\gamma}}\\ 
         \frac{2\sqrt{\gamma}}{1+\sqrt{1+4\gamma}} & \frac{2 \gamma}{1+2\gamma+\sqrt{1+4\gamma}}
    \end{pmatrix}.
\end{equation}
This shows that for $\alpha > \alpha_c$, the AMP iteration started at initialization will develop a linear instability in the $u$ direction (direction of $q_{11}$ positive), and hence achieve weak recovery of $\bu^\star$.
      
\textbf{Weak recovery of $\bv^\star$.}
In the case of $\mathbb{E}[\lambda \nu] \neq 0$, we have $\gamma > 0$ and we see from \cref{eq:pertdirection} that at $\alpha = \alpha_c$ the AMP iteration started at initialization will develop a linear instability also in the $v$ direction (direction of $q_{22} \neq 0$), and hence achieve weak recovery of $\bv^\star$.

The case of $\mathbb{E}[\lambda \nu] = 0$ is trickier. There, the linear instability around initialization only magnetizes against $\bu^\star$ (i.e. in the direction of $q_{11} >0$. To assess weak recovery of $\bv^\star$, we then study how the iteration proceeds from an overlap of the form 
\begin{equation}
    \begin{pmatrix}
    q_u & 0 \\ 0 & 0
\end{pmatrix}
\end{equation}
with $q_u$ small but positive. At the next iteration, we will have that the overlaps evolve into 
\begin{equation}
    F^{BO} \left(\begin{pmatrix}
    q_u & 0 \\ 0 & 0
\end{pmatrix}\right)
\end{equation}
and we ask whether the $(2,2)$ component (that we call from now on $q_v$) of the resulting overlap remains zero or not.

\emph{Claim}: after one additional AMP step, letting $k^\star$ be the correlation exponent of $P_{\rm latents}$ (recall Def. \cref{def:corr_exp}), the leading order term of the expansion of $q_v$ as a function of $q_u$ is of the form

\begin{equation}\label{eq:hierarchyofmomentexp}
    q_v \equiv F^{BO} \left(\begin{pmatrix}
    q_u & 0 \\ 0 & 0
\end{pmatrix}\right)_{22} =
\frac{\alpha}{1+\E[\nu^2]}\frac{q_u^{k^\star} \mathbb{E}[\nu \lambda^{k^\star}]^2}{k^\star!}
   + \mathcal{O}(q_u^{k^\star+1}).
\end{equation}

Then, any small magnetization along $\bu^\star$ will lead to a small magnetization along $\bv^\star$, i.e. weak recovery of $\bu^\star$ implies weak recovery of $\bv^\star$. We thus show that if $k^\star$ is finite, AMP achieves non-zero cosine similarity with both spikes for all $\alpha > \alpha_c$, thus showing result \cref{res:bo-weak}.

\emph{Proof}:

We are left to prove \cref{eq:hierarchyofmomentexp} using the state evolution equations \cref{eq:stateevolutionapp}.

We start with the equation for $q$, letting

\begin{equation}
    \hat{q} = \begin{pmatrix}
        \hat{q}_u & \hat{q}_{uv} \\
        \hat{q}_{uv} & \hat{q}_v
    \end{pmatrix}, \, \qquad q_v \equiv F^{BO} \left(\begin{pmatrix}
    q_u & 0 \\ 0 & 0
\end{pmatrix}\right).
\end{equation}

We have from \cref{eq:stateevolutionapp} that

\begin{equation}
    q = \hat{q}(\ide + \hat{q})^{-1} 
    \label{eq:q_hat_solution}
\end{equation}

which implies

\begin{equation} \label{eq:qv}
    q_v = \frac{\hat{q}_v(1+\hat{q}_v)-\hat{q}_{uv}^2}{(1+\hat{q}_v)(1+\hat{q}_u)} \,
\end{equation}

evaluated at the point $\begin{pmatrix}
    q_u & 0 \\ 0 & 0
\end{pmatrix}$. After much work of simplification, one can reach the following expressions for $\hat{q}_u, \hat{q}_v, \hat{q}_{uv}$:

\begin{equation} \label{eq:qhatmoments}
\begin{aligned}
    \hat{q}_u
    &= \alpha \frac{\E[\lambda^2]^2 q_u}{(1+\E[\lambda^2] q_u)^2}, \\[6pt]
    \hat{q}_{uv}
    &= \alpha \sqrt{\frac{1}{1+\E[\nu^2]}}
    \frac{\E[\lambda^2] q_u (1+\E[\lambda^2] q_u)}{(1+2\E[\lambda^2] q_u)^{3/2}}
    \,
    \mathbb{E}\!\left[
    \lambda \nu
    \exp\!\left(
    -\frac{1}{2}
    \frac{ q_u^2 \lambda^2}{1+2 \E[\lambda^2] q_u}
    \right)
    \right], \\[6pt]
    \hat{q}_v
    &= \frac{\alpha }{1+\E[\nu^2]}
    (1+\E[\lambda^2] q_u)
    \,
    \mathbb{E}_\xi\!\left[
    e^{-\frac{\E[\lambda^2] q_u \xi^2}{1+\E[\lambda^2] q_u}}
    \left(
    \mathbb{E}_{\lambda,\nu}\!\left[
    \nu
    e^{-\frac{ q_u \lambda^2}{2}
    +\sqrt{ q_u}\,\xi \lambda}
    \right]
    \right)^2
    \right].
\end{aligned}
\end{equation}

From \cref{eq:qhatmoments}, we see that for all three elements of $\hat{q}$, the leading order term is at least $\mathcal{O}(q_u)$. 

From \cref{eq:qv} we will be able conclude that the leading order term of $q_v$ will be the leading order term of $\hat{q}_v$, in two steps: first, we show that if $k^\star$ is the correlation exponent, then the leading order term of $\hat{q}_v = a_{k^\star}\mathbb{E}[\lambda^{k^\star} \nu]^2 q_u^{k^\star} + \mathcal{O}(q_u^{k^\star+1})$, i.e., that whichever moment is the first nonzero one, $\hat{q}_v$ contains it with a nonzero coefficient $a_{k^\star}$. With that, it is clear that the denominator will only contribute with terms of order at least $\mathcal{O}(q_u^{k^\star+1})$, $\hat{q}_v^2 = \mathcal{O}(q_u^{2k^\star})$ and $\hat{q}_{uv}^2 = \mathcal{O}(q_u^{2k^\star})$, from which we get that also $q_v = a_{k^\star}\mathbb{E}[\lambda^{k^\star} \nu]^2 q_u^{k^\star} + \mathcal{O}(q_u^{k^\star+1})$.

To conclude, we remain to show that if $k^\star$ is the correlation exponent, then the leading order term of $\hat{q}_v = a_{k^\star}\mathbb{E}[\lambda^{k^\star} \nu]^2 q_u^{k^\star} + \mathcal{O}(q_u^{k^\star+1})$. Note that we can write

\begin{equation}
    e^{-\frac{ q_u \lambda^2}{2}
    +\sqrt{ q_u}\,\xi \lambda} = \sum_{n=0}^{\infty} \He_n(\xi) \frac{q_u^{n/2}\lambda^n}{n!},
\end{equation}

and then

\begin{equation}
    \hat{q}_v
    = \frac{\alpha (1+\E[\lambda^2] q_u)}{1+\E[\nu^2]}
    \mathbb{E}_\xi\!\left[
    e^{-\frac{\E[\lambda^2] q_u \xi^2}{1+\E[\lambda^2] q_u}}
    \left(
    \sum_{n=1}^{\infty} \He_n(\xi) \frac{q_u^{n/2} \E[\nu \lambda^n]}{n!}
    \right)^2
    \right] .
\end{equation}

Using our assumption that $\E[\nu \lambda^{k^\star}]$ is the first nonzero moment of the type $\E[\nu \lambda^n]$, we get that the leading order term of the above is

\begin{equation}
    \hat{q}_v = \frac{\alpha}{1+\E[\nu^2]}
    \mathbb{E}_\xi\!\left[
    \left(
     H_{k^\star}(\xi) \frac{q_u^{k^\star/2} \E[\nu \lambda^{k^\star}]}{k^\star!}
    \right)^2
    \right] + \mathcal{O}(q_u^{k^\star+1}) = \frac{\alpha}{1+\E[\nu^2]}\frac{q_u^{k^\star} \E[\nu \lambda^{k^\star}]^2}{k^\star!}
   + \mathcal{O}(q_u^{k^\star+1})
\end{equation}

and clearly $\frac{\alpha}{1+\E[\nu^2]}\frac{1}{k^\star!} > 0$ so that we prove \cref{eq:hierarchyofmomentexp}.
                    
\textbf{The case of $k^\star = \infty$ and the independent latents example}

What happens when $k^\star = \infty$? One such case we studied was the \emph{independent} latents setting, in which $\lambda \perp\!\!\!\perp\nu$. In this setting, it can be shown that  

\begin{equation}
    q = \begin{pmatrix}
    q_u(\alpha) & 0 \\ 0 & 0
\end{pmatrix}
\end{equation} is a superstable fixed point of AMP's state evolution for all $\alpha$. This suggests that efficient weak recovery of $\mathbf{v^\star}$ requires $n = \mathcal{O}(d^2)$ samples, in accordance to the results previously found in \cite{bardone2024sliding}. 

Nevertheless, there are multiple other choices for $P_{\rm latents}$ with $k^\star = \infty$. For example, taking $\lambda \sim \mathcal{N}(0,1)$ and $\nu=|\lambda|\xi$ with $\xi \sim \frac{1}{2} \delta_{1,\xi}+ \frac{1}{2} \delta_{\xi,-1}$. Characterizing such settings is an interesting direction left for future work.

\textbf{Fixed point iteration to solve \cref{eq:stateevolutionapp}}

We solve the system of equations \cref{eq:stateevolutionapp} numerically via fixed-point iteration. We initialize the order parameter $q$ to

\begin{equation}
    q^{(0)}=\begin{pmatrix}
    0.001 & 0.0001 \\
    0.0001 & 0.001
\end{pmatrix}
\end{equation}

to represent an uninformative initialization. 

At each iteration, we first estimate the conjugate variable $\hat{q}$ using the last equation of \cref{eq:stateevolutionapp}. For the $P_{\rm latents}$ setting we considered (see Figures \ref{fig:cosine} and \ref{fig:train-test-down}), the function $f_{\rm out}^\star$ can be computed analytically in terms of Gaussian tail functions. However, the outer integral over $\xi$ cannot be solved analytically, and we used Monte Carlo integration to estimate it with $N_{\text{samples}}\approx10^6$. Using this estimate, we compute $q_{\text{new}}$ using the first equation of \cref{eq:stateevolutionapp}.

To ensure convergence, we apply damping to the updates. Let $q^{(t)}$ denote the state at iteration $t$. The update rule is given by:
\begin{equation}
    q^{(t+1)} = \epsilon q^{(t)} + (1-\epsilon) q_{\text{new}}.
\end{equation}

We use a damping factor of $\epsilon=0.6$ and perform $\approx 200$ iterations, which was the typical time to convergence. The solver is implemented in Python using NumPy \cite{harris2020array} and parallelized over multiple CPUs using \texttt{mpi4py} \cite{dalcin2011parallel}.

\textbf{Explicit form of $P_{\rm out}^\star$, $f_{\rm out}^\star$ for the $P_{\rm latents}$ setting considered in Figures \ref{fig:cosine} and \ref{fig:train-test-down}}

Simplifying further equation \cref{eq:poutspecific} one gets

\begin{equation}
      P_{\rm out}^\star(h, \ide) = 
\sqrt{1+\E [\nu^2]}
e^{-\frac{1}{2}\E [\nu^2] (1+h_2^2)} 
\mathbb{E}_{\lambda, v}
\left[
e^{-\frac{1}{2} \lambda^2}
e^{\lambda h_1}
e^{ \sqrt{1+\E [\nu^2]} \nu h_2}
\right] .
\end{equation}

Additionally, we simplify $f_{\rm out}^\star(\omega, V, \ide)$ to get

\begin{equation}
    f_{\rm out}^\star(\omega, V, \ide)= (\ide+\Gamma V)^{-1} \Bigg( \frac{\mathbb{E}_{\nu, \lambda} \left[ \exp(-\frac{1}{2} \lambda^2) \exp \left(\frac{1}{2}\beta^\top (V^{-1}+\Gamma)^{-1} \beta + w^\top V^{-1} (V^{-1}+\Gamma)^{-1} \beta  \right) \beta    \right]}{\mathbb{E}_{\nu, \lambda} \left[ \exp(-\frac{1}{2} \lambda^2) \exp \left(\frac{1}{2}\beta^\top (V^{-1}+\Gamma)^{-1} \beta + w^\top V^{-1} (V^{-1}+\Gamma)^{-1} \beta \right)   \right]} - \Gamma \omega \Bigg) 
\label{eq:gout}
\end{equation}

with $\Gamma \equiv \begin{pmatrix}
    0 & 0 \\ 0 & \E[\nu^2]
\end{pmatrix}$ and $\beta \equiv \begin{pmatrix}
    \lambda & \sqrt{1+\E[\nu^2]} \nu
\end{pmatrix}$.

The necessary expectations over $P_{\rm latents}$ are given by, letting $t \equiv \Phi^{-1}(0.75)$ and $
Q(x) \equiv \int_x^{\infty} \frac{1}{\sqrt{2\pi}} \, e^{-t^2/2} \, dt$:
{\small
\begin{equation}
\begin{aligned}
\E_{\lambda, \nu}\left[e^{\left(-\frac{a-1}{2}\lambda^2+b\lambda+c\nu+d\lambda\nu\right)}\right]= \frac{1}{\sqrt{ a}}
\Bigg[
& e^{-c} e^{\frac{(d-b)^2}{2 a}}
\left(
\frac{1}{2}Q\!\left(\sqrt{\frac{ a}{2}}\left(-t-\frac{b-d}{a}\right)\right)
-
\frac{1}{2}Q\!\left(\sqrt{\frac{ a}{2}}\left(t-\frac{b-d}{a}\right)\right)
\right)
\\
&+
e^{c} e^{\frac{(b+d)^2}{2 a}}
\left(
\frac{1}{2}Q\!\left(\sqrt{\frac{ a}{2}}\left(t+\frac{b+d}{ a}\right)\right)
+
\frac{1}{2}Q\!\left(\sqrt{\frac{ a}{2}}\left(t-\frac{b+d}{ a}\right)\right)
\right)
\Bigg],
\end{aligned}
\end{equation}
}

{ \small
\begin{equation}
\begin{aligned}
\E_{\lambda, \nu}\left[\lambda e^{\left(-\frac{a-1}{2}\lambda^2+b\lambda+c\nu+d\lambda\nu\right)}\right] = \frac{1}{\sqrt{ a}}
\Bigg[
& - e^{c} e^{\frac{(b+d)^2}{2 a}}
\left(
-\frac{b+d}{ a}\frac{1}{2}
Q\!\left(\sqrt{\frac{ a}{2}}\left(t+\frac{b+d}{ a}\right)\right)
+
\frac{1}{\sqrt{2\pi a}}
e^{-\frac{ a}{2}\left(t+\frac{b+d}{ a}\right)^2}
\right)
\\
&+
e^{-c} e^{\frac{(d-b)^2}{2 a}}
\Bigg[
\frac{b-d}{ a}
\left(
\frac{1}{2}Q\!\left(\sqrt{\frac{ a}{2}}\left(-t-\frac{b-d}{a}\right)\right)
-
\frac{1}{2}Q\!\left(\sqrt{\frac{ a}{2}}\left(t-\frac{b-d}{ a}\right)\right)
\right)
\\
&\qquad\qquad
+
\frac{1}{\sqrt{2\pi a}}
\left(
e^{-\frac{ a}{2}\left(-t-\frac{b-d}{ a}\right)^2}
-
e^{-\frac{ a}{2}\left(t-\frac{b-d}{ a}\right)^2}
\right)
\Bigg]
\\
&+
e^{c} e^{\frac{(b+d)^2}{2 a}}
\left(
\frac{b+d}{ a}\frac{1}{2}
Q\!\left(\sqrt{\frac{ a}{2}}\left(t-\frac{b+d}{ a}\right)\right)
+
\frac{1}{\sqrt{2\pi a}}
e^{-\frac{ a}{2}\left(t-\frac{b+d}{ a}\right)^2}
\right)
\Bigg],
\end{aligned}
\end{equation}
}

{ \small
\begin{equation}
\begin{aligned}
\E_{\lambda, \nu}\left[\nu e^{\left(-\frac{a-1}{2}\lambda^2+b\lambda+c\nu+d\lambda\nu\right)}\right] = \frac{1}{\sqrt{ a}}
\Bigg[
& - e^{-c} e^{\frac{(d-b)^2}{2 a}}
\left(
\frac{1}{2}Q\!\left(\sqrt{\frac{ a}{2}}\left(-t-\frac{b-d}{ a}\right)\right)
-
\frac{1}{2}Q\!\left(\sqrt{\frac{ a}{2}}\left(t-\frac{b-d}{ a}\right)\right)
\right)
\\
&+
e^{c} e^{\frac{(b+d)^2}{2 a}}
\left(
\frac{1}{2}Q\!\left(\sqrt{\frac{ a}{2}}\left(t+\frac{b+d}{ a}\right)\right)
+
\frac{1}{2}Q\!\left(\sqrt{\frac{ a}{2}}\left(t-\frac{b+d}{ a}\right)\right)
\right)
\Bigg].
\end{aligned}
\end{equation}
}

\section{Derivation of \cref{res:erm}: ERM for the auto-encoder}
\label{app:replicaspecific}

\subsection{Derivation of \cref{res:erm}}

In order to obtain \cref{res:erm} from the main text, 
we use the result of the more generic computation detailed in \cref{app:replica_computation} in order to study the empirical risk \cref{eq:empiricalrisk}. We consider the Empirical Risk Minimisation (ERM) free entropy $\Phi^{ERM}$ starting from \cref{eq:freeentropy_general}, \cref{eq:erm_phi_prior} and \cref{eq:erm_saddlepoint_general}, as well as the corresponding maximization conditions in \cref{eq:erm_saddlepoint_general} and specialise them for the data model \cref{eq:datamodel} by setting $p=1$, $K=2$, $\Lambda_\mu = (\lambda_\mu, \nu_\mu)$, $\mathbf{w}^\star$ standard Gaussian in each component, 
\begin{equation}
    F_1(\lambda, \nu) = \begin{pmatrix}
        0 & 0 \\
        0 & -\frac{\mathbb{E}[\nu^2]}{1+\mathbb{E}[\nu^2]+\sqrt{1+\mathbb{E}[\nu^2]}}
    \end{pmatrix}
    \quad\text{and}\quad
    F_2(\lambda, \nu) = \begin{pmatrix}
        \lambda \\ \frac{\nu}{\sqrt{1+\mathbb{E}[\nu^2]}}
    \end{pmatrix}
    \, .
\end{equation}

Additionally, we must specialize the equations for the empirical risk \cref{eq:empiricalrisk}, by specifying the loss

\begin{equation}
\ell(h, q) = - h \sigma(h) + \frac{1}{2}q\sigma(h)^2 \,.
\label{eq:ermreplicaloss}
\end{equation}
Comparing \cref{eq:ermreplicaloss} with the empirical loss we aim to analyze (\cref{eq:empiricalrisk}), we have ignored the constant $\|\mathbf{x}\|^2$ that does not affect the minimizer. In addition, $\mathbb{E}[\|\mathbf{x}\|^2]= d + \mathbb{E}[\lambda^2] = \mathcal{O}(d)$, which dominates the rest of the terms in the loss. To compute results such as in Fig.~\ref{fig:train-test-down}, we ignore this constant term by computing differences with respect to a baseline. 

In addition, we will keep for generality the $l^2$ regularization $r(w)=\Tilde{\lambda}\frac{w^2}{2}$ that was considered in the generic derivation in \cref{app:replica_computation}, but we note that all results in the main text are obtained setting $\Tilde{\lambda}=0$.

Finally, we remark that the order parameters will be $q, \hat{q}, V, \hat{V} \in \mathbb{R}$ and $m, \hat{m} \in \mathbb{R}^{1\times 2}$ (see \cref{app:data-app} for the notations).

\textbf{Free entropy}

Putting everything together, from \cref{eq:freeentropy_general},\cref{eq:erm_phi_prior} and \cref{eq:erm_phi_output}, we find that the ERM free entropy reduces to
{\small
\begin{equation} \label{eq:freeentropyspecific}
    \Phi = \max_{m,q,V,\hat m, \hat q, \hat V} \left( -m \hat{m}^{\top} - \frac{1}{2} V \hat{q} + \frac{1}{2} q \hat{V} + \frac{1}{2}\frac{\hat{q}+\hat{m}\hat{m}^\top}{\Tilde{\lambda} + \hat{V}} - \alpha \mathbb{E}_{\xi} \left[ Z_\text{out}^{\star}(m^\top q^{-1/2} \xi, \ide-m^\top q^{-1} m, \ide) \mathcal{M}_\ell( \sqrt{q}\xi, V, q) \right] \right)
\end{equation}
}
with, from \cref{eq:moreau_def},
\begin{equation}
    \mathcal{M}_\ell(\sqrt{q}\xi, V, q) = \text{inf}_h \left(- h \sigma(h) + \frac{1}{2}q\sigma(h)^2 +\frac{1}{2}\frac{(h-\sqrt{q}\xi)^2}{V}\right).
\end{equation}

Finally, we recall the definition $\mathcal{Z}_{\rm out}^\star(\omega, V, Q) = \mathbb{E}_{h\sim\mathcal{N}(\omega,\, V)} [P_{\rm out}^\star(h, Q)]\,$ from \cref{eq:zoutfoutdefinition}. We specify our data model \cref{eq:datamodel} following \cref{eq:F_1_and_F_2} and \cref{eq:PoutfromF1F2} to get its component $P_{\rm out}^\star \left( h, Q=\ide\right)$:

\begin{equation} \label{eq:poutspecific_erm}
    P_\text{out}^\star\left(h, \ide\right) = \sqrt{1+\mathbb{E}[\nu^2]} \mathbb{E}_{\lambda, \nu} \left[  e^{ \lambda h_1 + \sqrt{1+\E[\nu^2]}\nu h_2 -\frac{1}{2}(\lambda^2 + \nu^2) -\frac{1}{2}\E[\nu^2]h_2^2}\right]\,.
\end{equation}

The extremisation in Eq. \ref{eq:freeentropyspecific} is performed over the order parameters $m, q, V, \hat m, \hat q, \hat V$. In particular, we can now define
{\small
\begin{multline}
\label{eq:phi_ER}
    \phi_{\text{ER}}(m_u, m_v, q)\\= \max_{V, \hat m, \hat q, \hat V}
    \left( -m \hat{m}^{\top} + \frac{1}{2} (q \hat{V}-V \hat{q}) + \frac{1}{2}\frac{\hat{q}+\hat{m}\hat{m}^\top}{\Tilde{\lambda} + \hat{V}} - \alpha \mathbb{E}_{\xi} \left[ 
    Z_\text{out}^{\star}(m^\top q^{-1/2} \xi, \ide-m^\top q^{-1} m, \ide) \mathcal{M}_\ell( \sqrt{q}\xi, V, q) \right] \right)
\end{multline}
}
to fully characterize \cref{res:erm}. We write $m=(m_u, m_v)$.

\textbf{Saddle point equations}

Starting from \cref{eq:erm_saddlepoint_general}, and specialising to our setting, we get the following saddle point equations:

\begin{equation}
\label{eq:SE_specific_setting}
\left\{
\begin{aligned}
m &= \frac{\hat{m}}{\hat{V} + \Tilde{\lambda}}, \\[6pt]
q &= \frac{\hat{q}+\hat{m}\hat{m}^\top}{(\hat{V} + \Tilde{\lambda})^2}, \\[6pt]
V &= \frac{1}{\hat{V} + \Tilde{\lambda}} ,\\[6pt]
\hat{m} &= \alpha  \mathbb{E}_{\xi} \Big[
Z_\text{out}^{\star}\!\left( m^\top q^{-1/2} \xi, \ide-m^\top q^{-1} m \right)
\, f_\text{out}(\sqrt{q} \xi, V, q) \,
f_\text{out}^{\star \top}\!\left( m^\top q^{-1/2} \xi, \ide - \frac{m^\top m}{q} \right)
\Big] ,\\[6pt]
\hat{q} &= \alpha \mathbb{E}_{\xi} \Big[
Z_\text{out}^\star\!\left( m^\top q^{-1/2} \xi, \ide - \frac{m^\top m}{q} \right)
\left( f_\text{out}(\sqrt{q} \xi, V, q) \right)^2
\Big] ,\\[6pt]
\hat{V} &= \alpha \mathbb{E}_{\xi} \Big[
Z_\text{out}^\star\!\left( m^\top q^{-1/2} \xi, \ide - \frac{m^\top m}{q} \right)
\big( \sigma(\prox_h)^2
- f_\text{out}(\sqrt{q} \xi, V, q)\,  q^{-1/2} \xi \big)
\Big]
+ \frac{\hat{m}m^\top}{q} .
\end{aligned}
\right.
\end{equation}

Here, we recall the definitions of $f_{\rm out}^\star(\omega, V, Q)$ and $f_{\rm out}(\omega, V, q)$ from \cref{eq:zoutfoutdefinition,eq:foutdef}. Finally, specifying our loss $l(h, q)$, we get from Eq. \ref{eq:prox_def}

\begin{equation}
    {\rm prox}_\ell(\omega,V, q) = \arg\min_h \left\{ - h \sigma(h) + \frac{1}{2}q\sigma(h)^2 + \frac{1}{2} \frac{(h - \omega)^2}{V}  \right\}.
\end{equation}

\subsection{Numerical solution of the saddle point equations}
To numerically solve the saddle point equations \cref{eq:SE_specific_setting} efficiently , we derive an alternative version of the equations for the conjugate order parameters $\hat m, \hat q, \hat V$. Both methods were implemented numerically and provide matching results.

\paragraph{Rewriting the fixed-point equation.} The alternative equations can be derived starting from \eqref{eq:before_substitution} without doing the substitution \eqref{eq:substitution}. Taking the $s\rightarrow 0$ and $\beta\rightarrow \infty$ limit, the ERM free entropy \eqref{eq:freeentropyspecific} reads in that case
\begin{equation}
\Phi = \max_{m,q,V,\hat m, \hat q, \hat V} \left(
- m \hat m^\top -\frac{1}{2} V\hat q +\frac{1}{2} q\hat V
   +\frac{1}{2}\frac{\hat q+ \hat m \hat m^\top}{\tilde\lambda +\hat V}
   - \alpha \underbrace{\mathbb{E}_{\xi}\mathbb{E}_{h^\star \sim \mathcal{N}(0, Q^\star)} \left[P_{\rm{out}}^\star(h^\star, Q^\star) \mathcal{M}_\ell (h^\star, \xi, q, V, m)\right]
   }_{-\Phi_{\text{output}}}\right)
\end{equation}
with
\begin{equation}
\mathcal{M}_\ell (h^\star, \xi, q, V, m)=\min_h\left( -h \sigma(h)+\frac{1}{2}q \sigma(h)^2 +\frac{1}{2V}\left[ h - V(\sqrt{-\tilde q_0}\xi-\tilde m_0 h^\star)\right]^2\right).
\end{equation}
$\tilde q_0$ and $\tilde m_0$ are defined in \eqref{eq:tilde_0_variables} as $\tilde m_0=-V^{-1}m$ and $\tilde q_0 = -V^{-1}qV^{-1}+V^{-1}m m^\top V^{-1}$.
Then, we plug-in the specific expression  \eqref{eq:poutspecific_erm} for $P_{\text{out}}^\star$ to obtain an expectation over the latent variables $\Lambda$. Recall additionally that we set $Q^\star=\id$.
Grouping the terms introduced by $P_{\text{out}}^\star$ with $h^\star$ to make Gaussians appear, the output term reads
\begin{equation}
\Phi_{\text{output}}=\mathbb{E}_{\xi, \Lambda}\left[\sqrt{1+\E[\nu^2]}\int \frac{dh^\star}{2\pi} e^{-\frac{1}{2}(h_1^\star-\lambda)^2}e^{-\frac{1}{2}(\sqrt{1+\E[\nu^2]}h^\star_2-\nu)^2}\mathcal{M}_\ell (h^\star, \xi, q, V, m) \right].
\end{equation}
 As in the main text, we stay in the $p=1, K=2$ setting, and note with subscripts $1$ and $2$ the first and second components of the vector $h^\star$. We now proceed to the change of variables $h^\star_{0,1}=h ^\star_1-\lambda, h^\star_{0,2}=\sqrt{1+\E[\nu^2]}h^\star_2-\nu$. This transformation inverts the relation \eqref{eq:concentration}. The output term then reads
\begin{equation}
\Phi_{\text{output}}=\mathbb{E}_{\xi, \Lambda, h^\star_0 \sim \mathcal{N}(0, \id)}\min_h\left( 
-h \sigma(h)+\frac{1}{2}q\sigma(h)^2+\frac{1}{2V}\left[ 
h-V\sqrt{-\tilde q_0}\xi -m \begin{pmatrix}
h^\star_{0,1}+\lambda \\
\frac{h^\star_{0,2}+\nu}{\sqrt{1+\E[\nu^2]}}
\end{pmatrix}
\right]^2
\right).
\end{equation}
Written in terms of $F_1$ and $F_2$, the free entropy reads
\begin{equation}\label{eq:free_entropy_for_computation}
\Phi = \text{max} \left(
- m \hat m^\top -\frac{1}{2} V\hat q +\frac{1}{2} q\hat V
   +\frac{1}{2}\frac{\hat q+ \hat m \hat m^\top}{\tilde\lambda +\hat V}
   - \alpha \mathbb{E}_{\Lambda, \xi}\mathbb{E}_{h^\star_0\sim \mathcal{N}(0, \id)} \mathcal{M}_\ell(h_0^\star, \xi, q, V, m)
   \right)
\end{equation}
with
\begin{equation}\label{eqs:Moreau_envelope_for_derivative}
\mathcal{M}_\ell(h_0^\star, \xi, q, V, m, \Lambda)=
\text{inf}_h\left( \frac{1}{2}\left[-2 h \sigma(h) + q \sigma(h)^2\right]
+ \frac{1}{2} V^{-1} \left[ h-V(\sqrt{-\tilde q_0}\xi-\tilde m_0 h^\star_0) - m(F_1h^\star_0+F_2) \right]^2\right).
\end{equation}

While we derived \eqref{eq:free_entropy_for_computation} and \eqref{eqs:Moreau_envelope_for_derivative} with $F_1$ and $F_2$ specified by \eqref{eq:F_1_and_F_2}, these expressions hold for generic $F_1$ and $F_2$.

We write
\begin{equation}
    Y=
    m(F_1 h^\star_0+F_2)-V\tilde m_0 h^\star_0 + V \sqrt{-\tilde q_0}\xi
    =m\left(F_1 h^\star_0+F_2+h^\star_0\right)+\sqrt{q-mm^T}\xi.
\end{equation}
 We note that
\begin{equation}
    Y\sim \mathcal{N}\left(\underbrace{m\left( F_1 h^\star_0+F_2+h^\star_0\right)}_{=\mu_Y},\underbrace{q-mm^\top}_{=\Sigma_Y}\right).
\end{equation}
Thus, we can write the Moreau envelope as
\begin{equation}
    \mathcal{M}_\ell = \inf_h\left(
    -h\sigma(h)+\frac{q}{2}\sigma(h)^2+\frac{1}{2} V^{-1}(h-Y)^2
    \right).
\end{equation}

To take the derivative, we use the envelope theorem, which states that for a generic function $f$
\begin{equation}
     \mathcal{M}(\theta)=\rm{inf}_x f(x, \theta) \Rightarrow \left. \nabla_\theta \mathcal{M}(\theta)=\nabla_\theta f(x,\theta)\right|_{x=\rm{prox}_x}
\end{equation}
where $\prox_x$ minimizes $f$.

The gradient of the energetic term with respect to $m$ is
\begin{equation}
\hat m = \alpha \partial_m \Phi_{\text{output}}
=-\alpha\mathbb{E}_{\lambda, \xi, h^\star_0}\left[\partial_m \mathcal{M}_\ell\right].
\end{equation}
Recalling that the loss term in the transformed variables does not depend on $m$, we obtain
\begin{equation}
    \hat m=\alpha V^{-1}\mathbb{E}_{\Lambda}\mathbb{E}_{h^\star_0}
    \left[ \mathbb{E}_\xi[\text{prox}_\ell-Y]\left(
            F_1 h^\star_0+F_2+h^\star_0
        \right)^\top
    -\mathbb{E}_\xi[(\text{prox}_\ell-Y)\xi](q-mm^\top)^{-1/2}m
    \right].
\end{equation}
The integral over $\xi$ can be solved explicitly for a few terms:
\begin{equation}\label{eq:mhat}
    \hat m = \alpha V^{-1}\mathbb{E}_{\Lambda, h^\star_0}\left[ \left(\mathbb{E}_\xi[\text{prox}_\ell]-\mu_Y\right)\left(F_1 h^\star_0 + F_2+h^\star_0\right)^\top
    -
    \mathbb{E}_{\xi}[\text{prox}_\ell \xi]\Sigma_Y^{-1/2}m\right]+\alpha V^{-1}m,
\end{equation}
where we recall that $\mu_Y$ and $\Sigma_Y$ are the mean and variance of $Y$.

For $\hat q$, we have
\begin{equation}
    \hat q = 2 \alpha \partial_V \Phi_{\text{output}} = - 2 \alpha \mathbb{E}_{\Lambda, h^\star_0, \xi}[\partial_V \mathcal{M}_\ell].
\end{equation}
$Y$ does not depend on $V$, and neither does the loss term. Thus, we simply have
\begin{equation}
    \partial_V \mathcal{M}_\ell=-\frac{1}{2}V^{-2} (\text{prox}_\ell-Y)^2,
\end{equation}
so that
\begin{equation}\label{eq:qhat_with_Y}
    \hat q = \alpha \mathbb{E}_{\Lambda, h^\star_0, \xi}
    \left[V^{-2} (\text{prox}_\ell-Y) ^2\right].
\end{equation}
Performing the integrals over $\xi$ that we can solve explicitly, we obtain
\begin{equation}
    \hat q = \alpha V^{-2} \mathbb{E}_{\Lambda, h^\star_0}
    \left[
        \mathbb{E}_\xi[(\text{prox}_\ell-\mu_Y)^2]-2\mathbb{E}_\xi [\text{prox}_\ell \xi] \sqrt{\Sigma_Y}+\Sigma_Y
    \right].
\end{equation}

For $\hat V$, we have
\begin{equation}
    \hat V = -2 \alpha \partial_q \Phi_{\text{output}}
    = 2 \alpha \mathbb{E}_{\Lambda, h^\star_0, \xi}[\partial_q \mathcal{M}_\ell].
\end{equation}
$q$ appears only in the loss and covariance $\Sigma_Y$. Thus,
\begin{multline}
    \hat V = 2\alpha \mathbb{E}_{\Lambda, h^\star_0}
    \mathbb{E}_\xi [\frac{1}{2}\sigma(\text{prox}_\ell)^2-\frac{1}{2}V^{-1}(\text{prox}_\ell-Y)\xi\Sigma_Y^{-1/2}]
    \\
    =
     \alpha\mathbb{E}_{\Lambda, h^\star_0, \xi}
    \left[\sigma(\text{prox}_\ell)^2\right]
    -\alpha V^{-1}\mathbb{E}_{\Lambda, h^\star_0}
    \left(\mathbb{E}_\xi [\text{prox}_\ell\xi^\top]\Sigma_Y^{-1/2}\right) + \alpha V^{-1} 
\end{multline}
where in the last step we solved the Gaussian integral over $\xi$ for the $-Y\xi^\top$ term.

 Our alternative derivation does not change the prior or trace term, so that the expressions for $m, q, V$ remain the same. Thus, the fixed-point equations that we solve numerically read
\begin{equation}
\label{eq:SE_numerical}
\left\{
\begin{aligned}
m &= \frac{\hat{m}}{\hat{V} + \tilde{\lambda}}, \\[6pt]
q &= \frac{\hat{q}+\hat{m}\hat{m}^\top}{(\hat{V} + \tilde{\lambda})^2}, \\[6pt]
V &= \frac{1}{\hat{V} + \tilde{\lambda}}, \\[10pt]
\hat m
&= \alpha V^{-1}\mathbb{E}_{\lambda, h^\star_0}\Big[
(\mathbb{E}_\xi[\text{prox}_\ell]-\mu_Y)
(F_1 h^\star_0 + F_2 + h^\star_0)^\top
-
\mathbb{E}_{\xi}[\text{prox}_\ell \xi]\Sigma_Y^{-1/2} m
\Big]
+ \alpha V^{-1} m,
\\[6pt]
\hat q
&=
\alpha V^{-2}
\mathbb{E}_{\lambda, h^\star_0}
\Big[
\mathbb{E}_\xi\big[(\text{prox}_\ell-\mu_Y)^2\big]
-2\mathbb{E}_\xi[\text{prox}_\ell \xi^\top]\sqrt{\Sigma_Y}
+\Sigma_Y
\Big]
,
\\[6pt]
\hat V
&=
\alpha \mathbb{E}_{\lambda, h^\star_0, \xi}
\Big[
\sigma(\text{prox}_\ell)^2
\Big]
-
\alpha V^{-1}\mathbb{E}_{\lambda, h^\star_0}
\mathbb{E}_\xi[\text{prox}_\ell \xi]\Sigma_Y^{-1/2}
+ \alpha V^{-1}.
\end{aligned}
\right.
\end{equation}
The values of $F_1$ and $F_2$ are given by \cref{eq:F_1_and_F_2}. The proximal term is given by
\begin{equation}\label{eq:proximal_final}
    \text{prox}_\ell(h^\star_0, \xi, q, m, V, \lambda, \nu) = \arginf_h\left(
    \frac{1}{2}\left[-2h \sigma(h)+\sigma(h) q \sigma(h) \right]+\frac{1}{2}(h-Y)^2 V^{-1}
    \right).
\end{equation}
We recall that $\xi\sim\mathcal{N}(0,1), h_0^\star\sim\mathcal{N}(0, \id)$ and $\Lambda=(\lambda, \nu)$, with $\lambda\sim \mathcal{N}(0,1)$ for the choice $P_{\text{latents}}$ of Figures \ref{fig:cosine}, \ref{fig:train-test-down}. Additionally, $\nu$ is deterministically obtained from $\lambda$ for this distribution, and thus does not appear in the expectation. Finally, we recall that
 \begin{equation}
    Y=
    m\left(F_1 h^\star_0+F_2+h^\star_0\right)+\sqrt{q-mm^T}\xi
\end{equation}
where
\begin{equation}
    \mu_Y=m\left(F_1 h^\star_0+F_2+h^\star_0\right),
\end{equation}
\begin{equation}
\Sigma_Y=q-mm^\top.
\end{equation}

\paragraph{Fixed-point iteration.} We solve the system of equations \cref{eq:SE_numerical} numerically via fixed-point iteration. We initialize the order parameters $m, q$, and $V$ prior to the iteration. Specifically, $m$ is initialized to the small non-zero vector $(10^{-3}, 10^{-3})$ to represent an uninformative initialization. The initial values of $q$ and $V$ are chosen to ensure stability and depend on the specific activation function used. The exact initialization parameters for each simulation can be found in the Supplementary Material.

At each step, we first estimate the conjugate parameters $\hat{m}, \hat{q}, \hat{V}$ using the last three equations of \cref{eq:SE_numerical}. These expectations are approximated via Monte Carlo integration with $N_{\text{samples}}=10^6$. We draw $N_{\text{samples}}$ independent tuples of the random variables $(\lambda, h^\star_0, \xi$) from their respective distributions, evaluate the integrands, and compute the empirical mean. Using these estimates, we compute the candidate updates $\theta_{\text{new}}=(m_{\text{new}}, q_{\text{new}}, V_{\text{new}})$ via the first three equations of \cref{eq:SE_numerical}.

To ensure convergence, we apply damping to the updates. Let $\theta^{(t)} = (m, q, V)^{(t)}$ denote the state at iteration $t$. The update rule is given by:
\begin{equation}
    \theta^{(t+1)} = \epsilon \theta^{(t)} + (1-\epsilon) \theta_{\text{new}}.
\end{equation}
We use a damping factor of $\epsilon=0.95$ and perform $5000$ iterations. To mitigate finite-sample noise inherent to the Monte Carlo integration, the order parameters used to compute the losses are the average of the values over the final $200$ iterations. The solver is implemented in Python using NumPy \cite{harris2020array} and parallelized over multiple CPUs using \texttt{mpi4py} \cite{dalcin2011parallel}.

Crucially, evaluating the expectations requires computing the proximal operator $\text{prox}_\ell$. We detail the numerical handling of this step below.

\paragraph{Solving the proximal operator.}
In general, the minimization problem \cref{eq:proximal_final} does not admit a closed-form solution. However, for linear and ReLU activation functions, exact analytical solutions can be derived by finding the stationary points of the objective. For the linear case, the solution is given by:
\begin{equation}
    \text{prox}_\ell^{\text{linear}}(Y) = \frac{Y}{1+V(q-2)}.
\end{equation}
For the ReLU activation, the solution is piecewise:
\begin{equation}
    \text{prox}_\ell^{\text{ReLU}}(Y) =
    \begin{cases}
        \frac{Y}{1+V(q-2)} & \text{if } \frac{Y}{1+V(q-2)} > 0, \\
        Y & \text{otherwise}.
    \end{cases}
\end{equation}
For generic activation functions, we compute the proximal operator numerically in two steps. First, we perform a grid search over an interval $h \in [-c, c]$ using $N_{\text{grid}}=1000$ points, where the bound $c$ is selected heuristically based on the activation function. Second, we refine the estimate using golden-section search within a window $[h^\star_0-\Delta, h^\star_0+\Delta]$ around the optimal grid point $h^\star_0$ for $25$ iterations, where $\Delta=2c/(N_{\text{grid}}-1)$. The solver is implemented in JAX \cite{jax2018github} to take advantage of parallelization on an NVIDIA RTX A6000.

\paragraph{Train loss.}
The training loss is given by minus the free entropy (see \cref{eq:train_loss_generic}). Thus, we simply plug-in the order parameters obtained from the fixed-point iteration into the argument of \cref{eq:free_entropy_for_computation}:
\begin{equation}
    -\epsilon_t=- m \hat m^\top -\frac{1}{2} V\hat q ^\top +\frac{1}{2} q\hat V^\top
   +\frac{1}{2}\frac{\hat q+ \hat m \hat m^\top}{\tilde\lambda +\hat V}
   - \alpha \mathbb{E}_{\Lambda, h^\star_0, \xi} \mathcal{M}_\ell
\end{equation}
with 
\begin{equation}
    \mathcal{M}_\ell = \inf_h\left(
    \frac{1}{2}\left[ -2h \sigma(h)+\sigma(h) q \sigma(h) \right]+\frac{1}{2}(h-Y)^\top V^{-1}(h-Y)
    \right).
\end{equation}
$Y$ is defined above. The expectation is again estimated using Monte-Carlo integration, with $N_\text{samples}=10^7$ samples, and the minimization problem is solved in the same manner as the proximal operator. Recall that we ignored the $\|\mathbf{x}\|^2=d+\mathbb{E}[\lambda^2]$ term that dominates the error. This is the reason why we compare differences of losses in \cref{fig:train-test-down}. Additionally, to match the empirical loss \cref{eq:ermreplicaloss}, one additionally divides $\epsilon_t$ by $\alpha$, as the empirical loss is divided by $n$ and not $d$.

\paragraph{Test loss.}
The test loss (or generalization error) is defined in \cref{eq:testlossreplica}. We recall that the pre-activation is written $h=\frac{\mathbf{w}^\top \mathbf{x}}{\sqrt{d}}$. The specific loss that we consider is given by \cref{eq:populationrisk}. Expanding this loss and considering the $d\rightarrow \infty$ limit, we have
\begin{equation}
    \epsilon_g=\mathbb{E}_{\lambda, h_0, h^\star_0}\left[-2h\sigma(h)+q \sigma(h)^2 \right]
\end{equation}
Again, we omit the $\|\mathbf{x}\|^2$ term. From \cref{eq:concentration}, we have that $h=h_0+m(F_1 h^\star_0 + F_2)$ where 
\begin{equation}
    \begin{pmatrix}
    h^\star_0 \\ h_0
\end{pmatrix} \sim \mathcal{N}\left(0, \begin{pmatrix}
    \id & m^\top \\
    m & q
\end{pmatrix}\right).
\end{equation}
The expectation is then estimated numerically using Monte-Carlo integration with $N_\text{samples}=10^7$ samples.

\paragraph{Downstream Task.}
\label{App:Downstream_task}
The classification error of the downstream task described in \cref{sec:validation} can be expressed as a function of the order parameters. This leads to the theoretical curve of the rightmost panel in Fig.~\ref{fig:train-test-down}. For simplicity, we again consider the case of one hidden unit ($p=1$). The loss is defined as
\begin{equation}
    \mathcal{L}_{DS}=\frac{1}{4}\mathbb{E}_{\mathbf{x}}\|\text{sign}(\mathbf{x}^\top \mathbf{\hat w})-\text{sign}(\mathbf{x}^\top \mathbf{v^\star})\|^2.
\end{equation}
We consider the $d\rightarrow \infty$ limit.
Using the definitions of the data model \cref{eq:datamodel} and of the order parameters \cref{eq:ERM_order_parameters}, we have
\begin{equation}
    \frac{\mathbf{x}^\top \mathbf{\hat w}}{\sqrt{d}}=\lambda m_u+\frac{\nu}{\sqrt{1+\beta_\nu}}m_v+\frac{\mathbf{z}^\top S \mathbf{\hat w}}{\sqrt{d}}.
\end{equation}
We removed the ERM superscript for readability, write $\beta_\nu=\mathbb{E}[\nu^2]$ and rescale by $d^{-1/2}$ to have terms of order $1$. The rescaling does not change the sign. The term $\xi_1=\frac{\mathbf{z}^\top S \mathbf{\hat w}}{\sqrt{d}}$ is a gaussian with mean $0$ and variance 
\begin{equation}
    \text{Var}\left[\xi_1 \right]=q-\frac{\beta_\nu}{1+\beta_\nu}m_v^2.
\end{equation}
Similarly, we find
\begin{equation}
\frac{\mathbf{x}^\top\mathbf{v^\star}}{\sqrt{d}}=\frac{1}{\sqrt{1+\beta_\nu}}(\nu+\frac{\mathbf{z}^\top \mathbf{v^\star}}{\sqrt{d}}).
\end{equation}
In the $d\rightarrow \infty$ limit, $\frac{\mathbf{z}^\top \mathbf{v^\star}}{\sqrt{d}}=\xi_2$ with $\xi_2$ a standard Gaussian. $\xi_1$ and $\xi_2$ are correlated with covariance
\begin{equation}
    \mathbb{E}\left[\xi_1 \xi_2\right]=\frac{m_v}{\sqrt{1+\beta_\nu}}.
\end{equation}
Thus,
\begin{equation}
    \mathcal{L}_{DS}=\frac{1}{4}\mathbb{E}_{\lambda, \nu, \xi_1, \xi_2} \left[ \left\|\text{sign}\left(\lambda m_u +\frac{\nu}{\sqrt{1+\beta_\nu}}+\xi_1 \right)-\text{sign}(\nu+\xi_2) \right\|^2\right],
\end{equation}
where
\begin{equation}
    (\xi_1, \xi_2)\sim \mathcal{N} \left(0, \begin{pmatrix} q-m_v^2\frac{\beta_\nu}{1+\beta_\nu} & \frac{m_v}{\sqrt{1+\beta_\nu}} \\
    \frac{m_v}{\sqrt{1+\beta_\nu}} & 1\end{pmatrix} \right)
\end{equation}
These low-dimensional expectations are again estimated using Monte-Carlo integration. For the results shown in the right panel of Fig.~\ref{fig:train-test-down}, we employ an aggregation procedure as described in \cref{app:numerical_details}. In each trial, we draw a batch of $M=100$ samples of the latent variables and noise $(\lambda, \nu, \xi_1, \xi_2)$. These samples are partitioned into two subsets according to the sign of the teacher term, $\text{sign}(\nu+\xi_2)$. Within each subset, we average the student's pre-activation fields (the argument of the first sign function) to produce a single aggregated field. This yields two aggregated data points per batch, one for the positive class and one for the negative class. The final downstream loss is estimated by averaging over $N=10^7$ such independent trials.
\section{Numerical details on the experiments}\label{app:numerical_details} 
\begin{figure*}[!th]
    \centering
    \includegraphics[width=\linewidth]{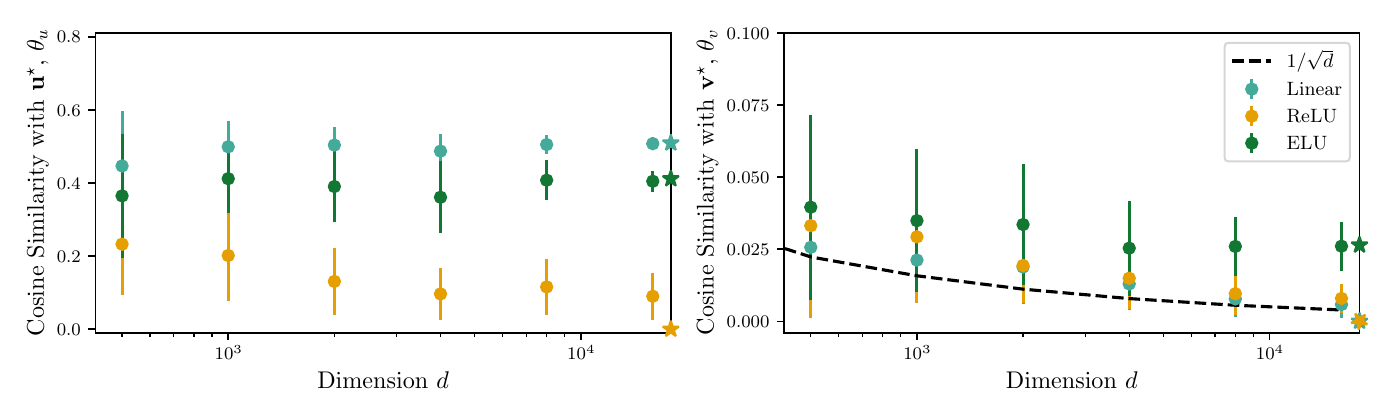}
    \caption{Cosine similarity of the ERM estimators with, respectively, $\mathbf{u^\star}$ and $\mathbf{v^\star}$, for different choices of the activation of the autoencoder and dimension $d$. Results are shown at fixed sample ratio $\alpha=n/d=1.7$ and the same setting as Fig.~\ref{fig:cosine}, showcasing the finite-size effects. The dashed line indicates the standard scaling of the cosine similarity with an independent random direction. Points are numerical simulations, and the stars are the predicted value (Section~\ref{sec:ERM}), which agree in the high-dimensional limit.}
    \label{Fig:Fixed_alpha_scalingd}
\end{figure*}
\begin{figure*}[!th]
    \centering
    \includegraphics[width=\linewidth]{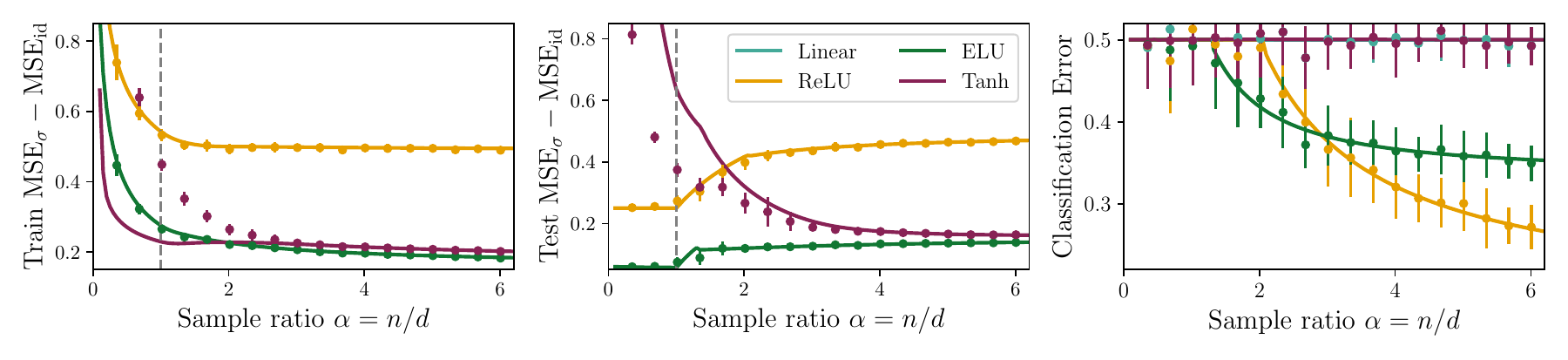}

    \caption{Replication of Fig.~\ref{fig:train-test-down} with the standard $\tanh$ activation. Train loss (left panel) and test loss (center panel) of the non-linear autoencoders minus the corresponding loss of the linear autoencoder, $\sigma=\mathrm{id}$, as a function of the sample complexity. The vertical dashed line is $\alpha^{\rm lin}_{u}$. The parameters and $P_{\rm latents}$ are the same as in Fig.~\ref{fig:cosine}. Right panel: Classification error on the downstream task.}
    \label{fig:train-test-down_appendix}
\end{figure*}

We use the spiked cumulant model, \cref{eq:datamodel,eq:whitening}, with a latent distribution that has correlation exponent $k^\star=2$.  Concretely, $\lambda \sim \mathcal{N}(0, 1)$ and $\nu=-\sqrt{2}$ if $|\lambda| < \Phi^{-1}(0.75)$ and $\nu=+\sqrt{2}$ otherwise, where $\Phi$ is the standard Gaussian cdf. For each sample ratio $\alpha = n/d$, we generate $n=\left\lfloor \alpha d  \right  \rfloor$ i.i.d. training samples and evaluate the test reconstruction MSE on $n_{\rm{test}}=2\times 10^6$ i.i.d. samples drawn from the same data model.

Section~\ref{app:k3_simulations} considers a different latent distribution with $k^\star=3$.

\textbf{Optimization details.} We optimize \cref{eq:empiricalrisk} using full-batch Adam \cite{DBLP:journals/corr/KingmaB14}. We use $\eta=0.1$, $800$ epochs and no weight decay. We report performance metrics computed from the solution returned by the optimizer at the end of training (final epoch), averaged across instances. An instance corresponds to (i) a fresh random initialization \(\mathbf{w}\sim\mathcal{N}(0,\id_d)\) and (ii) a freshly generated training set, while keeping \(\mathbf{u}^\star,\mathbf{v}^\star\) and the latent distribution fixed. For each dimension value \(d\), we sample \(\mathbf{u}^\star, \mathbf{v}^\star \sim \mathcal{N}(0,\id_d)\) once and then normalize so that \(\|\mathbf{u}^\star\|=\|\mathbf{v}^\star\|=\sqrt{d}\). We keep these teacher spikes fixed across all instances at that \(d\).

\textbf{Downstream task.} To construct the downstream task dataset, we draw batches of $100$ i.i.d. samples $\{x_\mu \}_{\mu=1}^{100}$ from the data model (\cref{eq:datamodel,eq:whitening}). We split indices into $\mathcal{I}_+=\{\mu: \mathbf{x}_\mu^\top \mathbf{v}^\star \geq 0\}$ and $\mathcal{I}_-=\{\mu: \mathbf{x}_\mu^\top \mathbf{v}^\star < 0\}$. We form two averaged inputs $x_+=\frac{1}{|\mathcal{I}_+|}\sum_{\mu \in \mathcal{I}_+}x_\mu$ and $x_-=\frac{1}{|\mathcal{I}_-|}\sum_{\mu \in \mathcal{I}_-}x_\mu$, yielding two labeled pairs $(x_+,+1),(x_-,-1)$. We resample the batch until collecting a total of $10^4$ independent labeled pairs. For the simulations of Fig.~\ref{fig:train-test-down}, we compute the averaged classification error for each trained autoencoder ($\mathbf{w}$) on the collected labeled samples. The error bars correspond to one standard deviation.

All experiments use the same data-generation and optimization described above; deviations are specified in the corresponding figure captions.

\subsection{Finite-size effects at fixed sample ratio}
\label{app:finite_size_alpha17}
In Fig.~\ref{fig:cosine} of the main text, we observe the presence of finite-size effects for $\alpha \lesssim  2$. \cref{Fig:Fixed_alpha_scalingd} assesses the effect of the dimension $d$ value by probing the cosine similarities, $\theta_u$ and $\theta_v$, at fixed sample ratio $\alpha=1.7$. The overall pattern shows that the error bars, one standard deviation across $30$ instances, reduce as we increase the $d$ value, while the points reach the predicted high-dimensional limit (stars) at moderate $d$ values. The overlap with the PCA-invisible spike $\mathbf{v}^\star$ (right) exhibits the expected $d^{-1/2}$ scaling (cosine similarity with an independent random direction) below the weak-recovery threshold $\alpha^{\sigma}_{\mathrm{weak}}$ for Linear and ReLU activations.
\label{app:fixed_alpha_simulations}

\subsection{$\tanh$ scaling with the dimension size $d$}

\label{app:tanh_simulations}
\begin{figure*}[!th]
    \centering
    \includegraphics[width=\linewidth]{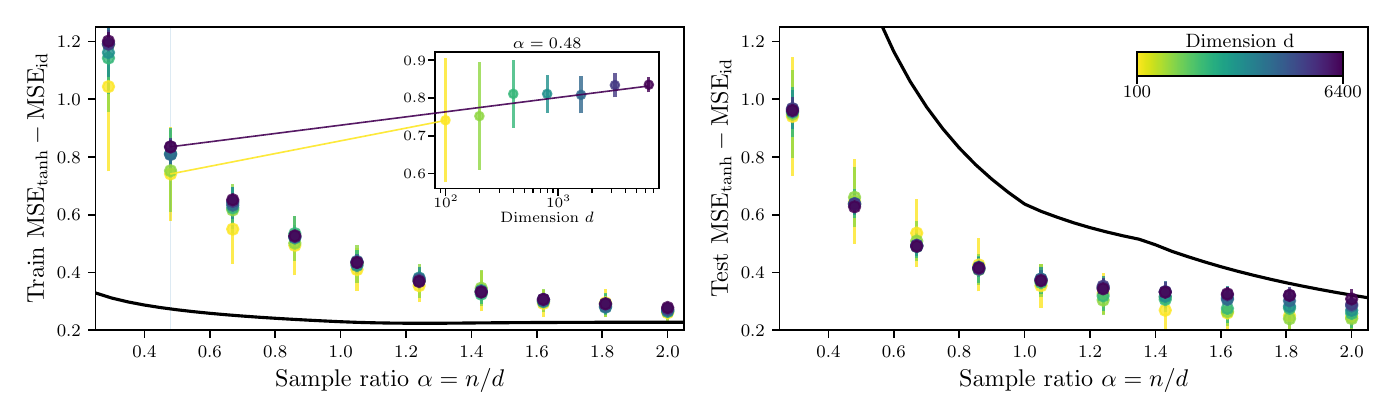}
    \caption{Train loss (left panel) and test loss (right panel) of the non-linear autoencoder with $\sigma=\tanh$ activation minus the corresponding loss of the linear autoencoder, $\sigma=\mathrm{id}$, as a function of the sample ratio $\alpha$. Colored dots depend on the dimension $d$. Inset: scaling of the training loss at fixed $\alpha=0.48$ with respect to $d$. Solid line corresponds to the theoretical prediction of ERM (Section~\ref{sec:ERM}).}
    \label{Fig:Scaling_tanh}
\end{figure*}
If instead of plotting $\tanh(\cdot/4)$ (Fig.~\ref{fig:train-test-down} of the main text), we plot $\tanh$ (Fig. ~\ref{fig:train-test-down_appendix}), we see a discrepancy between simulations and the asymptotic ERM predictions at low-$\alpha$. Figure~\ref{Fig:Scaling_tanh} tests whether the low sample ratio mismatch for tanh is a finite-size artifact by varying $d\in[100,6400]$. Both the train and test losses show only a small drift with $d$ and its error bars decrease at moderate dimensions, indicating that the mismatch at small $\alpha$ is unlikely to be explained solely by finite-size effects (for the tested experiments). 

Figure~\ref{Fig:Scaling_tanh_dynamics} complements the analysis of the $\tanh$ activation at low sample ratio ($\alpha=0.48$). Each curve corresponds to an independent run with a fresh random initialization and an independently generated training dataset. We see that the training trajectories are qualitatively consistent across runs, showing no evidence of training instabilities in this regime for the chosen optimizer.

Overall, the empirical objective is highly nonconvex and exhibits multiple competing minima; correspondingly, the replica-symmetric ERM prediction can deviate from what stochastic gradient based optimization finds. A refined analysis, including replica-symmetry breaking, is left as future work.

\begin{figure*}[!th]
    \centering
    \includegraphics[width=\linewidth]{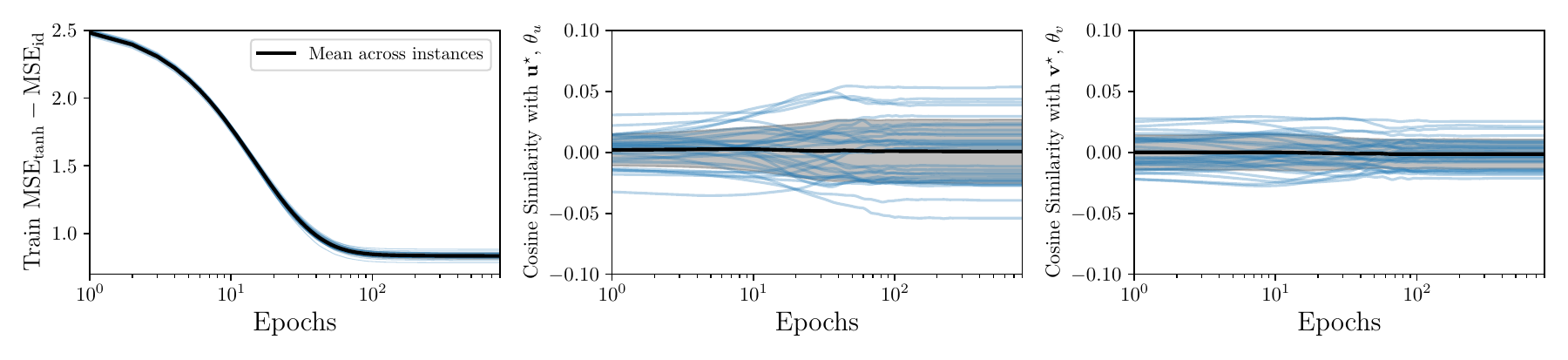}
    \caption{Left: Train loss of the non-linear autoencoder with $\sigma=\tanh$ activation minus the corresponding loss of the linear autoencoder, $\sigma=\mathrm{id}$, during training. Center and Right: signed cosine similarity of the autoencoders weights, without the absolute value of the definition, with the $\mathbf{u}^\star$ and $\mathbf{v}^\star$ spikes ($\theta_u$, $\theta_v$). Simulations at fixed sample rate $\alpha =0.48$, $d=6400$ and $30$ instances. Each blue line corresponds to a single instance, different initialization and training dataset, and the black line is the mean of them. Gray fill represents one standard deviation at each epoch.}
    \label{Fig:Scaling_tanh_dynamics}
\end{figure*}

\subsection{Additional activations for correlation exponent $k^\star =2$}
Figure~\ref{Fig:k2_activations_GD} extends Fig.~\ref{fig:cosine} to additional nonlinearities. In particular, we notice that GELU, Swish and Sigmoid weakly recover the $\mathbf{v}^\star$ spike as predicted from the theory in the main text, Table~\ref{tbl:classification_activation_function}, when the correlation exponent is $k^\star=2$. See also \cref{app:hermite_th} for the $He_2+He_1$ activation.
\label{app:k2_activations_GD}
\begin{figure*}[!th]
    \centering
    \includegraphics[width=\linewidth]{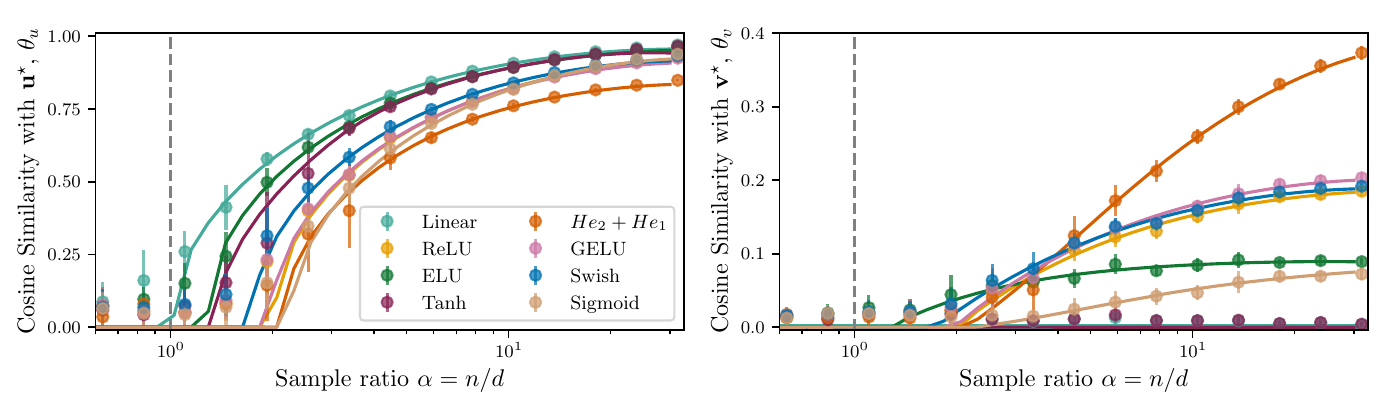}
    \caption{Cosine similarity of the ERM estimators with, respectively, $\mathbf{u^\star}$ and $\mathbf{v^\star}$, for different choices of the activation of the autoencoder. We use the same $P_{\rm latents}$ as in Fig.~\ref{fig:cosine}. Colored dots are numerical simulations of minimization with full-batch Adam, with learning rate $\eta=0.1$ over $800$ epochs and no weight decay ($d = 1000 $, averaged over $10$ instances, error bars represent one standard deviation). Solid lines are theoretical predictions obtained following the procedure described in \cref{app:replicaspecific}. Vertical dashed line is $\alpha^{\rm lin}_{u}$. We see additional activations that weakly recover the $\mathbf{v^\star}$ spike, with $He_2+He_1$ being the one with the largest overlap. We use standard definitions: $\mathrm{Swish}(x)=x\,\mathrm{sigmoid}(x)$, and $\mathrm{GELU}(x)=x\,\Phi(x)$ ($\Phi$ is the Gaussian cdf). For $\mathrm{He}_2+\mathrm{He}_1$, we use probabilists’ Hermite polynomials.}
    \label{Fig:k2_activations_GD}
\end{figure*}

\subsection{Experiments for correlation exponent $k^\star=3$}
\label{app:k3_simulations}
In Figure~\ref{Fig:k3_activations_GD} we consider a different latent distribution with correlation exponent $k^\star=3$, for which: $\lambda \sim \mathcal{N}(0, 1)$ and $\nu=\rm{sign}(\lambda)\,\rm{sign}(|\lambda|-\sqrt{2\ln2})$. This choice ensures $\mathbb{E}[\lambda\nu]=\mathbb{E}[\lambda^2\nu]=0$ while $\mathbb{E}[\lambda^3\nu]\neq 0$. The red dashed line of the right panel is the cosine similarity value with an independent random direction from which, if $\theta_v$ is below it, we cannot empirically assess from the simulations whether we have weak recovery of $\mathbf{v}^\star$ or not. As predicted from the theory in the main text (Table~\ref{tbl:classification_activation_function}), at large enough sample ratio $\alpha$ the autoencoder weights with tanh and sigmoid activations start to weakly recover the $\mathbf{v}^\star$ spike, whereas Linear and ReLU activations do not.
\begin{figure*}[!th]
    \centering
    \includegraphics[width=\linewidth]{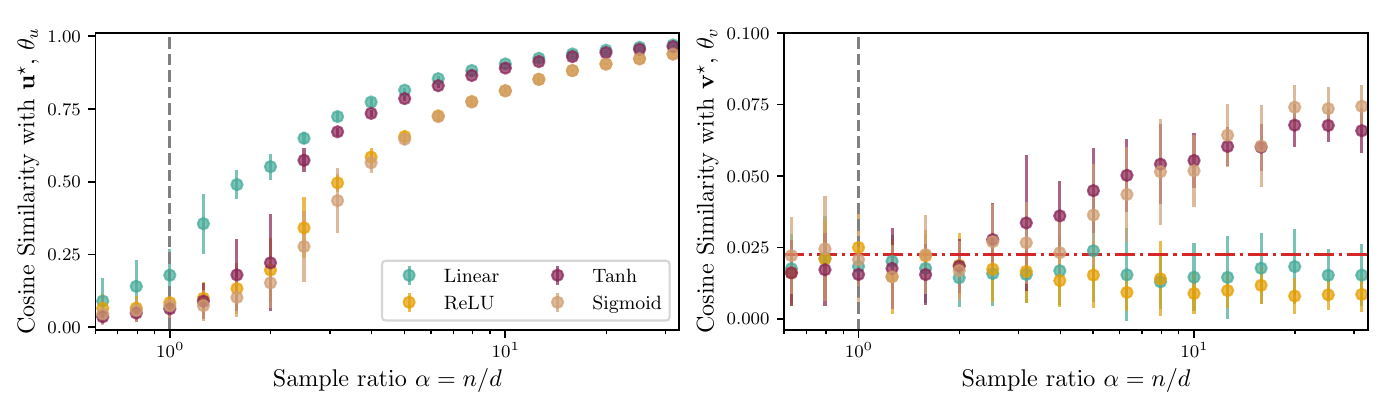}
    \caption{Cosine similarity of the ERM estimators with, respectively, $\mathbf{u^\star}$ and $\mathbf{v^\star}$, for different choices of the activation of the autoencoder. We use the same $P_{\rm latents}$ as in Section~\ref{app:k3_simulations}. Colored dots are numerical simulations of minimization with full-batch Adam, with learning rate $\eta=0.1$ over $1200$ epochs and no weight decay ($d = 2000 $, averaged over $20$ instances, error bars represent one standard deviation). Vertical dashed line is $\alpha^{\rm lin}_{u}$. Horizontal dashed line at $d^{-1/2}$ shows the scale of the cosine similarity with an independent random direction, from which we can empirically evidence that autoencoder weights with ReLU and Linear activations cannot weakly recover the $\mathbf{v}^\star$ spike, whereas tanh and sigmoid do. This behavior is predicted from Table~\ref{tbl:classification_activation_function} in our specific latent distribution setting (correlation exponent $k^\star=3$).}
    \label{Fig:k3_activations_GD}
\end{figure*}

\subsection{Experiments for multiple hidden neurons}
\label{app:simulations_two_neurons}

In this section, we provide additional experiments in the spike cumulant model with correlation exponent $k^\star=2$ but adding a second neuron in the bottleneck. Extending the architecture of the main text (Eq.~\ref{eq:minimalAE}):
\begin{equation}
    \hat{\mathbf{x}}_\mu = \frac{\mathbf{w}_{1}}{\sqrt{d}} \sigma\left( \frac{\mathbf{w}_{1}^\top \mathbf{x}_\mu}{\sqrt{d}}\right) + \frac{\mathbf{w}_{2}}{\sqrt{d}} \sigma\left( \frac{\mathbf{w}_{2}^\top \mathbf{x}_\mu}{\sqrt{d}}\right)
\end{equation}
with $\mathbf{w}_1$ and $\mathbf{w}_2$ sampled i.i.d. from $\mathcal{N}(0,\id_d)$ at initialization. Different instances could in principle align to different spikes, hence we will keep track of the maximum cosine similarity: $\theta_u = \max( \theta_u (\mathbf{w}_1), \theta_u(\mathbf{w}_2))$. Another metric we keep track is the cosine similarity between both weights defined as:
\begin{equation}
    \tilde{\theta}=\frac{|\mathbf{w}_{1}^\top \mathbf{w}_{2}|}{||\mathbf{w}_{1}||\,||\mathbf{w}_{2}||}
\end{equation}

Figures~\ref{fig:AE_mse_2neurons_total}-\ref{fig:AE_mse_2neurons_relative} display the total and relative (with respect to the linear activation) train and test MSE for single and double hidden neurons architectures. Overall, increasing the number of neurons decreases the error for all tested activations while reducing the generalization (lower cosine similarities in Fig.~\ref{fig:AE_cosines_2neurons}).
\begin{figure}[!th]
    \centering
    \includegraphics[width=1.0\linewidth]{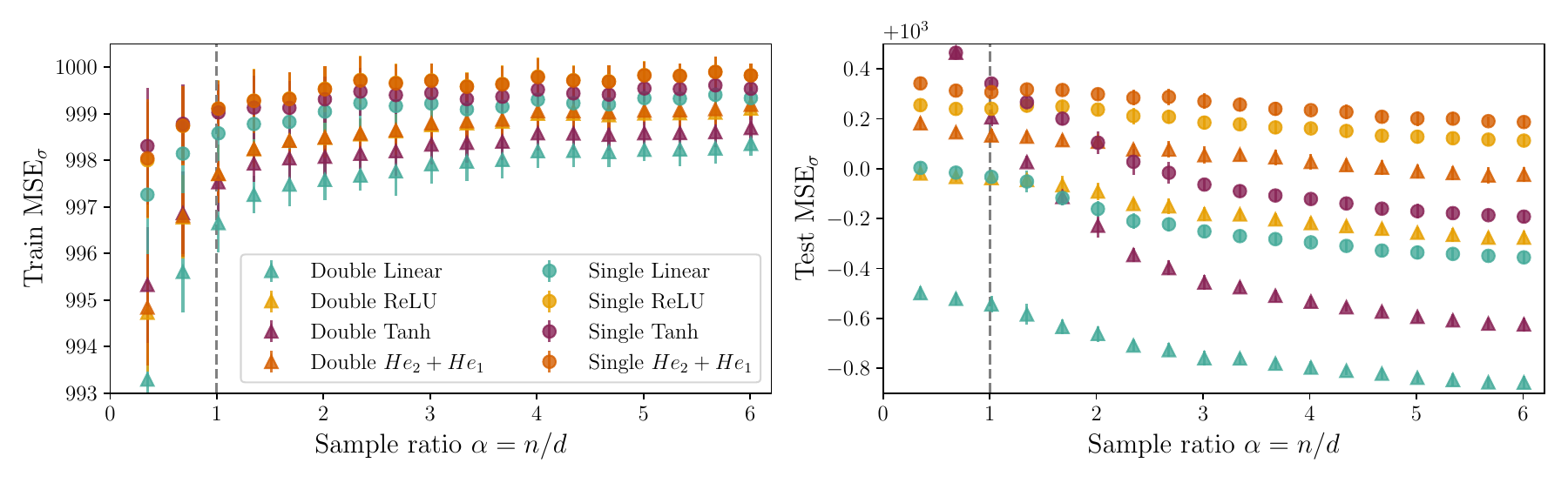}
    \caption{Train loss (left panel) and test loss (right panel) of the non-linear tied autoencoders as function of the sample complexity with single (circles) or double (triangles) hidden units. Simulations run at $d=2000$ and averaged with $30$ instances. We see that increasing the number of units reduces the train and test error for all the tested activations.}
    \label{fig:AE_mse_2neurons_total}
\end{figure}

\begin{figure}[!th]
    \centering
    \includegraphics[width=1.0\linewidth]{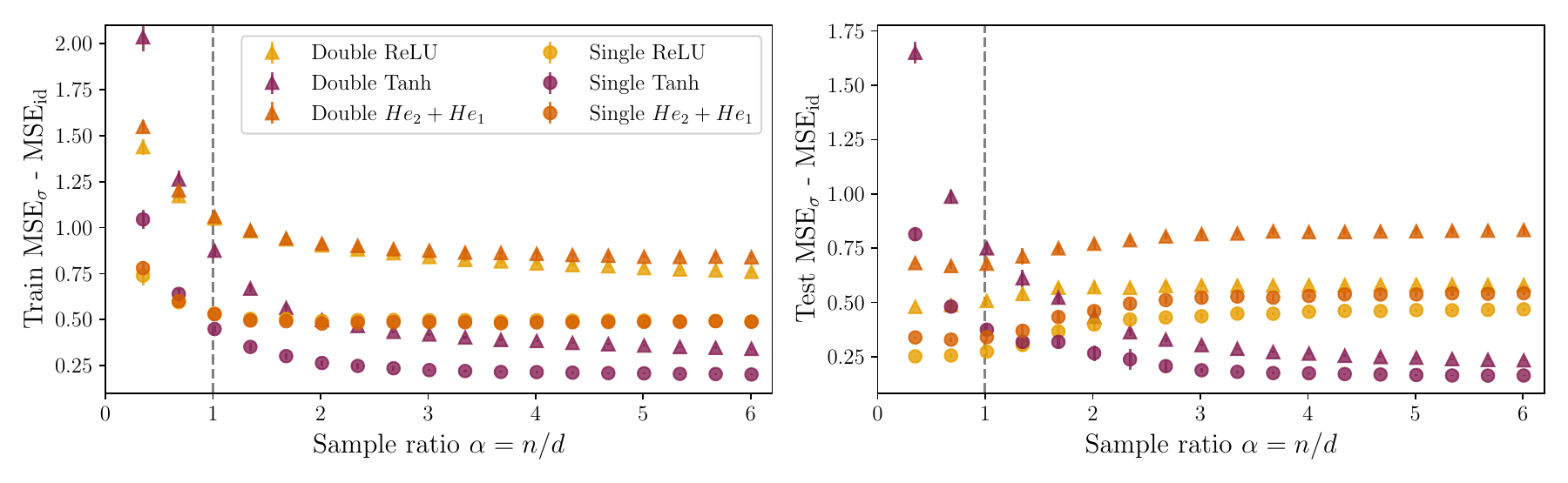}
    \caption{Train loss (left panel) and test loss (right panel) of the non-linear tied autoencoders minus the corresponding loss of the linear autoencoder for single (circles) or double (triangles) hidden units. Simulations run at $d=2000$ and averaged with $30$ instances. For the tested activations, increasing the number of neurons increases the gap between the linear and non-linear activation even when the total loss is lower.}
    \label{fig:AE_mse_2neurons_relative}

    \includegraphics[width=1.0\linewidth]{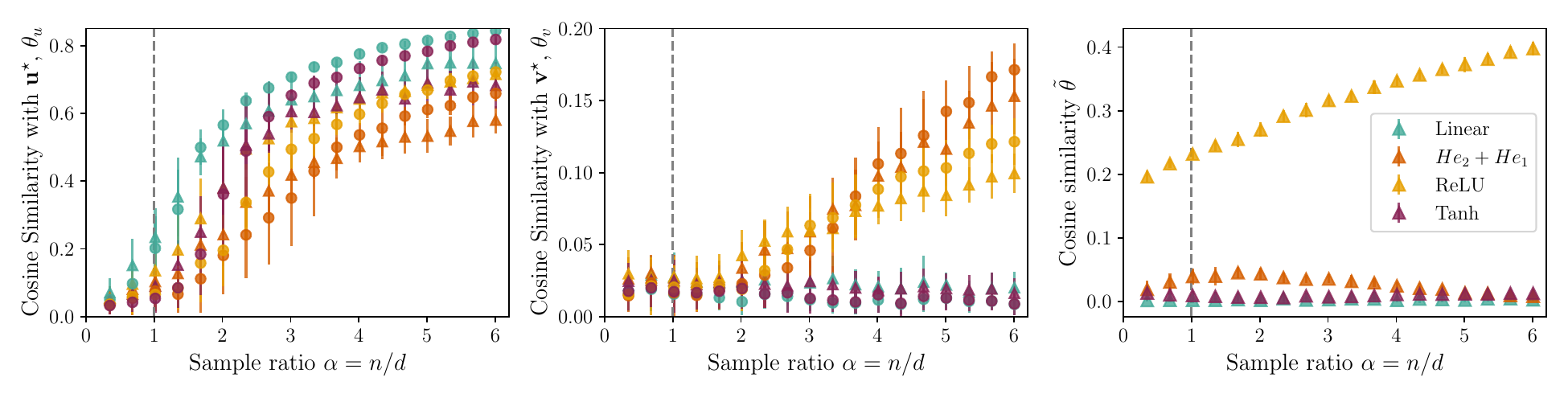}
    \caption{Cosine similarity with respect to the teacher spikes for single (circles) and double (triangles) hidden units architectures. Notice that the cosine similarity marginally decreases by increasing the number of neurons. Right panel shows the alignment between the two autoencoder weights with different activations.}
    \label{fig:AE_cosines_2neurons}
\end{figure}

\end{document}